\newcommand{\vertiii}[1]{{\left\vert\kern-0.25ex\left\vert\kern-0.25ex\left\vert #1
		\right\vert\kern-0.25ex\right\vert\kern-0.25ex\right\vert}}
\newcommand{\sign}{\mathrm{sgn}}
\DeclareMathOperator{\poly}{\textsf{poly}}
\newcommand{\err}{\epsilon}
\newcommand{\rank}{\mathrm{rank}}
\newcommand{\inner}[2]{\left\langle #1, #2 \right\rangle}
\newcommand{\norm}[1]{\left\lVert#1\right\rVert}
\par\vspace{4mm}}
\DeclareMathOperator*{\argmin}{arg\,min}
\newcommand{\cA}{\mathcal{A}}
\newcommand{\cB}{\mathcal{B}}
\newcommand{\cC}{\mathcal{C}}
\newcommand{\cG}{\mathcal{G}}
\newcommand{\cI}{\mathcal{I}}
\newcommand{\cJ}{\mathcal{J}}
\newcommand{\cM}{\mathcal{M}}
\newcommand{\cN}{\mathcal{N}}
\newcommand{\cO}{\mathcal{O}}
\newcommand{\cS}{\mathcal{S}}
\newcommand{\cT}{\mathcal{T}}
\newcommand{\cU}{\mathcal{U}}
\newcommand{\cV}{\mathcal{V}}
\newcommand{\cW}{\mathcal{W}}
\newcommand{\cZ}{\mathcal{Z}}
\newcommand{\bE}{\mathbb{E}}
\newcommand{\bP}{\mathbb{P}}
\newcommand{\bR}{\mathbb{R}}
\newcommand{\fW}{\mathbf{W}}
\newcommand{\trueX}{X^\star}
\newcommand{\trueW}{W^\star}
\newcommand{\fc}{f^{\mathrm{c}}}
\newcommand{\vecc}{\mathrm{vec}}
\newtheorem{theorem}{Theorem}
\newtheorem{definition}{Definition}
\newtheorem{lemma}{Lemma}
\newtheorem{remark}{Remark}
\newtheorem{corollary}{Corollary}
\newtheorem{proposition}{Proposition}
\newtheorem{assumption}{Assumption}
\begin{document}

	\title{Can Learning Be Explained By Local Optimality In Robust Low-rank Matrix Recovery?}
	\author{Jianhao Ma$^*$ and Salar Fattahi$^+$\vspace{2mm}\\
		Department of Industrial and Operations Engineering\\
		University of Michigan\\
		$^*$\href{mailto:jianhao@umich.edu}{jianhao@umich.edu}, $^+$\href{mailto:fattahi@umich.edu}{fattahi@umich.edu}
	}
	
	\maketitle
	\begin{abstract}
		 We explore the local landscape of low-rank matrix recovery, focusing on reconstructing a $d_1\times d_2$ matrix $X^\star$ with rank $r$ from $m$ linear measurements, some potentially noisy. When the noise is distributed according to an outlier model, minimizing a nonsmooth $\ell_1$-loss with a simple sub-gradient method can often perfectly recover the ground truth matrix $X^\star$. Given this, a natural question is what optimization property (if any) enables such learning behavior. The most plausible answer is that the ground truth $\trueX$ manifests as a \textit{local optimum} of the loss function. In this paper, we provide a strong negative answer to this question, showing that, under moderate assumptions, the true solutions corresponding to $X^\star$ {\it do not} emerge as local optima, but rather as strict saddle points---critical points with strictly negative curvature in at least one direction. Our findings challenge the conventional belief that all strict saddle points are undesirable and should be avoided. 
	\end{abstract}

	\section{Introduction}

In machine learning (ML), learning the true model—or one with strong generalization properties—often requires minimizing a nonconvex loss function, typically using first-order methods like the (sub-)gradient method or its stochastic and accelerated variants. These methods have demonstrated remarkable success in learning the true model across a range of modern ML tasks, from low-rank matrix recovery to training deep neural networks; see recent surveys~\cite{chen2018harnessing, chi2019nonconvex, zhang2020symmetry, sun2020optimization, sun2019optimization}. Given this success, a natural question thus arises: \textit{What enables these algorithms to effectively learn the true model}?

It is well known that, under reasonable assumptions, first-order algorithms are guaranteed to converge to approximate first-order stationary points. For general nonconvex functions, these points can be either local minima or (strict) saddle points.
Unlike local minima, saddle points are widely regarded as undesirable solutions that should be avoided in practice. For example, \cite{lee2016gradient} state that “saddle points have long been considered a significant obstacle in continuous optimization.” Similarly, \cite{dauphin2014identifying} argue that the presence of saddle points poses a “fundamental impediment to rapid high-dimensional nonconvex optimization.” This perspective is further complemented by recent findings showing that, for \textit{generic} functions, first-order methods escape strict saddle points and converge exclusively to local minima~\cite{davis2022proximal, lee2016gradient, bianchi2023stochastic}.
Given this, it is reasonable to speculate that the learning capability of local-search algorithms stems from the true solution emerging as a \textit{local minimum} of the loss function, thereby providing an affirmative answer to the question posed in the title of this paper.

In this paper, we examine the extent to which this conjecture holds for an important class of problems in ML: \textit{robust low-rank matrix recovery} with nonsmooth $\ell_1$-loss. In this problem, the goal is to recover a low-rank matrix $X^\star$ from a limited number of linear measurements, some of which are heavily corrupted by noise. This problem lies at the heart of several important applications, including motion detection~\cite{bouwmans2014robust}, collaborative filtering~\cite{luo2014efficient}, state estimation~\cite{molybog2020role, zhang2019spurious}, and, more recently, teaching arithmetic to transformers~\cite{lee2023teaching}. It also serves as an ideal test-bed to investigate the aforementioned conjecture, where both the model and data admit a crisp mathematical formulation. Our key finding is that, under moderate assumptions on the model and sample size, the true solutions corresponding to $X^\star$ {\it do not} emerge as local minima but as \textit{strict saddle points}---critical points with strictly negative curvature in at least one direction.

Our result has two surprising implications. First, it challenges the conventional belief that all strict saddle points are undesirable and should be avoided. In fact, we demonstrate the opposite: certain strict saddle points in robust low-rank matrix recovery can lead to better generalization guarantees than both local and global solutions.
Second, we show that, unlike its smooth counterparts, not all strict saddle points are easy to escape in the robust low-rank matrix recovery. This suggests that the problem may not be ``generic enough'', in the sense defined by existing theory (see, e.g., ~\cite[Theorem 2.9]{davis2022proximal}), since a simple sub-gradient method with a small and/or random initial point does converge to the ground truth of the robust low-rank matrix recovery~\cite{ma2023global}, even though it is a strict saddle point.

\subsection{Robust Low-rank Matrix Recovery}
We study the optimization landscape of \textit{low-rank matrix recovery}, where the goal is to recover a matrix $X^\star\in \bR^{d_1\times d_2}$ with rank $r\leq \min\{d_1,d_2\}$ from $m$ linear and possibly noisy measurements $y = \cA(\trueX)+\epsilon$.
	The measurement operator $\cA: \bR^{d_1\times d_2}\to \bR^m$ is defined as $\cA(X) = [\langle A_1, X\rangle\ \ \dots \ \ \langle A_m, X\rangle]^\top$, and $\epsilon\in \bR^m$ is the additive noise vector. 

This problem is commonly solved via a factorization technique called \textit{Burer-Monteiro method} (BM)~\cite{burer2003nonlinear, burer2005local, chi2019nonconvex}. In BM, the target low-rank matrix is modeled as $\trueX = \trueW_1\trueW_2$, where $\trueW_1\in \bR^{d_1\times k}$ and $\trueW_2\in \bR^{k\times d_2}$ are unknown factors each with search rank $k\geq r$. Based on this definition, any pair of factors $(\trueW_1,\trueW_2)$ that satisfies $\trueX = \trueW_1\trueW_2$ is called a {\it true solution}. 
When the true rank $r$ is known and $k=r$, the problem is called \textit{exactly-parameterized}. The more practical setting where the true rank $r$ is unknown and instead over-estimated with $k>r$ is referred to as \textit{over-parameterized}.
 Given this factorized model, the problem of recovering the true solution is typically formulated as follows: 
 \begin{align}\tag{\texttt{BM}}\label{BM}
		\min_{\fW = (W_1,W_2)} f_{\ell_q}(\fW) := \frac{1}{m}\norm{y-\cA(W_1W_2)}^q_{\ell_q}.
	\end{align}

 However, this strategy presents two primary challenges. First, owing to its nonconvex nature, local-search algorithms may converge to sub-optimal solutions, falling short of achieving global optimality. Second, even if an optimal solution can be obtained efficiently, it may not coincide with the true solution $\trueX$. This fact is supported by a well-known statistical trade-off: in the presence of an unstructured measurement noise $\epsilon \neq 0$, the best achievable result in low-rank matrix recovery is to approximate $X^\star$ up to a minimax statistical error \cite{candes2011tight}. 
Despite the aforementioned statistical barrier, if the measurement noise $\epsilon$ follows an outlier model, minimizing the nonsmooth $\ell_1$-loss $f_{\ell_1}$ (i.e., by setting $q=1$ in~\ref{BM}) can often recover $X^\star$ \textit{perfectly}, that is, with no statistical error~\cite{li2020nonconvex, ma2023global, ma2021sign, charisopoulos2021low, tong2021low}.

The unique ability of the $\ell_1$-loss to exactly recover the true solution $X^\star$ is particularly appealing, as it exemplifies the ``learning'' behavior observed in modern machine learning models. An important open question is what enables such a learning behavior. Is it due to some form of local optimality of the true solutions?

\subsection{Summary of Contributions}
This paper presents a strong negative answer to the aforementioned question. To ensure that our results are statistically meaningful, we consider two popular statistical models for robust low-rank matrix recovery: \textit{matrix completion} and \textit{matrix sensing}. In matrix completion, the measurement operator $\cA$ is assumed to be a random element-wise projection (see Definition~\ref{asp_completion}), while in matrix sensing, the measurement matrices $\{A_i\}_{i=1}^m$ are assumed to be Gaussian (see Definition~\ref{asp_sensing}). 

At a high level, our contribution can be summarized as follows:
When $\cA$ consists of Gaussian measurements (as in matrix sensing), the true solutions \textit{do not} manifest as local or global optima unless there are \textit{sufficient} measurements—specifically, on the order of $m \gtrsim \max\{d_1, d_2\}k$. Therefore, if the model is highly over-parameterized, as many as $m \gtrsim d_1d_2$ measurements may be required to ensure that the true solution $\trueX$ emerges as a local or global optimum. On the positive side, the true solutions do manifest as critical and strict saddle points, provided the measurement size exceeds the information-theoretically optimal threshold of $m \gtrsim \max\{d_1, d_2\}r$. In contrast, when $\cA$ is an element-wise projection (as in matrix completion), the true solution $\trueX$ does not emerge as a local or global optimum, \textit{regardless} of the number of measurements.

In what follows, we provide a more detailed summary of our results for {matrix sensing} and {matrix completion}, addressing their symmetric and asymmetric versions separately.

 \paragraph{Symmetric matrix sensing.}
 We show that, for any true solution to manifest as a local optimum of $f_{\ell_1}$ for the symmetric matrix sensing, the sample size must inevitably scale with the search rank. 
 {In particular, we show that the landscape of the symmetric matrix sensing with $\ell_1$-loss undergoes two sharp transitions: when $m\lesssim dr$, the exact recovery of the true solution is impossible due to the existing information-theoretical barriers~\cite{candes2011tight}. When $dr\lesssim m\lesssim dk$, all true solutions become {\it strict saddle points}, that is, they all become critical with strictly negative curvature (see Definition~\ref{def_ssp}). 
 Subsequently, as the measurement size extends to satisfy $m\gtrsim dk$, a result by~\cite{ding2021rank} shows that all true solutions emerge as the unequivocal global optima of \ref{BM}.}

 \begin{table}[]
     \centering\footnotesize
     \begin{tabular}{c||c|c|c}
          Sample size  & $\frac{m}{\max\{d_1,d_2\}}\lesssim r$ & $r\lesssim\frac{m}{\max\{d_1,d_2\}}\lesssim k$ & $k\lesssim\frac{m}{\max\{d_1,d_2\}}$ \\
          \hline\hline
          $\rank(\trueW_1)+\rank(\trueW_2)\leq {c_1}r$  & \multirow{3}{*}{\shortstack{Recovery is\\ impossible~\cite{candes2011tight}}} & \shortstack{Critical and strict saddle} & \multirow{3}{*}{\shortstack{Global minimum}}\\
          ${c_1}r< \rank(\trueW_1)+\rank(\trueW_2)\leq {c_2k}$  & & Non-critical or strict saddle & \\
          ${c_2k}< \rank(\trueW_1)+\rank(\trueW_2)\leq 2k$  & & Non-critical & \\
     \end{tabular}
     \caption{\footnotesize We show that the local landscape of asymmetric matrix sensing with $\ell_1$-loss varies depending on the sample size and the factorized rank of the true solution. {In this table, $c_1,c_2$ are universal constants that satisfy $2\leq c_1$ and $0<c_2<2$.}}
     \label{tab_ams}
 \end{table}

\paragraph{Asymmetric matrix sensing.} {We extend our results to the asymmetric matrix sensing, proving  
that the local landscape around any true solution $(\trueW_1, \trueW_2)$ that satisfies $\trueW_1\trueW_2 = \trueX$ varies depending on the sample size and their \textit{factorized rank} defined as $\rank(\trueW_1)+\rank(\trueW_2)$. Similar to the symmetric case, the exact recovery is impossible when $m\lesssim \max\{d_1,d_2\}r$. However, unlike the symmetric matrix sensing, in the regime where $\max\{d_1,d_2\}r\lesssim m\lesssim \max\{d_1,d_2\}k$, not every true solution emerges as a strict saddle point. In fact, we prove that those with a high factorized rank, are not critical points. In contrast, those with a small factorized rank emerge as strict saddle points. {Similar to the symmetric case,} once the number of samples exceeds $m \gtrsim \max\{d_1,d_2\}k$, all true solutions emerge as global optima. Table~\ref{tab_ams} provides a summary of our results for this problem. }

\paragraph{Symmetric matrix completion.} For the symmetric matrix completion, we show that the true solutions are either non-critical or strict saddle points of~\ref{BM}, {and thus are neither global nor local optima,} so long as at least one of the observed diagonal entries of $\trueX$ is corrupted with a positive noise. This result holds even when a slight overestimation of the true rank occurs  (for instance, $k= r+1$), or when the full matrix $\trueX$ is observed. {This result highlights an important distinction: unlike in symmetric matrix sensing, introducing additional samples does not transform the true solutions into global or local optima.}

\paragraph{Asymmetric matrix completion.} Finally, we extend our results to the asymmetric matrix completion. In particular, we show that the true solutions with a factorized rank of at most $k-1$ are either non-critical or strict saddle points. {Therefore, they are neither global nor local optima.} This result holds with any arbitrarily small (but constant) fraction of noise. {Moreover, we prove that a subset of the true solutions are non-critical points, even when the overestimation of the true rank is mild (for instance, $k= r+1$), or when the full matrix $\trueX$ is observed.} Finally, we study the landscape of asymmetric matrix completion for matrices exhibiting a property known as \textit{coherence}. These are matrices where at least one of their left or right singular vectors aligns with a standard unit vector.  We show that the asymmetric matrix completion with a coherent ground-truth $\trueX$ has the most undesirable landscape, as none of the true solutions are even critical points. This result holds true even when the complete matrix $\trueX$ is observed, or strikingly, when the model is exactly parameterized.

\subsection{Proof Idea: Parametric Perturbation Sets}
Recall that a point ${\fW}^\star = ({W}^\star_1, {W}^\star_2)$ is called a {true solution} if it satisfies ${W}^\star_1 {W}^\star_2 = \trueX$. Accordingly, let $\mathcal{W}$ denote the set of all true solutions $\mathcal{W} = \{(W_1, W_2): W_1 W_2 = X^\star\}$. The set of true solutions for the symmetric case is similarly defined as $\mathcal{W} = \{W: WW^\top = X^\star\}$. 
At a high level, we prove the ``inexistence of local optima in $\cW$'' by proving the following statement:
There exists $\gamma_0>0$ and $\alpha\geq 1$ such that for every $\gamma\leq \gamma_0$
{\begin{equation}
    \begin{aligned}
        &\bP_{\cA, \epsilon}\Bigl(\max_{\fW^\star\in \cW} \min_{\|\Delta\fW\|= 1}\left\{f_{\ell_1}(\fW^\star+\gamma\Delta\fW)-f_{\ell_1}(\fW^\star)\right\}\leq -\Omega(\gamma^\alpha)\Bigr)\geq 1-\exp\left(-\poly(d_1,d_2,m,r)\right).
    \end{aligned}
    \label{eq_main}
\end{equation}}
In the above inequality, the probability is defined over the possible randomness of the set of measurement matrices $\cA=\{A_1,\dots, A_m\}$ and the noise vector $\epsilon$. Moreover, $\poly(d_1,d_2,m,r)$ denotes a polynomial function of $d_1,d_2,m,r$. {Indeed, if the above statement holds with any $\alpha>0$, it implies that, with a high probability, none of the true solutions are local optima. Additionally, when the aforementioned inequality is satisfied with $\alpha=1$, it can be asserted with a high probability that none of the true solutions will manifest as critical points. Similarly, if the same inequality holds with $\alpha=2$, with a high probability, all true solutions are either non-critical or critical but strict saddle. {To distinguish between the non-critical and strict saddle points, we additionally need to establish similar matching lower bounds.}\footnote{For this argument to be valid, the choice of the direction $\Delta \fW$ must not depend on $\gamma$. In our initial presentation of the proof idea, we opted not to emphasize this subtle point to maintain clarity. However, we have ensured this independence in our technical results presented in the later sections.}}

{The primary challenge in proving the above statement lies in its \textit{uniform} nature: it must hold with high probability for the \textit{worst-case} choices of $\fW^\star\in \cW$ and ${\norm{\Delta\fW}_F= 1}$. 
To establish such a result, it is necessary to characterize the extremum behavior of the random process {$f_{\ell_1}(\fW^\star+\gamma\Delta\fW)-f_{\ell_1}(\fW^\star)$ over $\fW^\star\in \cW$ and ${\norm{\Delta\fW}_F= 1}$}.} One natural way to do so is by providing an explicit characterization of $\fW^\star\in \cW$ and {${\norm{\Delta\fW}_F= 1}$} that achieve this extremum behavior in terms of the random quantities $\cA,\epsilon$. Unfortunately, this characterization cannot be expressed explicitly. To circumvent this challenge, we introduce a \textit{parametric perturbation set} {$\cU(\fW^\star, \cA, \epsilon)\subset \{\Delta\fW: {\norm{\Delta\fW}_F= 1}\}$} that exhibits the following key properties: 
\begin{enumerate}
    \item With probability at least $1-\exp\left(-\poly(d_1,d_2,m,r)\right)$, the set {$\cU(\fW^\star, \cA, \epsilon)$} is non-empty for every $\fW^\star\in \cW$.
    \item {When $\cU(\fW^\star, \cA, \epsilon)$ is non-empty, we have $f_{\ell_1}(\fW^\star+\gamma\Delta\fW)-f_{\ell_1}(\fW^\star)\leq -\Omega(\gamma^\alpha)$ for every $\Delta\fW\in\cU(\fW^\star, \cA, \epsilon)$, $\fW^\star\in \cW$, and $\gamma\leq \gamma_0$.}
\end{enumerate}
 {Indeed, the above two statements together are sufficient to establish the correctness of~\eqref{eq_main}, since 
 \begin{equation*}
     \begin{aligned}
         \max_{\fW^\star\in \cW} \min_{{\norm{\Delta\fW}_F= 1}}\left\{f_{\ell_1}(\fW^\star+\gamma\Delta\fW)-f_{\ell_1}(\fW^\star)\right\} &\leq \max_{\fW^\star\in \cW} \min_{\Delta\fW\in\cU(\fW^\star, \cA, \epsilon)}\left\{f_{\ell_1}(\fW^\star+\gamma\Delta\fW)-f_{\ell_1}(\fW^\star)\right\}\\
         &\leq -\Omega(\gamma^\alpha),
     \end{aligned}
 \end{equation*}}
 with an overwhelming probability.
 Our key technical contribution resides in the explicit construction of such parametric perturbation sets that satisfy the aforementioned conditions.
	\subsection{Related Work}

 \paragraph{Effect of over-parameterization}
	For the noiseless symmetric matrix sensing and completion with exactly parameterized rank $k=r$,~\ref{BM} with $\ell_2$-loss satisfies the strict saddle property. This property implies that all critical points of the loss function, aside from those corresponding to true solutions, are strict saddle points~\cite{bhojanapalli2016global,ge2016matrix}. This result also holds for other variants of low-rank matrix recovery, such as phase retrieval~\cite{sun2018geometric}, as well as those with more general smooth loss functions~\cite{zhu2018global}. 
 In the asymmetric setting, earlier findings have indicated that augmenting~\ref{BM} with a balancing regularizer also maintains the strict saddle property~\cite{ge2017no, zhu2021global}. Notably, recent research has demonstrated that the inclusion of the balancing regularization is, in fact, unnecessary for preserving this property~\cite{li2020global}. In cases where the global strict saddle property may not hold globally, it can still be established locally around true solutions, with more lenient conditions~\cite{zhang2020many, ma2022sharp}.
Recently, these insights have been extended to the over-parameterized regime. Specifically,~\cite{zhang2022improved, zhang2021sharp} have shown that the \ref{BM} progressively reduces the occurrence of spurious local optima as the search rank $k$ exceeds the true rank $r$. However, this progress comes at the cost of increasing the sample size proportionally with the search rank $k$. To the best of our knowledge, no existing results establish the strict saddle property of~\ref{BM} with a sample complexity independent of the search rank.

 \paragraph{Effect of noise}
 Another line of work has studied the landscape of~\ref{BM} with $\ell_1$-loss, particularly in scenarios where a subset of the measurements are grossly corrupted with noise.  Recent results on matrix completion~\cite{fattahi2020exact} and matrix sensing~\cite{ma2021sign}, focusing on the case where $k=r=1$, have revealed that all critical points either exhibit small norm or closely approximate the true solution, even if up to a constant fraction of the measurements are corrupted with noise~\cite{fattahi2020exact, josz2022nonsmooth, ma2021sign}. Generalizing beyond the rank-1 case, only local results are known for the $\ell_1$-loss. In particular,~\cite{li2020nonconvex, charisopoulos2021low} have shown that, for the symmetric matrix sensing with exactly parameterized rank $k=r$ and $\ell_1$-loss, the global optima correspond to the true solutions and there are no spurious critical points locally around the true solutions, even if up to half of the measurements are corrupted with noise. These findings have recently been extended to the over-parameterized regime where $k\geq r$~\cite{ding2021rank}. However, analogous to the smooth case, the necessary sample size still scales with the search rank. To the best of our knowledge, these results have not been extended to asymmetric matrix sensing, symmetric matrix completion, or asymmetric matrix completion with a general rank configurations.

\paragraph{Convergence guarantees}
	Achieving global optimality for the true solutions typically necessitates a sample size that scales with the search rank. However, recent studies have revealed a promising contrast in the behavior of local-search algorithms by requiring significantly smaller sample sizes to converge to the true solutions. For instance, in noiseless matrix sensing with an $\ell_2$-loss, gradient descent has been shown to converge to the ground truth, provided that the sample size is proportional to the true rank, potentially being substantially smaller than the search rank~\cite{stoger2021small, xu2023power, zhang2022preconditioned, soltanolkotabi2023implicit}. This result has been recently extended to noisless matrix completion via a weakly-coupled leave-one-out analysis~\cite{ma2024convergence}. This result has been extended to matrix sensing with an $\ell_1$-loss, where it has been shown that the sub-gradient method, initialized with a small value, can converge to the ground truth when applied to~\ref{BM} with an over-parameterized rank~\cite{ma2023global}. This convergence occurs even in scenarios where the sample size does not scale with the search rank, and an arbitrarily large portion of the measurements is corrupted by noise. These findings strongly suggest the conjecture that the conditions for convergence of local-search algorithms may be considerably less stringent than the requirements for global optimality of the true solutions.

	\paragraph{Notations.}
	For a matrix $M$, its operator, Frobenius, and element-wise $\ell_q$ norms are denoted as $\norm{M}$, $\norm{M}_F$, and $\norm{M}_{\ell_q}$, respectively. {For a matrix $M\in \bR^{m\times n}$, its singular values are denoted as $\sigma_1(M)\geq\sigma_2(M)\geq\dots\geq \sigma_{\min\{m, n\}}(M):=\sigma_{\min}(M)$. Moreover, the minimum nonzero singular value of $M$ is denoted as $\sigma_{\min,+}(M)$.}
	The symbol $0_{m\times n}$ refers to $m\times n$ zero matrix. Similarly, we define $I_{m\times n} = [I_{m\times m}, 0_{m, n-m}]$ if $n>m$, and $I_{m\times n} = [I_{n\times n}, 0_{n, m-n}]^\top$ if $n<m$, where $I_{m\times m}$ and $I_{n\times n}$ are $m\times m$ and $n\times n$ identity matrices, respectively.
	For two matrices $X$ and $Y$ of the same size, their inner product is shown as $\inner{X}{Y}$. 
	The set $\cB_{\bar M}(\gamma)$ denotes the Euclidean ball with radius $\gamma$ centered at $\bar M\in \bR^{m\times n}$, i.e., $\cB_{\bar M}(\gamma) := \{M\in \bR^{m\times n}: \norm{M-\bar M}_{F}\leq \gamma\}$. We omit the subscript if the ball is centered at zero{, and we omit the radius parameter when $\gamma = 1$.} The set $\cO_{m,n}$ refers to the set of $m\times n$ orthonormal matrices. More precisely, $\cO_{m\times n}:=\{M\in\bR^{m\times n}: MM^\top = I_m\}$ if $m\leq n$, and $\cO_{m\times n}:=\{M\in\bR^{m\times n}: M^\top M = I_n\}$ if $n\leq m$. The Kronecker product of two matrices $M$ and $N$ is denoted by $M\otimes N$. For a matrix $M$, $\ker(M)$ and $\dim(\ker(M))$ denote the kernel of $M$ and the dimension of the kernel of $M$, respectively. We use $M[i, j]$ to denote the $(i, j)$-th element of $M$.
 
 Given an integer $n\geq 1$, we define $[n] = \{1,2,\dots, n\}$. Given a subset $\cI\in [n]$, we define $\cI^c = [n]\backslash \cI$. 	The sign function is defined as $\sign(x) = x/|x|$ if $x\not=0$, and $\sign(0) \in [-1,1]$.
	Given two sequences $f(n)$ and $g(n)$, the notation $f(n)\lesssim g(n)$ or $f(n)= O(g(n))$ implies that there exists a universal constant $C>0$ satisfying $f(n) \leq Cg(n)$. We also write $g(n)=\Omega(f(n))$ if $f(n)\lesssim g(n)$. Moreover, the notation $f(n)\asymp g(n)$ implies that $f(n)\lesssim g(n)$ and $g(n)\lesssim f(n)$. Throughout the paper, the symbols $C, c_1,c_2, \dots$ refer to universal constants whose precise value may change depending on the context.

\section{Main Results}
 \subsection{Problem Formulation}
	In this work, we study the landscape of~\ref{BM} with $\ell_1$-loss for two classes of low-rank matrix recovery, namely matrix sensing and matrix completion. 
	For matrix sensing, the measurement matrices are designed according to the following model.
	\begin{assumption}[Matrix Sensing Model]\label{asp_sensing}
		For every $1\leq i\leq m$, the entries of the measurement matrix $A_i$ are independently drawn from a standard Gaussian distribution with zero mean and unit variance.\footnote{More generally, matrix sensing is defined as a class of low-rank matrix recovery problems that satisfy the so-called \textit{restricted isometry property} (RIP). It is well-known that Gaussian measurements satisfy RIP with high probability. In fact, we expect that our results can be extended to measurements that satisfy RIP. However, we omit such an extension in this paper since it does not have any direct implications on our findings.}
	\end{assumption}
	Recall that in~\ref{BM}, the true solution is modeled as $W_1W_2$, where $W_1\in\bR^{d_1\times k}$ and $W_2\in\bR^{k\times d_2}$ for some search rank $k\geq r$. \ref{BM} with $\ell_1$-loss adapted to matrix sensing is thus formulated as:
	\begin{align}
		&\!\!\!\min_{W_1\in\bR^{d_1\times k}, W_2\in\bR^{k\times d_2}} \! f^{\mathrm{s}}_{\ell_1}(\fW) \!:=\! \frac{1}{m}\sum_{i=1}^m\left|y_i\!-\!\langle A_i, W_1W_2\rangle\right|,\ \text{where}\ A_i[{\alpha, \beta}]\stackrel{iid}{\sim} \cN(0,1),\ \forall i,{\alpha, \beta}.\tag{\texttt{MS-asym}}\label{BM-sensing-asym}
	\end{align}
 
	In \textit{symmetric} matrix sensing, the true solution $\trueX$ is additionally assumed to be positive semidefinite with dimension $d\times d$. In this setting, the true solution can be modeled as $\trueX = \trueW{\trueW}^\top$ with $\trueW\in \bR^{d\times r}$ and~\ref{BM} can be reformulated as:
	\begin{align}
		&\min_{W\in\bR^{d\times k}} f^{\mathrm{s}}_{\ell_1}(W) \! :=\! \frac{1}{m}\sum_{i=1}^m\left|y_i-\left\langle A_i, WW^\top\right\rangle\right|, \ \text{where}\ A_i[{\alpha, \beta}]\stackrel{iid}{\sim} \cN(0,1),\forall i,{\alpha, \beta}.\tag{\texttt{MS-sym}}\label{BM-sensing-sym}
	\end{align}
 
	Another important subclass of low-rank matrix recovery is matrix completion, where the linear operator $\cA(\cdot)$ is assumed to be element-wise projection.
	\begin{assumption}[Matrix completion model]\label{asp_completion}
		Each index pair $({\alpha, \beta})$ belongs to a measurement set ${\Omega}$ with a sampling probability $0\leq s\leq 1$, i.e., $\bP(({\alpha,\beta})\in{\Omega}) = s$ for every $1\leq {\alpha}\leq d_1$ and $1\leq {\beta}\leq d_2$. Moreover, for every $({\alpha',\beta'})\in {\Omega}$, there exists $1\leq i\leq |{\Omega}|$ such that the measurement matrix $A_i$ is defined as $A_i[{\alpha,\beta}] = 1$ if $({\alpha,\beta}) = ({\alpha',\beta'})$ and $A_i[{\alpha,\beta}] = 0$ otherwise.
	\end{assumption}
 {Recall that $y_i$ denotes the $i$-th measurement.}
	Based on the above model, we define $Y\in\mathbb{R}^{d_1\times d_2}$ as $Y[{\alpha,\beta}] = y_i$ if $A_i[{\alpha,\beta}] = 1$ for some $1\leq i\leq m$ and $({\alpha,\beta})\in {\Omega}$, and $Y[{\alpha,\beta}] = 0$ otherwise. Similarly, we define $E\in\mathbb{R}^{d_1\times d_2}$ as $E[{\alpha,\beta}] = \epsilon_i$ if $A_i[{\alpha,\beta}] = 1$ for some $1\leq i\leq m$ and $({\alpha,\beta})\in {\Omega}$, and $E[{\alpha,\beta}] = 0$ otherwise.
	If the measurements follow matrix completion model, \ref{BM} can be written as:
	\begin{align}
		&\min_{W_1\in\bR^{d_1\times k}, W_2\in\bR^{k\times d_2}} f^{\mathrm{c}}_{\ell_1}(\fW):= \frac{1}{m}\!\!\sum_{({\alpha,\beta})\in{\Omega}}\left|Y[{\alpha,\beta}]-(W_1W_2)[{\alpha,\beta}]\right|.\tag{\texttt{MC-asym}}\label{BM-completion-asym}
	\end{align}
 
	In symmetric matrix completion, the true solution $\trueX$ is additionally assumed to be positive semidefinte with dimension $d\times d$. Under this assumption, \ref{BM} can be written as:
	\begin{align}\tag{\texttt{MC-sym}}\label{BM-completion-sym}
		\min_{ W\in \bR^{d\times k}} \! f^{\mathrm{c}}_{\ell_1}\!(W) := \frac{1}{m}\!\!\sum_{({\alpha,\beta})\in{\Omega}}\!\left|Y[{\alpha,\beta}]\!-\!\left(WW^\top\right)\![{\alpha,\beta}]\right|\!.
	\end{align}
 
	Finally, we present our noise model.
	\begin{assumption}[Noise Model]
		\label{assumption::general-noise}
		Each measurement is independently corrupted with noise with a corruption probability $0\leq p\leq 1$. Let the set of noisy measurements be denoted as $\cS$. For each entry $i\in \cS$, the value of $\err_i$ is drawn from a distribution ${P_o}$. Moreover, a random variable $\zeta$ under the distribution ${P_o}$ satisfies $\bP(|\zeta|\geq t_0)\geq p_0$ for some constants $0< t_0, p_0\leq 1$.
	\end{assumption}
	Our considered noise model is generic and includes all the ``typical'' noise models, including Gaussian and outlier noise models. Intuitively, it only requires a nonzero mass at a nonzero value. For example, suppose that for every $i\in \cS$, $\epsilon_i\sim \cN(0,\sigma^2)$. A basic anti-concentration inequality implies that $\bP(|\epsilon_i|\geq t_0)=1-\bP(|\err_i|/\sigma<t_0/\sigma)\geq 1-2t_0/\sigma$, which satisfies  \Cref{assumption::general-noise} with $(t_0,p_0) = \left((4\sigma)^{-1}, 1/2\right)$. Similarly, a randomized sparse outlier noise is expected to satisfy  \Cref{assumption::general-noise}.
	
	\subsection{Local Landscape of Matrix Sensing}

 \paragraph{Symmetric matrix sensing.} Our next two theorems establish the necessary and sufficient conditions for determining the local optimality, {or lack thereof,} of true solutions in symmetric matrix sensing.

 \begin{theorem}[Sub-optimality of true solutions for symmetric matrix sensing]\label{thm_no_benign_sym}
		Consider~\ref{BM-sensing-sym} with measurement matrices satisfying  \Cref{asp_sensing}. Suppose that the noise satisfies \Cref{assumption::general-noise}. Suppose that $k> r$ and $m\lesssim {pp_0d(k-r)}$. With probability at least $1-\exp(-\Omega(d(k-r)))-\exp(-\Omega(pp_0m))$, none of the true solutions in $\cW$ are local optima of $f^{\mathrm{s}}_{\ell_1}(W)$. {More precisely, for every $\trueW\in\cW$, there exists a descent direction $\Delta W$ satisfying $\norm{\Delta W}_F=1$ such that, for every $0<\gamma^2\lesssim \frac{t_0}{\sqrt{d}}$, we have
			\begin{equation}\label{eq_ms_unidentifiability}
				 f^{\mathrm{s}}_{\ell_1}(W^\star{+\gamma\Delta W})- f^{\mathrm{s}}_{\ell_1}(W^\star)\lesssim -\gamma^2.
			\end{equation}}
	\end{theorem}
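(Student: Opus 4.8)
The plan is to exploit the $(k-r)$-dimensional slack in the over-parameterized factorization: I would move $W^\star$ along directions that leave the fit to $\trueX$ exact while using the extra columns to absorb a chunk of the noise, and argue that this strictly decreases the loss.

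First I would reduce to a noise-only problem. Since $W^\star (W^\star)^\top = \trueX$ has rank $r<k$, we have $\rank(W^\star)=r$, so there is $R\in\bR^{k\times(k-r)}$ with $R^\top R = I_{k-r}$ and $W^\star R = 0$. For any $Z\in\bR^{d\times(k-r)}$ set $\Delta W := ZR^\top$; then $\norm{\Delta W}_F = \norm{Z}_F$, the cross terms $W^\star(\Delta W)^\top$ and $\Delta W (W^\star)^\top$ vanish, and $(W^\star+\Delta W)(W^\star+\Delta W)^\top = \trueX + ZZ^\top$. Hence
\[
\fs_{\ell_1}(W^\star+\Delta W)-\fs_{\ell_1}(W^\star)=\frac1m\sum_{i=1}^m\Big(\,\big|\epsilon_i-\inner{A_i}{ZZ^\top}\big|-|\epsilon_i|\,\Big),
\]
and crucially the right-hand side does not depend on which $W^\star\in\cW$ was chosen. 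So the outer $\max_{W^\star\in\cW}$ in \eqref{eq_ms_unidentifiability} is free, and it suffices to construct one PSD matrix $M=ZZ^\top$ with $\rank(M)\le k-r$ and $\tr(M)\le\gamma^2$ for which the displayed quantity is $\lesssim -\gamma^2$.

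Next I would build $M$ from a noise-aligned random matrix and do the bookkeeping. Let $\cS_0 := \{i:\ i\ \text{corrupted and}\ |\epsilon_i|\ge t_0\}$; by a Chernoff bound $|\cS_0| = \Theta(pp_0 m)$ with high probability. Form the symmetric Wigner-type matrix $H := \tfrac12\sum_{i\in\cS_0}\sign(\epsilon_i)(A_i+A_i^\top)$, which has entry variance $\Theta(|\cS_0|)$ and is \emph{independent} of $\{A_j\}_{j\notin\cS_0}$. With $N := \min\{k-r,\lceil d/4\rceil\}$ and $v_1,\dots,v_N$ unit eigenvectors of the $N$ largest eigenvalues $\lambda_1\ge\cdots\ge\lambda_N$ of $H$, take $M := \tfrac{\gamma^2}{N}\sum_{l=1}^N v_l v_l^\top$ (equivalently $Z = \tfrac{\gamma}{\sqrt N}[v_1\ \cdots\ v_N\ 0\ \cdots]$), so $M\succeq 0$, $\rank(M)\le k-r$, $\tr(M)=\gamma^2$, and $\norm{\Delta W}_F=\gamma$. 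On the event $\max_i\norm{A_i}\le 3\sqrt d$ we get $|\inner{A_i}{M}|\le\norm{A_i}\tr(M)\le 3\sqrt d\,\gamma^2\le t_0$ for every $i$, using the hypothesis $\gamma^2\lesssim t_0/\sqrt d$; hence for $i\in\cS_0$ (where $|\epsilon_i|\ge t_0$) the $i$-th summand equals exactly $-\sign(\epsilon_i)\inner{A_i}{M}$, and for $i\notin\cS_0$ it is at most $|\inner{A_i}{M}|$. Summing,
\[
\fs_{\ell_1}(W^\star+\Delta W)-\fs_{\ell_1}(W^\star)\ \le\ \frac1m\Big(\sum_{j\notin\cS_0}|\inner{A_j}{M}|\ -\ \inner{H}{M}\Big).
\]
Here the ``signal'' term is $\inner{H}{M} = \tfrac{\gamma^2}{N}\sum_{l\le N}\lambda_l \ge \gamma^2\lambda_N \gtrsim \gamma^2\sqrt{d\,|\cS_0|}\gtrsim \gamma^2\sqrt{pp_0\,dm}$ by an eigenvalue lower bound for a $d\times d$ Wigner matrix (its $N$-th eigenvalue is of the Wigner scale since $N\le d/4$), which is what contributes the $\exp(-\Omega(d(k-r)))$ failure term. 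For the ``clean penalty'', conditioning on $H$ makes each $\inner{A_j}{M}=\tfrac{\gamma^2}{N}\sum_{l\le N}v_l^\top A_j v_l$ with $j\notin\cS_0$ a centered Gaussian of variance $\gamma^4/N$, and these are independent over such $j$ (because $A_j\perp H$ and the $v_l$ are orthonormal), so $\sum_{j\notin\cS_0}|\inner{A_j}{M}|\lesssim m\gamma^2/\sqrt N$ with high probability. Combining, the loss gap is $\lesssim \tfrac{\gamma^2}{\sqrt N}\big(C_1 - C_2\sqrt{pp_0\,dN/m}\big)$, which is $\lesssim -\gamma^2$ as soon as $m\le c\,pp_0\,dN\asymp pp_0\,d(k-r)$ for a small absolute constant $c$ (recall $N\asymp\min\{k-r,d\}$); minimizing over $\cB_{W^\star}(\gamma)$ and maximizing over $\cW$ then yields \eqref{eq_ms_unidentifiability}.

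The hard part will be the two random-matrix inputs: (i) a lower bound $\lambda_N(H)\gtrsim\sqrt{d\,|\cS_0|}$ on the $N$-th largest eigenvalue of the noise-aligned Wigner matrix, with the sharp $\exp(-\Omega(d(k-r)))$ tail uniformly over the relevant range of $k-r$; and (ii) the concentration of the clean penalty $\sum_{j\notin\cS_0}|\inner{A_j}{M}|$ around $\Theta(m\gamma^2/\sqrt N)$, which relies on the exact independence between $H$ (and hence the data-dependent directions $v_l$) and the uncorrupted measurements. Everything else — the null-space reduction, the deterministic sign bookkeeping, and the Chernoff/operator-norm estimates — is routine. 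I would also emphasize that it is precisely the cancellation of the cross terms in the first step that keeps the perturbation of $\trueX$ a rank-$(k-r)$ PSD matrix of trace (hence nuclear norm) exactly $\gamma^2$; a generic perturbation would add a term of nuclear norm of order $\gamma$, which is what makes $\gamma^2\lesssim t_0/\sqrt d$ the natural admissible radius here.
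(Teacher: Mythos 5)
Your route is essentially the paper's own: the null-space perturbation $\Delta W = ZR^\top$ with $W^\star R=0$ is exactly the paper's parametric second-order set $\widehat{\cU}_{W^\star}(\gamma)$ and yields the same $W^\star$-independent reduction (so the outer max over $\cW$ is free); the split into the heavy-noise set $\cS_t=\{i:|\epsilon_i|\ge t_0\}$ with exact sign bookkeeping and the clean part bounded by the triangle inequality is the paper's \Cref{lem_ub_ms_sym}; and your $M=\tfrac{\gamma^2}{N}\sum_{l\le N}v_lv_l^\top$ built from the top eigenvectors of $H=\tfrac12\sum_{i\in\cS_0}\sign(\epsilon_i)(A_i+A_i^\top)$ is the same extremal object the paper obtains as $\arg\min_{X\in\cC}A(X)$ over a scaled Grassmannian (the maximizer of $\inner{\tilde A}{\cdot}$ over rank-$N$ projections is the top-$N$ eigenprojection), with Sudakov minoration over $\cG(d,r_0)$ playing the role of your Wigner eigenvalue input and the independence argument for the clean term identical to the paper's bound on $B(\bar X)$. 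Two small remarks: your operator-norm/nuclear-norm duality step ($|\inner{A_i}{M}|\le \norm{A_i}\tr(M)$) cleanly delivers the stated admissible radius $\gamma^2\lesssim t_0/\sqrt d$; and your flagged input (i) is easiest if you replace $\lambda_N(H)$ by the average of the top $N$ eigenvalues, which is all you use and which has the $\exp(-\Omega(dN))$ tail by Sudakov plus Gaussian concentration, exactly as in the paper (also, independence of the clean terms over $j\notin\cS_0$ holds simply because the $A_j$ are independent of $M$, not because the $v_l$ are orthonormal).

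The genuine gap is your last step. With the honest normalization $\tr(M)=\gamma^2$, the noisy-term gain can never exceed $\tfrac1m\lambda_1(H)\tr(M)\lesssim \gamma^2\sqrt{pp_0 d/m}$ --- a linear functional over the trace-constrained PSD cone is maximized at rank one, so increasing $N$ buys nothing beyond this cap --- and indeed your own displayed bound $\tfrac{\gamma^2}{\sqrt N}\bigl(C_1-C_2\sqrt{pp_0 dN/m}\bigr)$ gives only $-\Omega(\gamma^2/\sqrt N)$ when $m\asymp pp_0 dN$, not the theorem's $-\Omega(\gamma^2)$; it matches the stated conclusion only in the smaller regime $m\lesssim pp_0 d$, so the sentence ``which is $\lesssim-\gamma^2$ as soon as $m\le c\,pp_0 dN$'' is an arithmetic overreach that no choice within your (correctly normalized) family can repair. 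It is worth seeing where this diverges from the paper: the paper's set $\cC=(\gamma^2/\sqrt{r_0})\cG(d,r_0)$ has $\norm{\bar X}_F=\gamma^2$ but $\tr(\bar X)=\gamma^2\sqrt{r_0}$, i.e.\ it corresponds to $\norm{U}_F=\gamma r_0^{1/4}$ rather than $\norm{U}_F\le\gamma$, and it is precisely this extra $\sqrt{r_0}$ in the trace that produces $C_A\asymp\sqrt{dr_0|\cS_t|}/m$ and hence the claimed $-\gamma^2$ decrease over the whole range $m\lesssim pp_0 d(k-r)$. So to reconcile your argument with the statement you must either enlarge the perturbation radius to $\gamma r_0^{1/4}$ (and track how that propagates into \eqref{eq_ms_unidentifiability} and the condition on $\gamma$), or settle for the weaker but correctly normalized decrease $-\Omega\bigl(\sqrt{pp_0 d/m}\bigr)\gamma^2$; as written, your proposal proves the latter, not \eqref{eq_ms_unidentifiability} as stated for all $m\lesssim pp_0 d(k-r)$.
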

	{According to \Cref{thm_no_benign_asym}, none of the true solutions emerge as local optima of $f^{\mathrm{s}}_{\ell_1}$, unless the number of measurements scale with the search rank $k$.} In particular, suppose that $2r\leq k$ and the parameters of the noise model are fixed. Then, \Cref{thm_no_benign_sym} shows that, with high probability, none of the true solutions are local optima, so long as $m\lesssim dk$. 
	
	\begin{theorem}[Optimality of true solutions for symmetric matrix sensing]\label{thm_critical_sensing_sym}
		Consider~\ref{BM-sensing-sym} with measurement matrices satisfying  \Cref{asp_sensing}. Suppose that the noise satisfies  \Cref{assumption::general-noise} with a corruption probability $0\leq p< 1/2$. Moreover, suppose that $k\geq r$. The following statements hold:
		\begin{itemize}
			\item Suppose that $m\gtrsim \frac{dr}{(1-2p)^4}$. With probability at least $1-\exp(-\Omega(dr))$, {for every $W^\star\in \cW$ and} $\gamma\geq 0$, we have
			{\begin{equation*}
			    \begin{aligned}
			        &\min_{\norm{\Delta W}_F=1}\left\{ f^{\mathrm{s}}_{\ell_1}(W^\star+\gamma\Delta W)- f^{\mathrm{s}}_{\ell_1}(W^\star)\right\}
                    \gtrsim  -\left(\sqrt{\frac{2}{\pi}}+\sqrt{\frac{dk}{m}}\right)\gamma^2.
			    \end{aligned}
			\end{equation*}}
			\item Suppose that $m\gtrsim \frac{dk}{(1-2p)^4}$. With probability at least $1-\exp(-\Omega(dk))$, all true solutions coincide with the global optima. More precisely, we have
			\begin{equation*}
			    \trueW\in\cW\quad \iff \quad \trueW\in\argmin_{W}f^{\mathrm{s}}_{\ell_1}(W).
			\end{equation*}
        \end{itemize}
	\end{theorem}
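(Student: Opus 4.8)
The plan is to prove the two bullets separately; in each case the statement is reduced to a restricted-isometry–type property of the Gaussian operator $\cA$ that is robust to a $p<1/2$ fraction of arbitrary noise. I first treat criticality ($m\gtrsim dr/(1-2p)^4$). Fix a true solution $\trueW\in\cW$ and write $W=\trueW+\Delta$ with $\norm{\Delta}_F\le\gamma$, so that $WW^\top-\trueX=L+Q$, where $L:=\trueW\Delta^\top+\Delta\trueW^\top$ is the first-order part and $Q:=\Delta\Delta^\top\succeq0$ satisfies $\rank(Q)\le k$ and $\norm{Q}_F\le\gamma^2$. Since $y_i-\langle A_i,WW^\top\rangle=\epsilon_i-\langle A_i,L\rangle-\langle A_i,Q\rangle$ and $\epsilon_i=0$ for $i\notin\cS$, the elementary inequalities $|a-c|\ge|a|-|c|$ and $|a-b|-|a|\ge-\sign(a)b$ give
\begin{equation*}
 f^{\mathrm{s}}_{\ell_1}(W)-f^{\mathrm{s}}_{\ell_1}(\trueW)\;\ge\;\underbrace{\frac{1}{m}\Bigl(\sum_{i\notin\cS}|\langle A_i,L\rangle|-\sum_{i\in\cS}\sign(\epsilon_i)\langle A_i,L\rangle\Bigr)}_{=:T_1}\;-\;\frac{1}{m}\sum_{i=1}^m|\langle A_i,Q\rangle|.
\end{equation*}

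The structural observation is that, writing $\trueX=Z\Sigma Z^\top$ with $Z\in\cO_{d\times r}$, every admissible $L$ lies in the linear subspace $\{ZP^\top+PZ^\top:P\in\bR^{d\times r}\}$, which has dimension $\Theta(dr)$, is the \emph{same} for all true solutions, and satisfies $\norm{L}_F\asymp\norm{P}_F\lesssim\gamma$. A uniform $\ell_1/\ell_2$ lower bound over this $\Theta(dr)$-dimensional set yields $\frac1m\sum_{i\notin\cS}|\langle A_i,L\rangle|\ge\tfrac{|\cS^c|}{m}\bigl(\sqrt{2/\pi}-c\sqrt{dr/m}\bigr)\norm{L}_F$; and since the noise is independent of $\cA$, conditionally on it $G:=\sum_{i\in\cS}\sign(\epsilon_i)A_i$ is a Gaussian matrix with i.i.d.\ $\cN(0,|\cS|)$ entries, so a union bound over a net of the same subspace gives $\frac1m|\langle G,L\rangle|\lesssim\sqrt{p\,dr/m}\,\norm{L}_F$. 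With $|\cS^c|/m\ge1-p-o(1)$ and $p<1/2$, these combine to $T_1\ge\bigl(c_0(1-p)-C\sqrt{dr/m}\bigr)\norm{L}_F\ge0$ once $m\gtrsim dr/(1-2p)^4$ (the precise power of $1-2p$ being an artifact of making this estimate uniform over all corruption patterns of size $\le pm$). For the remaining term, the one-sided $\ell_1$-RIP upper bound for rank-$k$ matrices---valid for \emph{every} $m$ with probability $1-\exp(-\Omega(dk))$---gives $\frac1m\sum_i|\langle A_i,Q\rangle|\le\bigl(\sqrt{2/\pi}+C\sqrt{dk/m}\bigr)\norm{Q}_F\le\bigl(\sqrt{2/\pi}+C\sqrt{dk/m}\bigr)\gamma^2$. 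Adding the two estimates gives the claimed inequality; every bound is uniform in $\trueW\in\cW$ and $\Delta$, and each failure event has probability at most $\exp(-\Omega(dr))$.

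For the global optimality statement ($m\gtrsim dk/(1-2p)^4$), note first that $f^{\mathrm{s}}_{\ell_1}(\trueW)=\frac1m\sum_i|\epsilon_i|$ is the same value at every $\trueW\in\cW$. For an arbitrary $W$, set $M:=WW^\top-\trueX$, of rank at most $r+k\le2k$; using $|\epsilon_i-\langle A_i,M\rangle|=|\langle A_i,M\rangle|$ for $i\notin\cS$ and $|\epsilon_i-\langle A_i,M\rangle|\ge|\epsilon_i|-|\langle A_i,M\rangle|$ for $i\in\cS$,
\begin{equation*}
 f^{\mathrm{s}}_{\ell_1}(W)-f^{\mathrm{s}}_{\ell_1}(\trueW)\;\ge\;\frac{1}{m}\Bigl(\sum_{i\notin\cS}|\langle A_i,M\rangle|-\sum_{i\in\cS}|\langle A_i,M\rangle|\Bigr).
\end{equation*}
I would then invoke the robust $\ell_1$-RIP for rank-$2k$ matrices: with $m\gtrsim dk/(1-2p)^4$, with probability $1-\exp(-\Omega(dk))$ the right-hand side is at least $\rho\norm{M}_F$ for a constant $\rho\asymp(1-2p)$, uniformly over all corruption sets of size at most $pm$. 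Thus $f^{\mathrm{s}}_{\ell_1}(W)\ge f^{\mathrm{s}}_{\ell_1}(\trueW)$, with equality only if $M=0$, i.e.\ $WW^\top=\trueX$, which is precisely the stated equivalence $\trueW\in\cW\iff\trueW\in\argmin_{W}f^{\mathrm{s}}_{\ell_1}(W)$.

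The main obstacle lies in the two restricted-isometry statements with the stated sample sizes \emph{and} robustness to a constant fraction of arbitrary noise: the rank-$2k$ robust $\ell_1$-RIP for the second bullet, and, for the first, the interplay of (i) recognizing that the first-order directions $L$ occupy only a $\Theta(dr)$-dimensional subspace shared by all true solutions---this is what lets the clean measurements overpower the corrupted ones with merely $m\gtrsim dr$ samples---and (ii) isolating the possibly large factor $\sqrt{dk/m}$ so that it multiplies only $\norm{Q}_F\le\gamma^2$ and not $\norm{L}_F\asymp\gamma$. Both rely on covering-number and Gaussian-width estimates for low-rank sets, together with careful accounting of the noise signs through the auxiliary matrix $G$; turning the qualitative condition $p<1/2$ into the explicit $(1-2p)^4$ dependence of the sample size is the most delicate piece of bookkeeping.
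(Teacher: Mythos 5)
Your proposal is correct and takes essentially the same route as the paper: the paper proves this theorem by specializing its asymmetric result (\Cref{thm_benign_l1}), whose first statement uses exactly your decomposition $WW^\top-\trueX=\Delta X_1+\Delta X_2$ — the rank-$2r$ first-order part handled via the robust $\ell_1/\ell_2$-RIP (\Cref{lem_RIP}) together with binomial concentration on $|\cS|$ to get nonnegativity under $m\gtrsim dr/(1-2p)^4$, and the rank-$k$, $\norm{\cdot}_F\leq\gamma^2$ second-order part absorbed into the $\bigl(\sqrt{2/\pi}+\sqrt{dk/m}\bigr)\gamma^2$ term — and whose second statement is your rank-$(k+r)$ argument showing the clean $\ell_1$ mass dominates the corrupted one when $m\gtrsim dk/(1-2p)^4$. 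The only cosmetic differences are that you bound the corrupted first-order terms through the signed sum $\sum_{i\in\cS}\sign(\epsilon_i)\langle A_i,L\rangle$ (and a fixed $\Theta(dr)$-dimensional subspace) instead of the paper's triangle inequality plus RIP upper bound over rank-$2r$ matrices, and your remark that the RIP must hold uniformly over all corruption patterns of size $pm$ is unnecessary: since $\cS$ and the noise signs are independent of $\cA$, one simply conditions on them, exactly as the paper does.
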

 {We note that the second statement of the above theorem has been previously proven by~\cite{ding2021rank}. 
 However, we include it here for completeness.
 
 Combining \Cref{thm_no_benign_sym} with the first statement of \Cref{thm_critical_sensing_sym} leads to the following key corollary.
 \begin{corollary}
 \label{cor::sym}
     Consider~\ref{BM-sensing-sym} with measurement matrices satisfying  \Cref{asp_sensing}. Suppose that the noise satisfies  \Cref{assumption::general-noise} with $p_0 = \Theta(1)$ and $p = \Theta(1)<1/2$. Additionally, suppose that $k\gtrsim r$ and $dr\lesssim m\lesssim dk$. Then, with probability at least $1-\exp(-\Omega(dk))-\exp(-\Omega(m))$, all true solutions are critical and strict saddle points.
 \end{corollary}}
 {The above results} show that the landscape of symmetric matrix sensing with $\ell_1$-loss and outlier noise (with $p<1/2$) undergoes two critical transitions: When $m\lesssim dr$, exact recovery is impossible due to the existing information theoretical barriers~\cite{candes2011tight}. When $dr\lesssim m\lesssim dk$, all true solutions emerge as {strict saddle points}. We note that this is precisely the regime where the sub-gradient method empirically converges to one of the true solutions~\cite{ma2023global}. 
 As soon as $m\gtrsim dk$, all true solutions coincide with the global optima. 

\paragraph{Asymmetric matrix sensing.}
 Next, we consider the asymmetric matrix sensing with $\ell_1$-loss, formulated as~\ref{BM-sensing-asym}. A key distinction in the asymmetric setting lies in the existence of \textit{rank-imbalanced} solutions, i.e., true solutions $(\trueW_1, \trueW_2)\in \cW$ with $2r< \rank(\trueW_1)+\rank(\trueW_2)\leq 2k$. 
 To illustrate this, consider the SVD of the true solution $\trueX = U^\star\Sigma^\star {V^{\star\top}}$ and suppose $k>r$. Indeed, the point 
$\fW^\star=(\trueW_1,\trueW_2)$ with $\trueW_1 = \begin{bmatrix}
     U^\star \Sigma^\star & 0_{d_1\times (k-r)} 
 \end{bmatrix}
 , \trueW_2 = \begin{bmatrix}
     {V^\star} & Z
 \end{bmatrix}^{\top}$ is a true solution 
 for any arbitrary $Z\in \bR^{d_2\times (k-r)}$. Moreover, $\rank(\trueW_1)+\rank(\trueW_2)=r+k>2r$, provided that $Z$ is chosen such that $\trueW_2$ is full row-rank. 
 In contrast, any true solution $\trueW$ for the symmetric matrix recovery inevitably must satisfy $\rank(\trueW)=r$, and hence, is rank-balanced. 
 As we will demonstrate later, the existence of rank-imbalanced solutions drastically changes the local landscape of asymmetric matrix recovery.

To formalize our ideas, we define
{\begin{equation*}
    \begin{aligned}
        &\cW(s) = \left\{(W_1,W_2)\in \cW: \rank(W_1)+\rank(W_2)\leq s\right\}.
    \end{aligned}
\end{equation*}}
To analyze the landscape of the asymmetric matrix sensing, we divide the set of true solutions into two subsets: $\cW = \cW(2k/3)\cup\left( \cW\backslash\cW(2k/3)\right)$. We call the set $\cW\backslash\cW(2k/3)$ the set of \textit{rank-imbalanced solutions}. To see this, suppose that $k>3r$. Then, for any $\fW^\star = (\trueW_1,\trueW_2)\in \cW\backslash\cW(2k/3)$, we have $\rank(\trueW_1)+\rank(\trueW_2)>2r$. In other words, for any true solution in $\cW\backslash\cW(2k/3)$, at least one of the factors has a rank that is higher than the true rank.  

	\begin{theorem}[Sub-optimality of true solutions for asymmetric matrix sensing]\label{thm_no_benign_asym}
		Consider~\ref{BM-sensing-asym} with measurement matrices satisfying  \Cref{asp_sensing}. Suppose that the noise satisfies \Cref{assumption::general-noise} and $k\geq r$. With probability at least $1-\exp(-\Omega(p_0pm))$, the following statements hold:
		\begin{itemize}
            \item Suppose that $m\lesssim \max\{d_1,d_2\}k$. Then, none of the true solutions in $\cW(2k/3)$ are local optima. {More precisely, for every $\fW^\star\in\cW(2k/3)$, there exists a descent direction $\Delta \fW$ satisfying $\norm{\Delta \fW}_F=1$ such that, for every $0<\gamma\lesssim \sqrt{\frac{mpp_0}{\max\{d_1,d_2\}k}}t_0$, we have
            \begin{equation*}
                \begin{aligned}
                    & f^{\mathrm{s}}_{\ell_1}(\fW^*{+\gamma\Delta \fW})- f^{\mathrm{s}}_{\ell_1}(\fW^\star)\lesssim -\sqrt{\frac{\min\{d_1,d_2\}pp_0}{m}}\gamma^2.
                \end{aligned}
            \end{equation*}}
            \item Suppose that $m\lesssim \max\{d_1,d_2\}k$ and there exist constants $0<c_1\leq c_2<1$ such that $c_1\leq \frac{\min\{d_1,d_2\}}{\max\{d_1,d_2\}}\leq c_2$. Then, none of the true solutions in $\cW\backslash\cW(2k/3)$ are critical points. More precisely, {for every $\fW^\star\in\cW\backslash\cW(2k/3)$, there exists a descent direction $\Delta \fW$ satisfying $\norm{\Delta \fW}_F=1$ such that, for every $0\leq \gamma\lesssim \sqrt{\frac{mpp_0}{\max\{d_1,d_2\}k}}\cdot\frac{t_0}{\max\left\{\sigma_{\min,+}(W_1^\star), \sigma_{\min,+}(W_2^\star)\right\}}$, we have
            \begin{equation*}
                \begin{aligned}
                    f^{\mathrm{s}}_{\ell_1}(\fW^\star+{\gamma}\Delta \fW) - f^{\mathrm{s}}_{\ell_1}(\fW)
    \lesssim -\left(\min\left\{\sigma_{\min,+}(W_1^\star), \sigma_{\min,+}(W_2^\star)\right\}\sqrt{\frac{\min\{d_1,d_2\}kpp_0}{m}} \right)\cdot\gamma.
                \end{aligned}
            \end{equation*}}
   \end{itemize}
   \end{theorem}
{\Cref{thm_no_benign_asym} demonstrates that, similar to the symmetric case, a necessary condition for the true solutions to emerge as local optima in asymmetric matrix sensing is that the number of measurements must scale with the search rank $k$.} Moreover, \Cref{thm_no_benign_asym} provides another negative result: no matter how small the corruption probability $p>0$ is, none of the rank-imbalanced solutions are critical points of~\ref{BM}, provided that $m\lesssim \max\{d_1,d_2\}k$ and $c_1\leq \frac{\min\{d_1,d_2\}}{\max\{d_1,d_2\}}\leq c_2$. As a result, the rank-imbalanced solutions remain beyond the reach of any first-order optimization method designed to converge to a critical point. Finally, we note that the requirement $c_1\leq \frac{\min\{d_1,d_2\}}{\max\{d_1,d_2\}}\leq c_2$ may be a result of our current proof technique and could potentially be relaxed with a more refined analysis. 

The second statement of \Cref{thm_no_benign_asym} is in contrast to the landscape of symmetric matrix sensing, where it is shown that all true solutions become critical if the sample size exceeds $dr$. Nonetheless, our next theorem shows that, unlike the rank-imbalanced solutions, the \textit{rank-balanced} solutions do in fact emerge as critical points, provided that $m\gtrsim \max\{d_1,d_2\}r$.

\begin{theorem}[Optimality of true solutions for asymmetric matrix sensing]\label{thm_benign_l1}
		Consider~\ref{BM-sensing-asym} with measurement matrices satisfying  \Cref{asp_sensing}. Suppose that the noise satisfies  \Cref{assumption::general-noise} with a corruption probability $0\leq p< 1/2$. Moreover, suppose that $k\geq r$. The following statements hold:
		\begin{itemize}
			\item Suppose that $m\gtrsim \frac{\max\{d_1,d_2\}r}{(1-2p)^4}$. With probability at least $1-\exp(-\Omega(\max\{d_1,d_2\}r))$, {for every $\fW^\star\in \cW(2r)$ }and $\gamma\geq 0$, we have
			{\begin{equation*}
			    \begin{aligned}
			        &\min_{\norm{\Delta \fW}_F=1}\left\{ f^{\mathrm{s}}_{\ell_1}(\fW^\star+\gamma\Delta\fW)- f^{\mathrm{s}}_{\ell_1}(\fW^\star)\right\}\gtrsim  -\left(\sqrt{\frac{2}{\pi}}+\sqrt{\frac{\max\{d_1,d_2\}k}{m}}\right)\gamma^2.
			    \end{aligned}
			\end{equation*}}
			\item Suppose that $m\gtrsim \frac{\max\{d_1,d_2\}k}{(1-2p)^4}$. With probability at least $1-\exp(-\Omega(\max\{d_1,d_2\}k))$, all true solutions coincide with the global optima. More precisely, we have
			\begin{equation*}
			    \fW^\star\in\cW\quad \iff \quad \fW^\star\in\argmin_{\fW}f^{\mathrm{s}}_{\ell_1}(\fW).
			\end{equation*}
		\end{itemize}
	\end{theorem}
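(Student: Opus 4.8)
The plan is to deduce both bullets from a single pair of uniform restricted-isometry estimates for the $\ell_1$-loss — a \emph{robust lower bound} that tolerates up to a $p$-fraction of (possibly heavy-tailed) corruptions, and a cheap \emph{upper bound} — mirroring how the symmetric statement \Cref{thm_critical_sensing_sym} is proved, with $\max\{d_1,d_2\}$ replacing $d$. The master claim is that there are universal constants $c_0,c_1>0$ such that, on an event $\cE_\rho$ of probability at least $1-\exp(-\Omega(\max\{d_1,d_2\}\rho))$, every $D\in\bR^{d_1\times d_2}$ with $\rank(D)\le\rho$ satisfies
\begin{equation*}
\frac{1}{m}\sum_{i=1}^m\bigl(|\epsilon_i-\langle A_i,D\rangle|-|\epsilon_i|\bigr)\ \geq\ c_0(1-2p)\norm{D}_F, \qquad \frac{1}{m}\sum_{i=1}^m|\langle A_i,D\rangle|\ \leq\ c_1\Bigl(\sqrt{\tfrac{2}{\pi}}+\sqrt{\tfrac{\max\{d_1,d_2\}\rho}{m}}\Bigr)\norm{D}_F,
\end{equation*}
provided $m\gtrsim\max\{d_1,d_2\}\rho/(1-2p)^4$. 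The upper estimate is routine: for fixed $D$ the $\langle A_i,D\rangle$ are i.i.d.\ $\cN(0,\norm{D}_F^2)$, so the empirical mean is $\sqrt{2/\pi}\,\norm{D}_F$; sub-Gaussian concentration plus an $\varepsilon$-net of log-cardinality $O(\max\{d_1,d_2\}\rho)$ over the rank-$\rho$ sphere yields the deviation term, and — importantly for the first bullet, where we apply it with $\rho=k$ while $m$ may be only $\Theta(\max\{d_1,d_2\}r)\ll\max\{d_1,d_2\}k$ — even in that regime it follows, on an event of probability $1-\exp(-\Omega(\max\{d_1,d_2\}\rho))$, from Cauchy--Schwarz, $\frac1m\norm{\cA(D)}_{\ell_1}\le(\frac1m\norm{\cA(D)}_{\ell_2}^2)^{1/2}$, together with the restricted spectral bound $\frac1m\norm{\cA(D)}_{\ell_2}^2\lesssim(1+\tfrac{\max\{d_1,d_2\}\rho}{m})\norm{D}_F^2$. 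For the lower estimate, on the uncorrupted coordinates $|\epsilon_i-\langle A_i,D\rangle|-|\epsilon_i|=|\langle A_i,D\rangle|$ and on the corrupted ones it is at least $-|\langle A_i,D\rangle|$; since by \Cref{assumption::general-noise} the corruption pattern is random with density $p$ and independent of $\cA$, the per-coordinate expectation of the left-hand sum is at least $(1-2p)\sqrt{2/\pi}\,\norm{D}_F$, and the same net/concentration argument controls the fluctuation once $m$ exceeds $\max\{d_1,d_2\}\rho$ by the factor $(1-2p)^{-4}$.

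Given the master claim, the \emph{second bullet} is immediate. Fix any $\fW=(W_1,W_2)$ and put $D:=W_1W_2-X^\star$; since $\rank(X^\star)=r\le k$ we have $\rank(D)\le 2k$. The noise vanishes outside $\cS$ and each true solution obeys $W_1^\star W_2^\star=X^\star$, so $f^{\mathrm{s}}_{\ell_1}(\fW^\star)=\frac1m\sum_i|\epsilon_i|$ and $f^{\mathrm{s}}_{\ell_1}(\fW)=\frac1m\sum_i|\epsilon_i-\langle A_i,D\rangle|$. On $\cE_{2k}$ — available with the claimed probability precisely because $m\gtrsim\max\{d_1,d_2\}k/(1-2p)^4$ — the lower estimate gives $f^{\mathrm{s}}_{\ell_1}(\fW)-f^{\mathrm{s}}_{\ell_1}(\fW^\star)\ge c_0(1-2p)\norm{W_1W_2-X^\star}_F$, which is $0$ if $W_1W_2=X^\star$ and strictly positive otherwise. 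Hence the global minimum value is $\frac1m\sum_i|\epsilon_i|$, attained exactly on $\cW$, which is the asserted equivalence.

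For the \emph{first bullet}, fix a rank-balanced true solution $\fW^\star\in\cW(0,2r)$ (equivalently, $\rank(W_1^\star)=\rank(W_2^\star)=r$) and $\Delta\fW=(\Delta W_1,\Delta W_2)$ with $\norm{\Delta W_1}_F^2+\norm{\Delta W_2}_F^2\le\gamma^2$, and split
\begin{equation*}
(W_1^\star+\Delta W_1)(W_2^\star+\Delta W_2)-X^\star = L+Q, \qquad L:=W_1^\star\Delta W_2+\Delta W_1 W_2^\star, \quad Q:=\Delta W_1\Delta W_2,
\end{equation*}
so that $\rank(L)\le\rank(W_1^\star)+\rank(W_2^\star)=2r$, while $\rank(Q)\le k$ and $\norm{Q}_F\le\norm{\Delta W_1}_F\norm{\Delta W_2}_F\le\tfrac12\gamma^2$. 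Using $|\epsilon_i-\langle A_i,L+Q\rangle|\ge|\epsilon_i-\langle A_i,L\rangle|-|\langle A_i,Q\rangle|$,
\begin{equation*}
f^{\mathrm{s}}_{\ell_1}(\fW)-f^{\mathrm{s}}_{\ell_1}(\fW^\star)\ \geq\ \frac1m\sum_i\bigl(|\epsilon_i-\langle A_i,L\rangle|-|\epsilon_i|\bigr)-\frac1m\sum_i|\langle A_i,Q\rangle|\ \geq\ -\frac1m\sum_i|\langle A_i,Q\rangle|,
\end{equation*}
where on $\cE_{2r}$ (available since $m\gtrsim\max\{d_1,d_2\}r/(1-2p)^4$) the first sum is $\ge c_0(1-2p)\norm{L}_F\ge0$ by the lower estimate at $\rho=2r$. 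The upper estimate at $\rho=k$ then bounds $\frac1m\sum_i|\langle A_i,Q\rangle|\lesssim(\sqrt{2/\pi}+\sqrt{\max\{d_1,d_2\}k/m})\norm{Q}_F\le(\sqrt{2/\pi}+\sqrt{\max\{d_1,d_2\}k/m})\gamma^2$; taking the minimum over $\Delta\fW\in\cB(\gamma)$ and over $\fW^\star\in\cW(0,2r)$ yields the asserted curvature bound.

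The main obstacle is the robust lower estimate of the master claim. Since \Cref{assumption::general-noise} only constrains a lower tail of the corruptions, they may be heavy-tailed and $\frac1m\sum_{i\in\cS}|\epsilon_i|$ is uncontrollable; the analysis must therefore pass through $|\epsilon_i-\langle A_i,D\rangle|-|\epsilon_i|\ge-|\langle A_i,D\rangle|$, which removes $\epsilon$ from the picture but forfeits the signal on the entire corrupted block, leaving only the $\Theta((1-2p)\norm{D}_F)$ residual signal. Turning this into a bound that is \emph{uniform} over the rank-$\rho$ manifold demands a covering/chaining argument in which the $\Theta(1-2p)$-sized signal must dominate both the empirical-process deviation $\Theta(\sqrt{\max\{d_1,d_2\}\rho/m})$ and the worst-case alignment of $D$ with the realized corruption set $\cS$; balancing these is what forces the threshold $m\gtrsim\max\{d_1,d_2\}\rho/(1-2p)^4$. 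Once this lemma is in hand for $\rho\in\{2r,k,2k\}$, both bullets follow from the elementary manipulations above, exactly as in the symmetric argument underlying \Cref{thm_critical_sensing_sym}.
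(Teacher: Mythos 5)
Your proposal is correct and follows essentially the same route as the paper: the same decomposition $\Delta X = L + Q$ with $\rank(L)\le 2r$ and $\|Q\|_F\lesssim\gamma^2$ for the first bullet, the same rank-$O(k)$ argument for global optimality, and the same clean/corrupted splitting with binomial concentration — your ``master claim'' is just a repackaging of the paper's use of the $\ell_1/\ell_2$-RIP (\Cref{lem_RIP}) on $\bar\cS$ and $\cS$ separately, and your Cauchy--Schwarz justification of the upper estimate at $\rho=k$ plays the role of the paper's footnote allowing $\delta\ge\sqrt{2/\pi}$ in that lemma.
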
 
{
 Combining the above two theorems leads to the following key corollary.
 \begin{corollary}
  \label{cor::asym}
    Consider~\ref{BM-sensing-asym} with measurement matrices satisfying \Cref{asp_sensing}. Suppose that the noise satisfies  \Cref{assumption::general-noise} with $p_0=\Theta(1)$ and $p=\Theta(1)< 1/2$.  Additionally, suppose that $k\gtrsim r$ and ${\max\{d_1,d_2\}r}\lesssim m\lesssim \max\{d_1,d_2\}k$. Then, with probability at least $1-\exp(-\Omega(m))-\exp(-\Omega(\max\{d_1,d_2\}r))$, all true solutions in $\cW(2r)$ are critical and strict saddle points.
 \end{corollary}}
 {This corollary reveals that, unlike symmetric matrix sensing, only rank-balanced solutions emerge as strict saddle points of~\ref{BM-sensing-asym} when ${\max\{d_1,d_2\}r}\lesssim m\lesssim \max\{d_1,d_2\}k$.}
 
 \subsection{Local Landscape of Matrix Completion}
	
 \paragraph{Symmetric matrix completion.} Next, we study the local landscape of symmetric matrix completion around its true solutions. 

 \begin{theorem}[Sub-optimality of true solutions for symmetric matrix completion]\label{thm_completion_sym}
		Consider~\ref{BM-completion-sym} with measurement matrices satisfying  \Cref{asp_completion}. Suppose that $k>r$ and there exists $({\alpha, \alpha})\in{\Omega}$ such that $E[{\alpha, \alpha}]\geq t_0$. Then, with probability at least $1-\exp(-\Omega(sd_1d_2))$, none of the true solutions in $\cW$ are local optima of $f^{\mathrm{c}}_{\ell_1}(W)$. More precisely, {for every $W^\star\in\cW$, there exists a descent direction $\Delta W$ satisfying $\norm{\Delta W}_F=1$ such that, for every $0<\gamma\lesssim \sqrt{t_0}$, we have
			\begin{equation*}
			     f^{\mathrm{c}}_{\ell_1}(W{^*+\gamma\Delta W})- f^{\mathrm{c}}_{\ell_1}(W^\star)\lesssim -\frac{\gamma^2}{sd^2}.
			\end{equation*}}
	\end{theorem}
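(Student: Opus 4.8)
The plan is to follow the parametric-perturbation-set strategy: for each true solution $W^\star\in\cW$ and each sufficiently small $\gamma>0$, I construct one explicit perturbation $W=W^\star+\Delta W$ with $\norm{\Delta W}_F=\gamma$ that decreases the loss by $\Omega(\gamma^2/(sd^2))$. The structural fact that makes this possible is over-parameterization: since $W^\star W^{\star\top}=X^\star$ and $\rank(X^\star)=r$, we have $\rank(W^\star)=r<k$, so there is a unit vector $v\in\bR^k$ in the kernel of $W^\star$, i.e. $W^\star v=0$. For any unit vector $u\in\bR^d$ I set $\Delta W:=\gamma\, u v^\top$, which satisfies $\norm{\Delta W}_F=\gamma$, and, using $W^\star v=0$ to kill the cross terms, yields the exact identity $(W^\star+\Delta W)(W^\star+\Delta W)^\top=X^\star+\gamma^2 uu^\top$.

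Next I localize the bump to the corrupted diagonal entry. Let $(l,l)\in\Psi$ be the index guaranteed by the hypothesis, with $E[l,l]\geq t_0$, and choose $u=e_l$, so $uu^\top=e_le_l^\top$ is supported on the single entry $(l,l)$. Since $Y[x,y]=X^\star[x,y]+E[x,y]$ on every observed entry, all residuals are unchanged except the one at $(l,l)$, which becomes $E[l,l]-\gamma^2$. Therefore
\begin{equation*}
    f^{\mathrm{c}}_{\ell_1}(W^\star+\Delta W)-f^{\mathrm{c}}_{\ell_1}(W^\star)=\frac{1}{m}\bigl(|E[l,l]-\gamma^2|-|E[l,l]|\bigr)=-\frac{\gamma^2}{m},
\end{equation*}
where the last step uses $0<\gamma^2\leq t_0\leq E[l,l]$, valid as soon as $\gamma\lesssim\sqrt{t_0}$. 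Since this construction is available verbatim for every $W^\star\in\cW$, it proves $\max_{W^\star\in\cW}\min_{W\in\cB_{W^\star}(\gamma)}\{f^{\mathrm{c}}_{\ell_1}(W)-f^{\mathrm{c}}_{\ell_1}(W^\star)\}\leq -\gamma^2/m$, and in particular that no $W^\star\in\cW$ is a local minimum.

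To finish, I upgrade $-\gamma^2/m$ to $-\Omega(\gamma^2/(sd^2))$. Under \Cref{asp_completion}, $m=|\Psi|$ is a sum of $d^2$ independent $\mathrm{Bernoulli}(s)$ variables with mean $sd^2$, so a multiplicative Chernoff bound gives $m\leq 2sd^2$ with probability at least $1-\exp(-\Omega(sd^2))$; on this event $-\gamma^2/m\leq-\gamma^2/(2sd^2)$. This is the only randomness invoked — the existence of a positively-corrupted diagonal observation is a deterministic hypothesis — which matches the failure probability $\exp(-\Omega(sd_1d_2))$ in the statement.

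The main content here is choosing the right perturbation; the rest is routine, so I do not expect a genuine obstacle. The one subtlety worth flagging is why the hypothesis concerns a \emph{diagonal} entry corrupted by \emph{positive} noise: the added matrix $\gamma^2 uu^\top$ is necessarily symmetric and positive semidefinite, so it cannot be supported on an isolated off-diagonal entry, and we need its sign to oppose that of $E[l,l]$ for the cancellation $|E[l,l]-\gamma^2|<|E[l,l]|$ to occur — both are ensured by $E[l,l]\geq t_0>0$ together with $\gamma^2\leq t_0$. A more general conclusion (covering off-diagonal or negatively-corrupted entries) would require a messier multi-entry perturbation and is not needed for the stated claim.
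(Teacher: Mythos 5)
Your proof is correct and follows essentially the same route as the paper: a rank-one perturbation $\Delta W=\gamma e_l v^\top$ along a kernel direction of $W^\star$ (the paper packages this as its parametric second-order perturbation set $\widehat{\cU}_{W^\star}(\gamma)$, but the chosen element is exactly this), producing $X^\star+\gamma^2 e_le_l^\top$, so only the positively corrupted diagonal residual changes and drops by $\gamma^2$ under $\gamma^2\le t_0\le E[l,l]$, after which the binomial concentration $m\lesssim sd^2$ yields the stated $-\Omega(\gamma^2/(sd^2))$ bound with probability $1-\exp(-\Omega(sd^2))$.
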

	
	Unlike our results for the matrix sensing problem, the necessary condition for the local optimality of the true solutions for symmetric matrix completion does not rely on the randomness of the noise: none of the true solutions are local or global so long as $k>r$ {(e.g., $k=r+1$)} and at least one of the diagonal entries of $\trueX$ is observed with a positive noise. The latter condition is indeed very mild and holds with a high probability, provided that the noise takes a positive value with a nonzero probability. For instance, suppose that $\bP(\err_i\geq t_0)\geq p_+>0$. Then, the probability of $E[{\alpha, \alpha}]\geq t_0$ for some $({\alpha, \alpha})\in{\Omega}$ is at least $1-(1-spp_+)^d$. 
 Moreover, this result holds even in a near-ideal scenario, where the sampling probability $s$ is equal to 1 (i.e., the entire matrix $\trueX$ is observed). 

\paragraph{Asymmetric matrix completion.} Our next theorem extends the above result to the asymmetric matrix completion under additional conditions on $\trueX$. To achieve this goal, we first introduce the notion of coherence for a matrix.
\begin{definition}
    Consider a rank-$r$ matrix $M\in \bR^{d_1\times d_2}$ and its SVD $M = U\Sigma {V}^\top$, where $U\in \bR^{d_1\times r}$ and $V\in \bR^{d_2\times r}$ are the left and right singular vectors corresponding to its nonzero singular values. The matrix $M$ is called \textbf{coherent} if 
 \begin{equation*}
     \max\left\{\max_{1\leq i\leq r}\norm{U e_i}_\infty, \max_{1\leq i\leq r}\norm{V e_i}_\infty\right\} = 1.
 \end{equation*}
\end{definition}
 Based on the above definition, a matrix is coherent if at least one of its left or right singular vectors is aligned with a standard unit vector. {It is easy to see that if the ground truth $\trueX$ is coherent, the sampling probability that is necessary for the recovery of $\trueX$ can grow to a value of 1.}\footnote{To illustrate this point, let us examine a scenario where both the left and right singular vectors of $\trueX$ align perfectly with the standard basis. As a result, $\trueX$ simplifies to a diagonal matrix. In such settings, achieving an exact recovery necessitates the observation of nearly every element within $\trueX$.} Our next theorem shows that, in the presence of noise, even a sampling probability of $s=1$ is insufficient to ensure that a coherent ground truth emerges as a critical point.
	
	\begin{theorem}[Sub-optimality of true solutions for asymmetric matrix completion]\label{thm_completion_asym}
		Consider~\ref{BM-completion-asym} with measurement matrices satisfying  \Cref{asp_completion}. Suppose that the noise satisfies \Cref{assumption::general-noise}. With probability at least $ 1-\exp(-\Omega(spp_0\max\{d_1,d_2\}(k-r)))$, the following statements hold:
		\begin{itemize}
			\item Suppose that $k> r$. {There exists a non-empty subset of the true solutions $\bar{\cW}\subseteq \cW$ that are non-critical for $f^{\mathrm{c}}_{\ell_1}(\fW)$}.  {More precisely, for every $\fW^\star\in \bar{\cW}$, there exists a descent direction $\Delta \fW$ satisfying $\norm{\Delta \fW}_F=1$ such that, for every $0<\gamma\lesssim t_0$, we have
			\begin{equation*}
			    \begin{aligned}
			        & f^{\mathrm{c}}_{\ell_1}(\fW{^*+\gamma\Delta \fW})- f^{\mathrm{c}}_{\ell_1}(\fW^\star)\lesssim  -\sqrt{\frac{k-r}{d_2}}\cdot\sqrt{\frac{pp_0}{sd_1d_2}}\cdot\gamma.
			    \end{aligned}    
			\end{equation*}}
   \item Suppose that $k>2r$. None of the true solutions in $\cW(k-1)$ are local optima of $f^{\mathrm{c}}_{\ell_1}(\fW)$. More precisely, {for every $\fW^\star\in\cW(k-1)$, there exists a descent direction $\Delta \fW$ satisfying $\norm{\Delta \fW}_F=1$ such that, for every $0<\gamma\lesssim \sqrt{t_0}$, we have
   \begin{equation*}
       f^{\mathrm{c}}_{\ell_1}(\fW{^*+\gamma\Delta\fW})- f^{\mathrm{c}}_{\ell_1}(\fW^\star)\lesssim -\frac{\gamma^2}{sd_1d_2}.
   \end{equation*}}
   \item Suppose that $k\geq r$ and $\trueX$ is coherent. None of the true solutions with a bounded norm are critical points of $f^{\mathrm{c}}_{\ell_1}(\fW)$. More precisely, for any radius $\Gamma<\infty$, let $\cB_{\mathrm{op}}(\Gamma) = \{(W_1,W_2): \max\{\norm{W_1},\norm{W_2}\}\leq \Gamma\}$. {Then, for every $\fW^\star\in\cW\cap \cB_{\mathrm{op}}(\Gamma)$, there exists a descent direction $\Delta \fW$ satisfying $\norm{\Delta \fW}_F=1$ such that, for every $0<\gamma\lesssim t_0$, we have
   \begin{equation*}
       \begin{aligned}
           & f^{\mathrm{c}}_{\ell_1}(\fW{^*+\gamma\Delta\fW})- f^{\mathrm{c}}_{\ell_1}(\fW^\star)\lesssim  -\sqrt{\frac{pp_0}{sd_1d_2^2}}\cdot\frac{\sigma_r(\trueX)\gamma}{\Gamma}.
       \end{aligned}
   \end{equation*}}
		\end{itemize}
	\end{theorem}
 \Cref{thm_completion_asym} shows that, similar to the asymmetric matrix sensing, non-critical true solutions are ubiquitous in the asymmetric matrix completion. In particular, the first statement showcases that, even if the search rank is only slightly over-parameterized (e.g., $k=r+1$), there exists at least one true solution that is non-critical. This result holds with a high probability, even with the sampling probability of $s=1$ provided that $pp_0>0$. The second statement, on the other hand, shows that when $k>2r$, none of the true solutions with $\rank(\trueW_1)+\rank(\trueW_2)<k$---including the rank-balanced solutions that satisfy $\rank(\trueW_1) = \rank(\trueW_1) = r$---are local optima of the loss function. We note that this result does not refute the possibility of existing highly rank-imbalanced solutions---in particular, those with $\rank(\trueW_1)+\rank(\trueW_2)\geq k$---as local optima. Although our result does not refute this possibility, we conjecture that highly rank-imbalanced solutions are unlikely to emerge as local optima. We leave a rigorous proof of this conjecture for future work. Finally, the last statement shows that the asymmetric matrix completion with coherent ground truth has the most undesirable landscape; even in the \textit{exactly-parameterized regime} with $k=r$ and with sampling probability of $s=1$, none of the true solutions are critical points, so long as $pp_0>0$. This complements the existing results for the asymmetric matrix completion, which show that the ground truth can be recovered exactly, provided that it is \textit{incoherent}, i.e., it does not have a significant alignment with the standard basis vectors~\cite{candes2011robust, chandrasekaran2011rank, chen2015incoherence}.

	The rest of the paper is organized as follows. 
 in Section~\ref{sec_prelim}, we introduce and elaborate on a collection of tools from variational analysis and random processes that will serve as foundational elements in our subsequent analysis. 
 In Section~\ref{sec_sym}, we prove our results for the symmetric matrix sensing and symmetric matrix completion (Theorems~\ref{thm_no_benign_sym} and~\ref{thm_completion_sym}). In Sections~\ref{seC_Asym}, we extend our analysis to the asymmetric settings (Theorems~\ref{thm_no_benign_asym} and~\ref{thm_completion_asym}). In Section~\ref{sec_lb}, we present the proofs of our provided lower bounds (Theorems~\ref{thm_critical_sensing_sym} and~\ref{thm_benign_l1}).  {In \Cref{sec::strict-saddle}, we provide the proofs of the strict saddle property of the true solutions (\Cref{cor::sym,cor::asym}). Finally, in~\Cref{sec::discussion}, we briefly discuss the algorithmic implications of our results and some future directions.}

\section{Preliminaries}\label{sec_prelim}
In Subsection~\ref{subsec_var_analysis}, we review basic definitions and results from variational analysis that will be instrumental in developing our results. Then, in Subsections~\ref{subsec_orlicz} to~\ref{subsec_useful}, we lay the groundwork for certain 
concentration results for different classes of random variables and random processes that will be used throughout our proofs.

\subsection{Basic Properties from Variational Analysis}\label{subsec_var_analysis}
	For a function $f: \bR^{n}\to \bR$, a point $\bar x$ is called a \textit{global optimal} if it corresponds to its global minimizer. Moreover, a point $\bar x$ is called a \textit{local optimal} if it corresponds to the minimum of $f(x)$ within an open ball centered at $\bar x$. The directional derivative of $f$ at point $x$ in the feasible direction $d$ is defined as 
	\begin{equation}\label{eq_dir_der}
	    f'(x,d) = \lim_{t\to 0^+}\frac{f(x+td)-f(x)}{t},\nonumber
	\end{equation}
	provided that the limit exists. If $f'(x,d)<0$, then $d$ is called a \textit{descent direction}. For a locally Lipschitz function $f$, the {\it Clarke generalized directional derivative} at the point $x$ in the feasible direction $d$ is defined as
	\begin{equation*}
		f^\circ(x,{d}) := \underset{\begin{subarray}{c}
				y\rightarrow x\\
				t\to 0^+
		\end{subarray}}{\lim\sup}\frac{f(y+t{d})-f(y)}{t}
	\end{equation*}
	provided that the limit exists. 
 It is a well-known fact that $f'(x,d)\leq f^\circ(x,d)$ for every direction $d$ if both $f'(x,d)$ and $f^\circ(x,d)$ exist~\cite[Proposition 1.4]{clarke1975generalized}. A function $f$ is called \textit{subdifferentially regular} if $f'(x,d)= f^\circ(x,d)$ for every point $x$ and direction $d$~\cite[Definition 2.3.4]{clarke1990optimization}.
	The {\it Clarke subdifferential} of $f$ at ${x}$ is defined as the following set (see~\cite[Definition 1.1 and Proposition 1.4]{clarke1975generalized}):
	\begin{equation}\label{partial}
		\partial f({x}) := \{{\psi} | f^\circ(x,{d})\geq\langle{\psi}, {d}\rangle, \forall{d}\in\mathbb{R}^{n}\}.
	\end{equation} 
	A point $\bar{x}$ is called \textit{critical} if $0\in\partial f(\bar{x})$, or equivalently, $f^\circ(\bar x,{d})\geq 0$ for every feasible direction $d$.
	The following properties of the critical points are adapted from~\cite{li2020understanding} and will be used in our subsequent arguments.
	\begin{lemma}\label{lem_critical}
		For a real-valued locally Lipschitz function $f: \bR^n\to \bR$, we have:
		\begin{enumerate}
			\item Every local minimum of $f$ is critical.
			\item Given a point $x$, suppose that $f'(x,d)\geq 0$ for every feasible $d\in \bR^n$.
			Then, $x$ is a critical point of $f$.
			\item Suppose that $f = \max_{1\leq i\leq m} g_i$ for some smooth functions $g_1,\dots,g_m$. {Then, $f'(x,d)$ exists for every point $x$ and direction $d$.} Moreover,  given some point $x$, suppose that $f'(x,d)< 0$ for some direction $d\in \bR^n$. Then, $x$ is not a critical point of $f$.
		\end{enumerate}
	\end{lemma}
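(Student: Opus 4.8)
The plan is to establish the three claims one at a time, in each case reducing to the defining property of the Clarke subdifferential~\eqref{partial} through the inequality $f'(x,d)\le f^\circ(x,d)$ recalled above. For claim~(1), I would evaluate the $\limsup$ defining $f^\circ(\bar x,d)$ along the single slice obtained by freezing the base point at $y=\bar x$ and letting $t\downarrow 0$: because $\bar x$ is a local minimum, $f(\bar x+td)-f(\bar x)\ge 0$ for every direction $d$ and every sufficiently small $t>0$, so that slice is nonnegative, which already forces $f^\circ(\bar x,d)\ge 0$ for all $d$. By~\eqref{partial}, the choice $\psi=0$ then satisfies $f^\circ(\bar x,d)\ge\langle 0,d\rangle$ for every $d$, i.e.\ $0\in\partial f(\bar x)$, so $\bar x$ is critical. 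Claim~(2) is essentially immediate: by hypothesis $f'(\bar x,d)$ exists and is nonnegative for every feasible $d$, and chaining this with $f'(\bar x,d)\le f^\circ(\bar x,d)$ gives $f^\circ(\bar x,d)\ge 0$ for all $d$, whence $0\in\partial f(\bar x)$ exactly as in claim~(1).

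For claim~(3) I would use the classical fact that a finite pointwise maximum of smooth functions is subdifferentially regular, so that $f'(\bar x,d)=f^\circ(\bar x,d)$ for every $d$. Concretely, writing $I(\bar x)=\{i:g_i(\bar x)=f(\bar x)\}$ for the active index set, both quantities equal $\max_{i\in I(\bar x)}\langle\nabla g_i(\bar x),d\rangle$: the one-sided derivative by a Danskin-type computation that uses continuity of the $g_i$ to rule out inactive indices in a neighborhood of $\bar x$, and the Clarke derivative because $\partial f(\bar x)=\mathrm{conv}\{\nabla g_i(\bar x):i\in I(\bar x)\}$, so $f^\circ(\bar x,\cdot)$ is the support function of that polytope. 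Then, given a feasible $d$ with $f'(\bar x,d)<0$, regularity yields $f^\circ(\bar x,d)<0$; if $0$ were in $\partial f(\bar x)$, then~\eqref{partial} would force $f^\circ(\bar x,d)\ge\langle 0,d\rangle=0$, a contradiction, so $\bar x$ is not critical.

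The only step that is not bookkeeping is the regularity of a finite maximum of smooth functions --- equivalently, the explicit descriptions of $\partial f$ and $f'$ at an active point --- which is standard and can be cited from~\cite{clarke1990optimization}. It is, however, the single place where the smoothness of the $g_i$ is actually used, and proving it carefully (in particular verifying that the $\limsup$ over nearby base points $y\to\bar x$ in the definition of $f^\circ(\bar x,d)$ cannot exceed $\max_{i\in I(\bar x)}\langle\nabla g_i(\bar x),d\rangle$) is the one point where I would expect to need genuine care rather than routine manipulation.
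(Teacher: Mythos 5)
Your proposal is correct and follows essentially the same route as the paper: claim (2) via $f'\le f^\circ$, and claim (3) via subdifferential regularity of a finite maximum of smooth functions (which the paper cites from the literature, as you propose). The only cosmetic difference is that for claim (1) you verify $f^\circ(\bar x,d)\ge 0$ directly from the $\limsup$ definition by freezing $y=\bar x$, whereas the paper simply cites Clarke's result that local minima are critical; both are valid.
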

	\proof{Proof}
		The first property is a direct consequence of~\cite[Proposition 2.3.2]{clarke1990optimization}. The second property follows from the basic inequality $f'(x,d)\leq f^\circ(x,d)$~\cite[Proposition 1.4]{clarke1975generalized} and the definition of the critical points. {Finally, we prove the third property. Note that the existence of $f'(x,d)$ for every point $x$ and direction $d$ can be easily verified via the definition of $f$ and directional derivative. }Moreover, due to~\cite[Example 7.28]{rockafellar2009variational}, $f$ is differentiably regular, and hence, $f^\circ(x,d) = f'(x,d)$ for every direction $d$. This implies that $x$ is not critical due to the existence of a direction $d$ for which $f^\circ(x,d) = f'(x,d)<0$. \hspace*{\fill}$\Box$\par
	\endproof

Utilizing the above properties, the following crucial lemma provides a sufficient condition for the non-criticality of a point $\bar \fW$ for the $\ell_1$-loss $f_{\ell_1}(\fW)$.
	\begin{lemma}\label{lem_critical_mr}
		{The directional derivative $f'_{\ell_1}(\bar\fW, \Delta \fW)$ exists for every point $\fW$ and direction $\Delta\fW$.} Moreover, given a point $\bar\fW$, suppose that there exists a direction $\Delta\fW$ such that $f'_{\ell_1}(\bar\fW, \Delta \fW)<0$. Then, $\bar\fW$ is not a critical point of $f_{\ell_1}(\fW)$.
	\end{lemma}
	\proof{Proof}
		We first define $\cM$ as the class of binary functions $\sigma: \{1,\dots,m\}\to \{-1,+1\}$. 
		It is easy to see that $f_{\ell_1}(\fW) = \max_{\sigma\in\cM} g_{\sigma}(\fW)$, where $g_\sigma(\fW) = \sum_{i\in m}\sigma(i)\left(y_i-\inner{A_i}{W_1W_2}\right)$. Note that $g_\sigma(\fW)$ is smooth for any choice of $\sigma\in \cM$. Therefore, the third property of \Cref{lem_critical} can be invoked to complete the proof. \hspace*{\fill}$\Box$\par
	\endproof
  {
  Finally, following \cite{davis2022proximal}, we call a critical point $\bar{x}$ of a non-differentiable function $f$ \textit{strict saddle} if there exists a direction $v$ such that $f$ decreases quadratically along $v$. This is formally defined as follows.
 
 \begin{definition}[Strict saddle point]\label{def_ssp}
     A critical point $\bar x$ of the function $f$ is called \textit{strict saddle} if there exist $v\in \bR^d, c>0, \bar\gamma>0$ such that, for every $0\leq \gamma\leq \bar\gamma$, we have
     \begin{equation*}
         f(\bar x+\gamma v)-f(\bar x)\leq -c\gamma^2.
     \end{equation*}
 \end{definition}}

	\subsection{Orlicz Random Variables and Processes}\label{subsec_orlicz}
	First, we define the notion of Orlicz norm (see \cite[Section 5.6]{wainwright2019high} and \cite[Section 11.1]{ledoux1991probability} for more details).
	\begin{definition}[Orlicz function and Orlicz norm]
		A function $\psi:\bR_+\to \bR_+$ is called an Orlicz function if $\psi$ is convex, increasing, and satisfies
		\begin{equation*}
			\psi(0)=0, \quad \psi(x)\to \infty \text{ as } x\to \infty. 
		\end{equation*}
		For a given Orlicz function $\psi$, the Orlicz norm of a random variable $X$ is defined as
		\begin{equation*}
			\norm{X}_{\psi}:=\inf\{t>0:\bE[\psi(|X|/t)]\leq 1\}.
		\end{equation*}
	\end{definition}
We will use Orlicz functions and norms to study the concentration of both sub-Gaussian and sub-exponential random variables, which are defined as follows.
	\begin{definition}[Sub-Gaussian and sub-exponential random variables]
		\label{lem::concentration-orlicz-rv}
		Define the Orlicz function $\psi_p(x)=\exp\{x^p\}-1$.
		A random variable $X$ is called {{sub-Gaussian}} if $\norm{X}_{\psi_2}<\infty$. Moreover, a random variable $X$ is called  {{sub-exponential}} if $\norm{X}_{\psi_1}<\infty$. Accordingly, the quantities $\norm{X}_{\psi_2}$ and $\norm{X}_{\psi_1}$ refer to the sub-Gaussian norm and sub-exponential norm of $X$, respectively.
	\end{definition}
	Lemmas~\ref{lem_phi}-\ref{lem_mult} establish basic properties of sub-Gaussian and sub-exponential random variables.
 \begin{sloppypar}
	\begin{lemma}[Tail bounds for sub-Gaussian and sub-exponential random variables \protect{\cite[Section 5.6]{wainwright2019high}}]\label{lem_phi}
		For a random variable $X$ with finite Orlicz norm $\norm{X}_{\psi_p}< \infty$ where $\psi_p(x)=\exp\{x^p\}-1$ and $p\geq 1$, we have 
		\begin{equation*}
			\bP\left(|X|\geq \delta\right)\leq 2\exp\left(-\frac{\delta^p}{\norm{X}_{\psi_p}^p}\right).
		\end{equation*}
	\end{lemma}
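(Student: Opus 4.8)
The plan is to combine the definition of the Orlicz norm with Markov's inequality. First I would fix $t = \norm{X}_{\psi_p}$ (assuming without loss of generality that $t > 0$; the case $t = 0$ forces $X = 0$ almost surely and the bound is trivial). By the definition of the Orlicz norm as an infimum, for every $t' > t$ we have $\bE[\psi_p(|X|/t')] \leq 1$, i.e. $\bE[\exp\{(|X|/t')^p\}] \leq 2$. Since $s \mapsto \exp\{(|X|/s)^p\}$ is decreasing in $s$ and monotone convergence applies as $t' \downarrow t$, one concludes $\bE[\exp\{(|X|/t)^p\}] \leq 2$ as well. (Alternatively, carry the $t'$ through the whole argument and take the limit at the end.)

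Next I would apply Markov's inequality to the nonnegative random variable $\exp\{(|X|/t)^p\}$. For any $\delta > 0$, the event $\{|X| \geq \delta\}$ coincides with $\{\exp\{(|X|/t)^p\} \geq \exp\{(\delta/t)^p\}\}$ because $x \mapsto \exp\{(x/t)^p\}$ is strictly increasing on $\bR_+$. Hence
\[
\bP(|X| \geq \delta) \;\leq\; \frac{\bE[\exp\{(|X|/t)^p\}]}{\exp\{(\delta/t)^p\}} \;\leq\; \frac{2}{\exp\{(\delta/t)^p\}} \;=\; 2\exp\left(-\frac{\delta^p}{\norm{X}_{\psi_p}^p}\right),
\]
which is exactly the claimed bound.

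There is no real obstacle here; the only point requiring a modicum of care is justifying $\bE[\exp\{(|X|/\norm{X}_{\psi_p})^p\}] \leq 2$ at the infimum itself rather than merely for $t' > \norm{X}_{\psi_p}$, which is handled by the monotone convergence theorem as noted above. Everything else is the standard exponential Markov (Chernoff-type) inequality specialized to the Orlicz function $\psi_p(x) = \exp\{x^p\} - 1$.
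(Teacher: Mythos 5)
Your proof is correct: the paper itself does not prove this lemma but cites it from Wainwright, and your argument (bound $\bE[\exp\{(|X|/\norm{X}_{\psi_p})^p\}]\leq 2$ from the Orlicz-norm definition, with the infimum handled by monotone convergence, then apply Markov's inequality to the exponentiated variable) is exactly the standard Chernoff-type derivation behind the cited result, yielding the stated constant $2$ and exponent $-\delta^p/\norm{X}_{\psi_p}^p$. Nothing further is needed.
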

 \end{sloppypar}
	\begin{lemma}[Sum of independent random variables  \protect{\cite[Theorems 2.6.2 and 2.8.1]{vershynin2018high}}]
		\label{lem::concentration-sum-independent}
		Let $X_1, \cdots, X_n$ be zero-mean independent random variables.
		\begin{itemize}
			\item If each $X_i$ is sub-Gaussian with sub-Gaussian norm $\norm{X_i}_{\psi_2}$, we have 
			\begin{equation*}
				\bP\left(\left|\frac{1}{n}\sum_{i=1}^{n}X_i\right|\geq\delta\right)\leq 2\exp\left(-\frac{n^2\delta^2}{\sum_{i=1}^{n}\norm{X_i}_{\psi_2}^2}\right).
			\end{equation*}
			\item If each $X_i$ is sub-exponential with sub-exponential norm $\norm{X_i}_{\psi_1}$, we have
			\begin{equation*}
				\begin{aligned}
				    &\bP\left(\left|\frac{1}{n}\sum_{i=1}^{n}X_i\right|\geq\delta\right)\leq 2\exp\!\left(-n\min\left\{\frac{\delta^2}{\max_{i}\norm{X_i}_{\psi_1}^2},\frac{\delta}{\max_{i}\norm{X_i}_{\psi_1}}\right\}\right).
				\end{aligned}
			\end{equation*} 
		\end{itemize}
	\end{lemma}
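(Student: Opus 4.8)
The plan is to establish both inequalities by the classical exponential-moment (Chernoff) method: the first bound is Hoeffding's inequality for sums of independent sub-Gaussian variables, and the second is Bernstein's inequality for sums of independent sub-exponential variables. In both cases I would bound the tail of $S := \sum_{i=1}^{n} X_i$ and then substitute $t = n\delta$ at the very end to pass to the normalized average $\frac1n S$.

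The first step is to translate the Orlicz-norm hypotheses into moment-generating-function (MGF) estimates. For a zero-mean random variable $X$ with $\norm{X}_{\psi_2} = K$, I would use Lemma~\ref{lem_phi} to extract the moment bounds $\bE|X|^{q} \leq (Cq)^{q/2} K^{q}$ (by integrating the sub-Gaussian tail bound), expand $e^{\lambda X}$ as a power series, and resum to obtain $\bE[e^{\lambda X}] \leq e^{C'\lambda^2 K^2}$ for every $\lambda \in \bR$ and a universal constant $C'$. The identical argument, starting instead from $\norm{X}_{\psi_1} = K$, yields $\bE[e^{\lambda X}] \leq e^{C'\lambda^2 K^2}$, but only in the restricted range $|\lambda| \leq c/K$, since the relevant power series converges only there; this restriction is exactly what produces the two-regime form of Bernstein's bound.

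For the sub-Gaussian claim, write $K_i := \norm{X_i}_{\psi_2}$. Independence gives $\bE[e^{\lambda S}] = \prod_i \bE[e^{\lambda X_i}] \leq \exp(C'\lambda^2 \sum_i K_i^2)$, so Markov's inequality yields $\bP(S \geq t) \leq \exp(-\lambda t + C'\lambda^2 \sum_i K_i^2)$ for all $\lambda > 0$; optimizing at $\lambda = t/(2C'\sum_i K_i^2)$ gives $\bP(S \geq t) \leq \exp(-t^2/(4C'\sum_i K_i^2))$. Applying this to $-S$ as well and taking a union bound costs a factor of $2$; setting $t = n\delta$ and absorbing the universal constant into the exponent (as permitted by the paper's $O(\cdot)$ conventions) gives the first stated bound. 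For the sub-exponential claim, set $K_i := \norm{X_i}_{\psi_1}$ and $K_{\max} := \max_i K_i$. Carrying the admissible range through, $\bE[e^{\lambda S}] \leq \exp(C'\lambda^2 \sum_i K_i^2) \leq \exp(C'\lambda^2 n K_{\max}^2)$ for $0 < \lambda \leq c/K_{\max}$, hence $\bP(S \geq t) \leq \exp(-\lambda t + C'\lambda^2 n K_{\max}^2)$ on that range. If the unconstrained optimizer $\lambda^\ast = t/(2C' n K_{\max}^2)$ is admissible (the ``small-$t$'' regime) one recovers a Gaussian-type exponent of order $-t^2/(n K_{\max}^2)$; otherwise one takes $\lambda = c/K_{\max}$, which yields an exponent of order $-t/K_{\max}$. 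Combining the two regimes, symmetrizing, and substituting $t = n\delta$ gives $\bP(|\frac1n S| \geq \delta) \leq 2\exp(-n\min\{\delta^2/K_{\max}^2,\, \delta/K_{\max}\})$, as claimed.

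The main obstacle is really the first step: converting the Orlicz-norm hypotheses into clean MGF bounds with explicit universal constants, and in particular faithfully tracking the admissible $\lambda$-window in the sub-exponential case while keeping the moment estimates uniform over $i$. Everything downstream is the routine Chernoff optimization together with a union bound over $\{S \geq t\}$ and $\{-S \geq t\}$. Since the statement is entirely standard, a fully self-contained write-up could alternatively just invoke \cite[Theorems 2.6.2 and 2.8.1]{vershynin2018high} verbatim.
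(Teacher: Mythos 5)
Your proposal is correct: the paper offers no proof of this lemma, simply citing \cite[Theorems 2.6.2 and 2.8.1]{vershynin2018high}, and your Chernoff/MGF argument (Orlicz norm $\Rightarrow$ MGF bound, optimize, two-regime split in the sub-exponential case, symmetrize) is precisely the standard proof behind those cited theorems. The only caveat is that, as you note, the bounds hold with a universal constant in the exponent, which the paper's statement suppresses anyway.
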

	\begin{lemma}[Lemma 2.6.8 in \cite{vershynin2018high}]
		\label{lem::centering}
		Suppose that $X$ is sub-Gaussian. We have $\norm{X-\bE[X]}_{\psi_2}\leq C\norm{X}_{\psi_2}$ for some constant $C>0$.
	\end{lemma}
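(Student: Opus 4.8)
\textbf{Proof proposal for Lemma~\ref{lem::centering} (sub-Gaussian centering).}

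The plan is to reduce the statement to two elementary facts: first, that a constant is trivially sub-Gaussian with $\psi_2$-norm comparable to its absolute value, and second, that the $\psi_2$-norm is a norm (in particular obeys the triangle inequality on the space of sub-Gaussian random variables), together with the standard bound relating the mean of a sub-Gaussian variable to its $\psi_2$-norm. Concretely, I would first recall that for any random variable $X$ with $\norm{X}_{\psi_2}<\infty$, Jensen's inequality applied to the convex function $\psi_2(x)=\exp\{x^2\}-1$ gives $\psi_2(|\bE X|/t)\le \psi_2(\bE|X|/t)\le \bE[\psi_2(|X|/t)]$, so that $|\bE X|\le \norm{X}_{\psi_2}$; alternatively one can integrate the tail bound of Lemma~\ref{lem_phi} to get $\bE|X|\le C_0\norm{X}_{\psi_2}$ for a universal $C_0$. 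Either way, the constant random variable $\bE[X]$ satisfies $\norm{\bE[X]}_{\psi_2}\le C_1\norm{X}_{\psi_2}$, since for a constant $a$ one has $\norm{a}_{\psi_2}=|a|/\sqrt{\ln 2}$.

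Next I would invoke the fact that $\norm{\cdot}_{\psi_2}$ is indeed a norm on the vector space of sub-Gaussian random variables — this is a general property of Orlicz norms built from a convex Orlicz function, and in particular the triangle inequality $\norm{X+Y}_{\psi_2}\le \norm{X}_{\psi_2}+\norm{Y}_{\psi_2}$ holds. Applying it with $Y=-\bE[X]$ (a constant, hence sub-Gaussian) yields
\begin{equation*}
\norm{X-\bE[X]}_{\psi_2}\le \norm{X}_{\psi_2}+\norm{\bE[X]}_{\psi_2}\le \norm{X}_{\psi_2}+C_1\norm{X}_{\psi_2}=(1+C_1)\norm{X}_{\psi_2},
\end{equation*}
which is exactly the claim with $C=1+C_1$.

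The only place requiring care — though it is standard — is the verification that the Orlicz norm associated with $\psi_2$ satisfies the triangle inequality, since $\psi_2$ is convex but not homogeneous; the argument is the usual one for gauge functionals of convex sets, using that $\{X:\bE[\psi_2(|X|)]\le 1\}$ is convex and balanced. I do not anticipate any genuine obstacle here: the lemma is a routine consequence of convexity of $\psi_2$ plus the elementary tail-to-moment estimate, and indeed it is quoted verbatim from~\cite[Lemma 2.6.8]{vershynin2018high}; the proposal above is just a self-contained reconstruction of that standard argument.
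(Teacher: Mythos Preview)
Your proposal is correct and is precisely the standard argument behind \cite[Lemma 2.6.8]{vershynin2018high}; the paper itself does not supply a proof but simply cites that reference, so there is nothing further to compare. Your Jensen step in fact yields the sharper bound $|\bE X|\le \sqrt{\ln 2}\,\norm{X}_{\psi_2}$, which combined with $\norm{a}_{\psi_2}=|a|/\sqrt{\ln 2}$ gives $C_1=1$ and hence $C=2$.
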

	\begin{lemma}[Lemma 2.7.7 in \cite{vershynin2018high}]\label{lem_mult}
		Let $X, Y$ be sub-Gaussian random variables. Then $XY$ is sub-exponential with its sub-exponential norm satisfying $\norm{XY}_{\psi_1}\leq \norm{X}_{\psi_2}\norm{Y}_{\psi_2}$.
	\end{lemma}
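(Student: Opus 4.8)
The plan is to derive the bound directly from the definitions of the Orlicz norms $\norm{\cdot}_{\psi_2}$ and $\norm{\cdot}_{\psi_1}$, using the positive homogeneity of these norms to reduce to a normalized situation and then combining Young's inequality with an AM--GM estimate applied to exponentials.

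First I would dispose of the degenerate cases. If $\norm{X}_{\psi_2}=\infty$ or $\norm{Y}_{\psi_2}=\infty$ the claimed inequality is vacuous, and if, say, $\norm{X}_{\psi_2}=0$ then $X=0$ almost surely, whence $XY=0$ a.s. and $\norm{XY}_{\psi_1}=0$. So assume $0<\norm{X}_{\psi_2},\norm{Y}_{\psi_2}<\infty$. By positive homogeneity of Orlicz norms, $\norm{cZ}_{\psi}=|c|\,\norm{Z}_{\psi}$, it suffices to replace $X$ by $X/\norm{X}_{\psi_2}$ and $Y$ by $Y/\norm{Y}_{\psi_2}$ and prove the statement under the normalization $\norm{X}_{\psi_2}=\norm{Y}_{\psi_2}=1$; the goal becomes $\norm{XY}_{\psi_1}\le 1$, and rescaling at the end recovers the general bound.

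The key step is a pointwise inequality chain. By Young's inequality $|XY|\le \tfrac12 X^2+\tfrac12 Y^2$, so $\exp(|XY|)\le \exp(\tfrac12 X^2)\exp(\tfrac12 Y^2)=\sqrt{\exp(X^2)\exp(Y^2)}$, and AM--GM applied to the nonnegative quantities $\exp(X^2)$ and $\exp(Y^2)$ gives $\sqrt{\exp(X^2)\exp(Y^2)}\le \tfrac12\bigl(\exp(X^2)+\exp(Y^2)\bigr)$. Taking expectations and unwinding the definition of $\norm{\cdot}_{\psi_2}$ with $\psi_2(x)=\exp(x^2)-1$ — the normalization means exactly $\E[\exp(X^2)]\le 2$ and $\E[\exp(Y^2)]\le 2$ — yields $\E[\exp(|XY|)]\le \tfrac12(2+2)=2$, i.e. $\E[\psi_1(|XY|/1)]=\E[\exp(|XY|)-1]\le 1$. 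Hence $t=1$ is admissible in the definition of $\norm{XY}_{\psi_1}$, so $\norm{XY}_{\psi_1}\le 1$.

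The one genuinely delicate point I would take care with is the measure-theoretic bookkeeping around the infimum defining the Orlicz norm: one must argue that $\norm{X}_{\psi_2}=1$ actually forces $\E[\exp(X^2)]\le 2$ (infimum attained) rather than merely $\E[\exp(X^2/t^2)]\le 2$ for all $t>1$. This follows from the monotone convergence theorem as $t\downarrow 1$, since $\exp(X^2/t^2)\uparrow\exp(X^2)$ pointwise. Everything else (Young, AM--GM, homogeneity of the norm) is routine, so no substantial obstacle remains.
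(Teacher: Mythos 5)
Your proof is correct, and since the paper gives no proof of this lemma (it simply cites Lemma 2.7.7 of Vershynin's book), your argument—normalize by homogeneity, apply Young's inequality pointwise to $|XY|$ and again (as AM--GM) to the exponentials, then take expectations—is exactly the standard proof from that reference. The extra care about the infimum via monotone convergence is a nice touch and closes the only potential gap.
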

	Next, we introduce the notion of Orlicz process.
	\begin{definition}[Orlicz process, Definition 5.35 in \cite{wainwright2019high}]
		A set of zero-mean random variables $\{X_t, t \in \cT\}$ (also known as a stochastic process) is an Orlicz process with Orlicz norm $\psi_p$ (also known as $\psi_p$-process) with respect to a metric $d$ if
		\begin{equation*}
			\norm{X_t-X_{t'}}_{\psi_p}\leq d(t, t')\quad \text{for all }t, t'\in \cT.
		\end{equation*}
	\end{definition}
	
	According to the above definition, the size of the set $\cT$ may be infinite. To control the behavior of a $\psi_p$-process, we first need the notions of $\epsilon$-covering and covering number.
	
	\begin{definition}[Covering and covering number]
		A set $\cN_\err(\cT)$ is called an $\err$-covering on $(\cT,d)$ if for every $t\in \cT$, there exists $\pi(t) \in \cN_\err(\cT)$ such that $d(t,\pi(t)) \leq \err$. The covering number $\cN(\cT, d, \err)$ is defined as the smallest cardinality of an $\err$-covering for $(\cT,d)$:
		\begin{equation*}
		        \cN(\cT, d, \err) := \inf\{|\cN_\err(\cT)| : \cN_\err(\cT) \text{ is an } \err\text{-covering}\}.
		\end{equation*}
	\end{definition}
	
	\begin{theorem}[Concentration of Orlicz process, Theorem 5.36 in \cite{wainwright2019high}]
		\label{thm::concentration-Orlicz-process}
		Let $\{X_t, t \in \cT\}$ be a $\psi_p$-process with respect to the metric $d$. Then, there is a universal constant $C>0$ such that
		\begin{equation*}
			\bP\left(\sup_{t\in \cT}X_t\geq C\cJ_p(0, D)+\delta\right)\leq \exp({-{\delta^p}/{D^p}}).
		\end{equation*}
		Here $D=\sup_{t, t'\in \cT}d(t, t')$ is the diameter of the set $\cT$ and $\cJ_p(\delta, D)=\int_{\delta}^{D}\log^{1/p}\left(1+\cN(\cT, d, \err)\right)d\err$ is called the generalized Dudley entropy integral.
	\end{theorem}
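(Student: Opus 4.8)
The plan is to prove this by the classical \emph{Dudley chaining} argument; since the statement is the standard entropy bound for Orlicz processes, the task is essentially to reproduce the proof of~\cite[Theorem 5.36]{wainwright2019high}. The idea is to approximate an arbitrary index $t\in\cT$ by points drawn from a dyadic family of nets, so that $X_t-X_{t_0}$ telescopes into a sum of controllable increments, each of which is handled by a union bound together with the $\psi_p$ tail inequality of Lemma~\ref{lem_phi}, while the net cardinalities assemble into the Dudley integral via a Riemann-sum comparison.

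First I would set up the chaining. We may assume $D<\infty$ (otherwise there is nothing to prove) and, by a routine reduction, that $\cT$ is finite: the supremum over $\cT$ equals the supremum over a countable dense subset (separability) and is the increasing limit of suprema over finite subsets. For $k=0,1,2,\dots$ set $\eta_k:=D\,2^{-k}$, let $T_k$ be a minimal $\eta_k$-covering with $|T_k|=\cN(\cT,d,\eta_k)=:N_k$, and let $\pi_k:\cT\to T_k$ satisfy $d(t,\pi_k(t))\le\eta_k$. Since the diameter is $D$, a singleton is a $D$-net, so $N_0=1$ and $T_0=\{t_0\}$; moreover $N_k\ge 2$ for every $k\ge1$. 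As $\pi_k(t)=t$ for $k$ large, we obtain the telescoping identity
\begin{equation*}
    X_t-X_{t_0}=\sum_{k\ge1}\bigl(X_{\pi_k(t)}-X_{\pi_{k-1}(t)}\bigr),
    \qquad\text{hence}\qquad
    \sup_{t\in\cT}\bigl(X_t-X_{t_0}\bigr)\le\sum_{k\ge1}M_k,
\end{equation*}
where $M_k:=\max\{X_u-X_v:\ u=\pi_k(t),\ v=\pi_{k-1}(t)\ \text{for some }t\in\cT\}$ is a maximum of at most $N_kN_{k-1}\le N_k^2$ random variables, each of the form $X_u-X_v$ with $\norm{X_u-X_v}_{\psi_p}\le d(u,v)\le d(u,t)+d(t,v)\le 3\eta_k=:b_k$ (and in fact $b_1=D$ at the first level, since both endpoints lie in $\cT$).

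Next I would bound each level and sum. By Lemma~\ref{lem_phi} and a union bound, $\bP(M_k\ge\mu_k+s)\le 2N_kN_{k-1}\exp(-(\mu_k+s)^p/b_k^p)$ for $s\ge0$; taking $\mu_k:=C_1 b_k\log^{1/p}(eN_kN_{k-1})$ for a large universal constant $C_1$ and using $(\mu_k+s)^p\ge\mu_k^p+s^p$ (valid for $p\ge1$) reduces this to $\bP(M_k\ge\mu_k+s)\le\exp(-s^p/b_k^p)$. Since $N_{k-1}\le N_k$ and $N_k\ge2$, we have $\log^{1/p}(eN_kN_{k-1})\lesssim\log^{1/p}(1+N_k)$, and because $\eta\mapsto\log^{1/p}(1+\cN(\cT,d,\eta))$ is non-increasing a Riemann-sum comparison gives $b_k\log^{1/p}(1+N_k)\lesssim\int_{\eta_{k+1}}^{\eta_k}\log^{1/p}(1+\cN(\cT,d,\eta))\,d\eta$, so that $\sum_{k\ge1}\mu_k\le C\,\cJ_p(0,D)$. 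Finally I would allocate a deviation budget $\{t_k\}_{k\ge1}$ with $\sum_k t_k\le\delta$ and $t_k^p/b_k^p$ growing (roughly) linearly in $k$, e.g.\ $t_k\asymp b_k(\delta^p/D^p+k)^{1/p}$: on the event $\{M_k\le\mu_k+t_k\ \text{for all }k\}$ one has $\sup_t(X_t-X_{t_0})\le\sum_k\mu_k+\sum_k t_k\le C\,\cJ_p(0,D)+\delta$, while its complement has probability at most $\sum_{k\ge1}\exp(-t_k^p/b_k^p)\le\exp(-\delta^p/D^p)$ after absorbing fixed factors into $C$. Since the process is centered we take $t_0$ with $X_{t_0}\equiv0$ (equivalently, we work with the centered process $X_t-X_{t_0}$, which is what the applications require), and the stated bound for $\sup_t X_t$ follows.

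I expect the only real difficulty to be the constant bookkeeping in the last step: the increment scales $b_k$ sum to a constant multiple of $D$ rather than to $D$ itself, so spreading the deviation budget $\delta$ across the infinitely many levels while keeping the tail exponent pinned at the single scale $D$ requires the standard careful allocation and, strictly speaking, absorbs a universal constant into the scale in $\exp(-\delta^p/D^p)$. Everything else is routine: the telescoping identity and the measurability of the supremum are handled by the separability / finite-approximation reduction, and the appearance of the \emph{full} Dudley integral $\cJ_p(0,D)$ with lower limit $0$ is precisely why the dyadic net must be carried down to scale $0$.
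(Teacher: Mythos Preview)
The paper does not prove this theorem; it is quoted as Theorem~5.36 from \cite{wainwright2019high} and used as a black-box tool in the proofs of Lemmas~\ref{lem::lower-bound-Gaussian-process} and~\ref{lem::sub-exponential-process}. Your chaining argument is the standard proof (and is essentially Wainwright's), so there is nothing to compare against here.

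One small point worth tightening: your last paragraph handles the passage from $\sup_t(X_t-X_{t_0})$ to $\sup_t X_t$ by asserting ``since the process is centered we take $t_0$ with $X_{t_0}\equiv 0$.'' Zero mean does not give you a point where the process vanishes identically; the clean fix is the usual one of adjoining an index $t_0$ with $X_{t_0}:=0$ and extending the metric by $d(t,t_0):=\norm{X_t}_{\psi_p}$, which at most doubles the diameter and is absorbed into the universal constant. The paper's downstream uses (e.g.\ \eqref{eq::111}) apply the bound to $\sup_X\bigl(\tfrac{1}{m}\sum_i w_i\langle A_i,X\rangle^2-\tfrac{1}{m}\sum_i w_i\bigr)$, i.e.\ to a process already centered at a fixed reference point, so this convention is exactly what is needed.
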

	By providing tractable upper bounds on $D$ and $\cJ_p(\delta, D)$, one can control the supremum of a $\psi_p$-process. In our next section, we provide specific sub-classes of $\psi_p$-process for which these quantities can be controlled efficiently.
	
	\subsection{Concentration Bounds for Gaussian Processes over Grassmannian Manifold}\label{subsec_grassmanian}
	
	In our subsequent analysis, Gaussian processes defined over the Grassmannian manifold, which are a specific category of Orlicz processes, will play a pivotal role. To set the stage, we start with the definition of a Gaussian process. 
	\begin{definition}[Gaussian process]
		A random process $\{X_t\}_{t\in \cT}$ is called a Gaussian process if, for any finite subset $\cT_0 \subset \cT$, the random vector $\{X_t\}_{t\in \cT_0}$ has a normal distribution.
	\end{definition}
	Our next lemma characterizes the expectation of the supremum of a Gaussian process in terms of its covering number.
	\begin{theorem}[Sudakov's minoration inequality, Theorem 5.30 in \cite{wainwright2019high}]
		\label{thm::sudakov}
		Let $\{X_t\}_{t\in \cT}$ be a zero-mean Gaussian process. Then, for any $\err\geq 0$, we have
		\begin{equation*}
			\bE\left[\sup_{t\in \cT}X_t\right]\geq \frac{\err}{2}\sqrt{\log\cN(\cT, d, \err)}.
		\end{equation*}
		Here, the metric $d$ is defined as $d(t_1, t_2)=\left(\bE\left[(X_{t_1}-X_{t_2})^2\right]\right)^{1/2}$ for any $t_1, t_2\in \cT$.
	\end{theorem}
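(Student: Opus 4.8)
The plan is to derive Sudakov's minoration from a Gaussian comparison inequality, so that only the comparison step and an elementary estimate on the maximum of i.i.d.\ Gaussians remain. Fix $\err>0$ (the case $\err=0$ being vacuous) and abbreviate $N=\cN(\cT,d,\err)$; it suffices to treat $N<\infty$, since otherwise one restricts to finite subsets of $\cT$ and lets their covering numbers tend to infinity.

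\textbf{Step 1 (reduction to a finite separated set).} First I would pick a maximal $\err$-separated subset $\{t_1,\dots,t_M\}\subseteq\cT$, i.e.\ $d(t_i,t_j)\geq \err$ for all $i\neq j$ with $M$ as large as possible. By maximality this set is automatically an $\err$-net of $(\cT,d)$, so $M\geq N$. Since $\sup_{t\in\cT}X_t\geq \max_{1\leq i\leq M}X_{t_i}$ pointwise, it is enough to lower bound $\bE[\max_{i\leq M}X_{t_i}]$.

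\textbf{Step 2 (Gaussian comparison).} Let $g_1,\dots,g_M$ be i.i.d.\ standard normal, and set $Y_i=(\err/\sqrt{2})\,g_i$. Then $\bE[(Y_i-Y_j)^2]=\err^2\leq \bE[(X_{t_i}-X_{t_j})^2]$ for $i\neq j$ (using $d(t_i,t_j)\ge\err$), with equality when $i=j$. By the Sudakov--Fernique comparison inequality, this ordering of increment variances gives $\bE[\max_{i\leq M}X_{t_i}]\geq \bE[\max_{i\leq M}Y_i]=(\err/\sqrt{2})\,\bE[\max_{i\leq M}g_i]$. If Sudakov--Fernique is not available as a black box, I would prove it by Gaussian interpolation: for independent copies $X',Y'$ put $Z(u)=\sqrt{u}\,X'+\sqrt{1-u}\,Y'$, replace the maximum by the smooth surrogate $F_\beta(x)=\beta^{-1}\log\sum_i e^{\beta x_i}$, differentiate $u\mapsto \bE[F_\beta(Z(u))]$, and use Gaussian integration by parts to check that the sign of the derivative is controlled exactly by the differences of increment variances; sending $\beta\to\infty$ recovers the statement.

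\textbf{Step 3 (maximum of i.i.d.\ Gaussians and conclusion).} It remains to show $\bE[\max_{i\leq M}g_i]\geq \tfrac{1}{\sqrt2}\sqrt{\log M}$ for $M\geq 2$; combined with Steps 1--2 this yields $\bE[\sup_{t\in\cT}X_t]\geq \tfrac{\err}{2}\sqrt{\log M}\geq \tfrac{\err}{2}\sqrt{\log N}$. I would get the i.i.d.\ bound from anti-concentration: for a threshold $a\asymp\sqrt{\log M}$, the Gaussian lower tail bound makes $\bP(\max_i g_i\geq a)=1-(1-\bP(g_1\geq a))^M$ bounded below by a constant, and since $\max_i g_i\geq -|g_1|$ one has $\bE[\max_i g_i]\geq a\,\bP(\max_i g_i\geq a)-\bE|g_1|$; tuning $a$ and tracking the constants gives the factor $1/\sqrt2$. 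The delicate point is Step 2 — proving Sudakov--Fernique (a one-sided Slepian-type bound not requiring matching variances) via the interpolation / integration-by-parts computation, and then carrying the universal constants through Steps 2--3 carefully enough that the final bound reads exactly $\err/2$ rather than an unspecified $c\,\err$. Steps 1 and 3 are routine.
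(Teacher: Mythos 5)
A point of context first: the paper does not prove this result---it is quoted verbatim from \cite{wainwright2019high}---so your argument has to stand on its own. Your route (maximal $\err$-separated set, so that the packing size $M$ dominates $\cN(\cT,d,\err)$; Sudakov--Fernique comparison with i.i.d.\ $\cN(0,\err^2/2)$ variables; then a lower bound on the expected maximum of i.i.d.\ Gaussians) is exactly the standard proof of Sudakov minoration, and Steps 1--2 are correct, including the remark that Sudakov--Fernique (not Slepian) is the right one-sided comparison.

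The genuine gap is Step 3, and it is not merely a matter of ``carrying the constants carefully.'' The inequality you need, $\bE[\max_{i\le M}g_i]\ge \sqrt{(\log M)/2}$ for all $M\ge 2$, is false at $M=2$: $\bE[\max(g_1,g_2)]=\tfrac12\bE|g_1-g_2|=1/\sqrt{\pi}\approx 0.564$, while $\sqrt{(\log 2)/2}\approx 0.589$. Consequently, when $\cN(\cT,d,\err)=2$ your chain yields only $(\err/\sqrt{2})\cdot(1/\sqrt{\pi})\approx 0.40\,\err$, short of the target $(\err/2)\sqrt{\log 2}\approx 0.42\,\err$. In fact no repair of Step 3 can recover the literal factor $\err/2$: take $\cT=\{t_1,t_2\}$ with $X_{t_1},X_{t_2}$ i.i.d.\ $\cN(0,\sigma^2)$ and $d(t_1,t_2)=\sigma\sqrt{2}$ just above $\err$; then $\cN(\cT,d,\err)=2$ but $\bE[\sup_t X_t]=\tfrac12\,d(t_1,t_2)\sqrt{2/\pi}$, which falls below $(\err/2)\sqrt{\log 2}$. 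So the constant $1/2$ in the quoted statement must be read as (or replaced by) a slightly smaller universal constant; that weaker form is all the paper ever uses, since in \Cref{lem::lower-bound-Gaussian-process} the $1/2$ is absorbed into the unspecified constant $c_3$. Separately, the anti-concentration bound you sketch, $\bE[\max_i g_i]\ge a\,\bP(\max_i g_i\ge a)-\bE|g_1|$, is valid but lossy: after subtracting $\bE|g_1|\approx 0.80$ it gives nothing useful for small $M$ and only an unspecified $c\sqrt{\log M}$ in general. Bottom line: your argument correctly proves Sudakov minoration with an unspecified universal constant in place of $1/2$ (which suffices for every application in the paper), but Step 3 as stated is false and the exact constant in the quoted statement cannot be obtained along this route.
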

	\begin{lemma}[Concentration of Gaussian Process, Lemma 6.12 in \cite{van2014probability}]
		\label{lem::concentration-gaussian-process}
		Let $\{X_t\}_{t\in \cT}$ be a Gaussian process. Then $\sup_{t\in \cT} X_t$ is sub-Gaussian with sub-Gaussian norm $\sup_{t\in \cT}\norm{X_t}_{\psi_2}$.
	\end{lemma}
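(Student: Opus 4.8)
The plan is to observe that a centered Gaussian process is a special case of a $\psi_2$-process and to invoke the machinery already in place, namely the concentration of Orlicz processes (\Cref{thm::concentration-Orlicz-process}); a sharper constant, if desired, then comes from the Gaussian concentration inequality for Lipschitz functions. A preliminary caveat is that the clean content is a statement about the \emph{fluctuation} of the supremum: writing $Z=\sup_{t\in\cT}X_t$, what one actually controls is $\norm{Z-\bE[Z]}_{\psi_2}\lesssim\sup_{t\in\cT}\norm{X_t}_{\psi_2}$, since $\bE[Z]$ can be arbitrarily large relative to $\sup_{t\in\cT}\norm{X_t}_{\psi_2}$ (for instance, for $N$ i.i.d.\ standard Gaussians), so the lemma is meant to be used in tandem with a separate bound on $\bE[Z]$.

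First I would check that $\{X_t\}_{t\in\cT}$, equipped with $d(t,t'):=\norm{X_t-X_{t'}}_{\psi_2}$, is a $\psi_2$-process in the sense of the excerpt: the defining inequality $\norm{X_t-X_{t'}}_{\psi_2}\le d(t,t')$ is an equality, and the diameter obeys $D=\sup_{t,t'\in\cT}d(t,t')\le 2\sup_{t\in\cT}\norm{X_t}_{\psi_2}$ by the triangle inequality. Then \Cref{thm::concentration-Orlicz-process} with $p=2$ gives, for every $\delta\ge0$, $\bP(Z\ge C\cJ_2(0,D)+\delta)\le\exp(-\delta^2/D^2)$; the deterministic quantity $C\cJ_2(0,D)$ is Dudley's entropy bound and dominates $\bE[Z]$ up to a universal factor, while the displayed tail says that $Z-C\cJ_2(0,D)$ is sub-Gaussian with $\psi_2$-norm $\lesssim D\lesssim\sup_{t\in\cT}\norm{X_t}_{\psi_2}$, which is the asserted conclusion. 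To replace $D$ by the sharp variance proxy $\sigma:=\sup_{t\in\cT}\sqrt{\Var(X_t)}$ (only a constant-factor gain), I would instead first reduce to a finite index set $\cT_0$ by separability of the process, passing to the limit with Fatou's lemma; then write $(X_t)_{t\in\cT_0}\stackrel{d}{=}(\inner{a_t}{g})_{t\in\cT_0}$ for a standard Gaussian $g\in\bR^{|\cT_0|}$ and vectors $a_t$ with $\norm{a_t}_2=\sqrt{\Var(X_t)}\le\sigma$; observe that $x\mapsto\max_{t\in\cT_0}\inner{a_t}{x}$ is convex and $\sigma$-Lipschitz; and apply the Gaussian concentration inequality for Lipschitz functions (see \cite{wainwright2019high}). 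For a non-centered process one subtracts the means, whose effect is absorbed into $\sup_{t\in\cT}\norm{X_t}_{\psi_2}$ and into the centering constant.

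The main obstacle is not the concentration step itself but the bookkeeping around it: handling infinite and possibly non-separable index sets rigorously — via the standard modification/separability theory for Gaussian processes, so that $\sup_{t\in\cT}X_t$ agrees almost surely with a supremum over a fixed countable subset on which the finite-set arguments can be run and limits taken — and stating the conclusion faithfully, since the sub-Gaussian \emph{scale} is $\sup_{t\in\cT}\norm{X_t}_{\psi_2}$ but the \emph{centering} is the possibly much larger mean $\bE[\sup_{t\in\cT}X_t]$, so ``sub-Gaussian with norm $\sup_{t\in\cT}\norm{X_t}_{\psi_2}$'' must be understood as a statement about concentration around that mean. The remaining ingredients are entirely routine.
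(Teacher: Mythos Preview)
The paper does not prove this lemma; it is quoted verbatim from the cited reference \cite{van2014probability} and used as a black box. Your sketch is a correct and standard route --- reduce to a finite index set, realize the process as $\inner{a_t}{g}$ for a standard Gaussian $g$, and apply Gaussian Lipschitz concentration to the $\sigma$-Lipschitz map $x\mapsto\max_t\inner{a_t}{x}$ --- which is essentially how the cited reference establishes the result (the Borell--TIS inequality). Your caveat that the sub-Gaussian control is on the \emph{fluctuation} $\sup_t X_t-\bE[\sup_t X_t]$ rather than on $\sup_t X_t$ itself is correct and, in fact, is exactly how the paper applies the lemma in the proof of \Cref{lem::lower-bound-Gaussian-process}: there $\bE[\sup_t X_t]$ is bounded below separately via Sudakov minoration, and the lemma is invoked only to control the deviation below the mean.
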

	
	Next, we provide the definition of Grassmanian manifold.
 \begin{equation}\label{eq_grassmanian}
     \cG(n,p) = \{P\in \bR^{n\times n}: P^\top \!=\! P, P^2\!=\!P, \rank(P) = p\}.
 \end{equation}
	Our next lemma characterizes the covering number of the Grassmannian manifold $\cG(n,p)$.
	
	\begin{lemma}[Covering number for Grassmannian manifold, Proposition 8 in \cite{szarek1982nets}]
		\label{lem::covering-number-orthogonal}
		For a Grassmannian manifold $\cG(n, p)$, define the metric $d(P_1, P_2)= \norm{P_1-P_2}_F$ for any $P_1,P_2\in\cG(n,p)$. Then, for any $0<\err<\sqrt{2\min\{p,n-p\}}$, we have
		\begin{equation*}
			\left(\frac{c_1\sqrt{p}}{\err}\right)^{p(n-p)}\leq\cN(\cG(n, p), d, \err)\leq \left(\frac{c_2\sqrt{p}}{\err}\right)^{p(n-p)},
		\end{equation*}
		for some constants $c_1, c_2>0$.
	\end{lemma}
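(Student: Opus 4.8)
The plan is to prove both inequalities by the volumetric method: reduce the covering number of $\cG(n,p)$ to the mass of a metric ball under the invariant probability measure, and then estimate that mass. Assume without loss of generality that $p\le n-p$ (otherwise apply the isometry $P\mapsto I_n-P$, which carries $\cG(n,p)$ onto $\cG(n,n-p)$ and preserves $d$), so that $\sqrt p=\sqrt{\min\{p,n-p\}}$. View $\cG(n,p)$ as a $p(n-p)$-dimensional Riemannian submanifold of the symmetric matrices with the Frobenius inner product, and let $d_{\mathrm g}$ be its geodesic distance; since all principal angles between two $p$-subspaces lie in $[0,\pi/2]$ and $\sin\theta\le\theta\le\tfrac{\pi}{2}\sin\theta$ on that interval, we have $d\le d_{\mathrm g}\le\tfrac{\pi}{2}\,d$, so it is enough to bound $\cN(\cG(n,p),d_{\mathrm g},\cdot)$ up to multiplicative factors of the form $C^{p(n-p)}$ with $C$ absolute (these get absorbed into $c_1,c_2$ at the end). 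Let $\mu$ be the $O(n)$-invariant probability measure on $\cG(n,p)$; since $O(n)$ acts transitively by isometries via $P\mapsto QPQ^\top$, the quantity $v(\rho):=\mu\bigl(\{P: d_{\mathrm g}(P,P_0)\le\rho\}\bigr)$ does not depend on $P_0$. A maximal $\err$-separated subset of $\cG(n,p)$ is an $\err$-net whose open $\err/2$-balls are pairwise disjoint, and the $\err$-balls centered at any $\err$-net cover $\cG(n,p)$; hence $1/v(\err)\le\cN(\cG(n,p),d_{\mathrm g},\err)\le 1/v(\err/2)$, and the task reduces to showing that $v(\rho)$ equals $(\rho/\sqrt p)^{p(n-p)}$ up to a $C^{p(n-p)}$ factor.

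Two facts about $\cG(n,p)$ yield this. First, its geodesic diameter $D$ satisfies $D\asymp\sqrt p$: the Frobenius diameter equals $\sqrt{2p}$ exactly, since all $p$ principal angles can be $\pi/2$ (as $2p\le n$) while $\sin^2\theta_i\le 1$, and $D$ is comparable to it. Second, $\cG(n,p)$ is a compact symmetric space of compact type, hence has nonnegative sectional curvature, so by Bishop--Gromov the map $\rho\mapsto\mathrm{Vol}_{\mathrm g}\bigl(B(P_0,\rho)\bigr)/\bigl(\omega_{p(n-p)}\rho^{p(n-p)}\bigr)$ is non-increasing on $(0,\infty)$, where $\mathrm{Vol}_{\mathrm g}$ is Riemannian volume and $\omega_m$ the volume of the Euclidean unit ball in $\bR^m$. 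This ratio tends to $1$ as $\rho\to 0$ (definition of Riemannian volume) and equals $\mathrm{Vol}_{\mathrm g}(\cG(n,p))^{-1}$ at $\rho=D$ (the total mass is $1$); with $d=p(n-p)$, monotonicity therefore gives $(\rho/D)^{d}\le v(\rho)\le\omega_{d}\rho^{d}/\mathrm{Vol}_{\mathrm g}(\cG(n,p))$ for all $0<\rho<D$. It remains to show that $\mathrm{Vol}_{\mathrm g}(\cG(n,p))$ equals $\omega_{d}\,p^{d/2}$ up to a $C^{d}$ factor, since then both bounds become comparable to $(\rho/\sqrt p)^{d}$ up to $C^{d}$ factors. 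For this I would use $\cG(n,p)\cong\cO_{n,p}/O(p)$ together with the classical product-of-spheres formulas $\mathrm{Vol}(\cO_{n,p})=\prod_{j=n-p+1}^{n}\mathrm{Vol}(S^{j-1})$ and $\mathrm{Vol}(S^{j-1})=2\pi^{j/2}/\Gamma(j/2)$, which give (up to a bounded metric-normalization factor per dimension)
\[
\mathrm{Vol}_{\mathrm g}(\cG(n,p))\ \asymp\ \pi^{d/2}\cdot\frac{\prod_{j=1}^{p}\Gamma(j/2)}{\prod_{j=n-p+1}^{n}\Gamma(j/2)},
\]
and a Stirling estimate of this ratio of $\Gamma$-products yields $\mathrm{Vol}_{\mathrm g}(\cG(n,p))^{1/d}\asymp 1/\sqrt{n}\asymp\sqrt p\cdot\omega_{d}^{1/d}$ (using $p\le n-p$), which is exactly the required estimate. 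Chaining everything, $\cN(\cG(n,p),d,\err)\asymp 1/v(\err)\asymp(\sqrt p/\err)^{d}$ up to $C^{d}$ factors for $0<\err<\sqrt{2p}=\sqrt{2\min\{p,n-p\}}$, and readjusting $c_1,c_2$ finishes the proof.

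The main obstacle is precisely the volume estimate — equivalently, understanding why the numerator is $\sqrt p$ rather than $\sqrt{p(n-p)}$. A naive argument that covers $\cG(n,p)$ by graph charts over the $\binom{n}{p}$ coordinate $p$-subspaces and counts each chart by a Euclidean net overshoots by roughly a $p^{d/2}$ factor, because the slope matrix realizing a generic subspace in such a chart can only be forced to have Frobenius norm $\asymp\sqrt n$, not $\asymp\sqrt p$; the sharp $\sqrt p$ reflects the global geometry of $\cG(n,p)$, encoded in its Riemannian volume (or, in Szarek's original treatment, extracted by a delicate combinatorial count). One also cannot shortcut the lower bound via Sudakov's minoration (\Cref{thm::sudakov}) applied to the Gaussian process $P\mapsto\inner{G}{P}$: its canonical metric is exactly $d$, but $\bE\sup_{P}\inner{G}{P}$ is the expected sum of the $p$ largest eigenvalues of a Gaussian symmetric matrix, which is only $O(p\sqrt n)$, so Sudakov yields merely $\log\cN\lesssim p^{2}n/\err^{2}$ — polynomial rather than logarithmic in $1/\err$, and with the wrong power of $p$. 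A secondary, milder point is that the Riemannian-volume and curvature inputs must be made uniform in $n$ and $p$ (no hidden dimension-dependent constants); alternatively, one can replace the Bishop--Gromov step by a direct computation of $v(\rho)$ for small $\rho$ through the graph chart around $P_0$, whose differential scales the Frobenius norm by $\sqrt 2$, paying the same $C^{d}$ price for the chordal-versus-geodesic comparison.
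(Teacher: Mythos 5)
The paper does not prove this lemma at all -- it is imported verbatim as Proposition 8 of \cite{szarek1982nets} -- so there is no internal proof to compare against; what you have written is a self-contained volumetric derivation, and its skeleton is sound. The reduction to invariant ball masses ($1/v(\err)\le\cN\le 1/v(\err/2)$), the chordal-versus-geodesic comparison $d\le d_{\mathrm g}\le\tfrac{\pi}{2}d$ via $\sin\theta\le\theta\le\tfrac{\pi}{2}\sin\theta$, the exact chordal diameter $\sqrt{2\min\{p,n-p\}}$, and the Bishop--Gromov monotonicity (legitimate here, since a compact-type symmetric space has nonnegative sectional, hence Ricci, curvature, and rescaling the metric by the constant $\sqrt 2$ from the projection embedding does not affect this) all fit together correctly, and they reduce everything to the single claim $\mathrm{Vol}(\cG(n,p))^{1/(p(n-p))}\asymp 1/\sqrt{\max\{p,n-p\}}$. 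That claim is true and your proposed route to it works: with $d=p(n-p)$, the Stiefel-over-$O(p)$ formula gives $\tfrac1d\log\mathrm{Vol}=\tfrac12\log\pi+\tfrac1d\bigl(\sum_{j\le p}\log\Gamma(j/2)-\sum_{j=n-p+1}^{n}\log\Gamma(j/2)\bigr)$, and Stirling makes the difference equal $-\tfrac12\log n+O(1)$ \emph{uniformly} in $(n,p)$ with $p\le n-p$ (the leftover terms are $\tfrac{p}{4(n-p)}\log\tfrac pn$ and $O(\tfrac{\log n}{n-p})$, both bounded), which is exactly what you need since $\omega_d^{1/d}\asymp 1/\sqrt{p(n-p)}$. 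You also correctly flag the two traps: Sudakov's minoration (\Cref{thm::sudakov}) cannot supply the lower bound -- which matters, because the paper uses this very lemma to feed Sudakov in \Cref{lem::lower-bound-Gaussian-process}, so that shortcut would be circular as well as too weak -- and a chart-by-chart Euclidean count loses a factor of order $p^{d/2}$. The only remaining work to make this a complete proof is bookkeeping you have already identified: carrying out the Stirling estimate with explicit uniform constants, and tracking the constant-per-dimension factors from the $\sqrt 2$ metric normalization and the $d$ versus $d_{\mathrm g}$ conversion into $c_1,c_2$. Incidentally, this volumetric strategy is close in spirit to Szarek's original argument for homogeneous spaces of the orthogonal group, so your route is not only valid but essentially a modern reconstruction of the cited source.
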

	
	\begin{lemma}[Lower bound of Gaussian process over Grassmannian manifold]
		\label{lem::lower-bound-Gaussian-process}
		Consider the Gaussian process $\sup_{X\in \cG(n, p)}\inner{A}{X}$, where $A$ is a standard Gaussian matrix and $p\leq n/2$. Then, with probability at least $1-\exp({-\Omega(np)})$, we have 
		\begin{equation*}
			\sup_{X\in \cG(n, p)}\inner{A}{X}\geq \frac{c_3}{2}\sqrt{np^2},
		\end{equation*}
		for $c_3=\sqrt{\log(c_1)/8}$, where $c_1$ is the constant appeared in \Cref{lem::covering-number-orthogonal}.
	\end{lemma}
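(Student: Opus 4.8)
The plan is to derive a lower bound on $\bE\!\left[\sup_{X\in\cG(n,p)}\inner{A}{X}\right]$ via Sudakov's minoration inequality (\Cref{thm::sudakov}) fed with the covering-number estimate for the Grassmannian (\Cref{lem::covering-number-orthogonal}), and then upgrade this expectation bound to a high-probability bound using the sub-Gaussian concentration of the supremum of a Gaussian process (\Cref{lem::concentration-gaussian-process}).

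First I would set up the process $\{X_P := \inner{A}{P}\}_{P\in\cG(n,p)}$ and record its two relevant statistics. Since each $X_P$ is a linear functional of the standard Gaussian matrix $A$, the process is zero-mean Gaussian, and its canonical metric satisfies $d(P_1,P_2)^2 = \bE[(X_{P_1}-X_{P_2})^2] = \norm{P_1-P_2}_F^2$; hence the canonical metric is precisely the Frobenius metric used in \Cref{lem::covering-number-orthogonal}. Moreover, for any projection $P\in\cG(n,p)$ we have $\bE[X_P^2] = \norm{P}_F^2 = \rank(P) = p$, so $X_P\sim\cN(0,p)$ and $\sup_{P}\norm{X_P}_{\psi_2}\asymp\sqrt{p}$. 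This identification of the metric is the one place where a little care is needed, though it is routine.

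Next I would apply \Cref{thm::sudakov} with $\err=\sqrt{p}$. Because $p\leq n/2$, we have $\min\{p,n-p\}=p$, so $\err=\sqrt p<\sqrt{2p}$ and \Cref{lem::covering-number-orthogonal} applies, giving $\log\cN(\cG(n,p),d,\sqrt p)\geq p(n-p)\log(c_1)$ (here $c_1>1$, consistent with $c_3=\sqrt{\log(c_1)/8}$ being well-defined). Using $n-p\geq n/2$ this is at least $\tfrac{np}{2}\log(c_1)$, so Sudakov's inequality yields
\begin{equation*}
\bE\Bigl[\sup_{X\in\cG(n,p)}\inner{A}{X}\Bigr]\;\geq\;\frac{\sqrt p}{2}\sqrt{\tfrac{np}{2}\log(c_1)}\;=\;\sqrt{\tfrac{\log(c_1)}{8}}\,\sqrt{np^2}\;=\;c_3\sqrt{np^2}.
\end{equation*}
Then, reading \Cref{lem::concentration-gaussian-process} as concentration of the supremum about its mean with variance proxy $\sup_{P}\bE[X_P^2]=p$, and invoking the sub-Gaussian tail bound (\Cref{lem_phi}), we get $\bP\bigl(\sup_{P}X_P\leq \bE[\sup_{P}X_P]-\delta\bigr)\leq \exp(-c\,\delta^2/p)$ for a universal $c>0$. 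Choosing $\delta=\tfrac{c_3}{2}\sqrt{np^2}$ makes $\delta^2/p=\tfrac{c_3^2}{4}np$, so the failure probability is $\exp(-\Omega(np))$, and on the complementary event $\sup_{P}X_P\geq c_3\sqrt{np^2}-\tfrac{c_3}{2}\sqrt{np^2}=\tfrac{c_3}{2}\sqrt{np^2}$, which is the claim.

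I do not expect a genuine obstacle here: the proof is essentially "Sudakov lower bound on the mean, plus Gaussian concentration around the mean." The only mildly delicate points are bookkeeping: confirming the canonical metric equals $\norm{\cdot}_F$ so that \Cref{lem::covering-number-orthogonal} can be plugged in, and choosing $\err$ small enough that the covering-number lower bound is applicable ($\err<\sqrt{2p}$) yet large enough that $\log\cN$ still scales like $np$ — the scaling $\err\asymp\sqrt p$ is essentially the unique choice that makes the entropy term proportional to $np$, which is exactly what produces the $\sqrt{np^2}$ rate.
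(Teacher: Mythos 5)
Your proposal is correct and follows essentially the same route as the paper: a Sudakov minoration lower bound on the expectation at scale $\err=\sqrt{p}$ using the Grassmannian covering-number estimate, followed by sub-Gaussian concentration of the supremum (with parameter $\sqrt{p}$) and the choice $\delta=\tfrac{c_3}{2}\sqrt{np^2}$ to obtain the $1-\exp(-\Omega(np))$ probability. The only additions you make are explicit bookkeeping steps (identifying the canonical metric with $\norm{\cdot}_F$ and using $n-p\geq n/2$) that the paper leaves implicit.
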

	\proof{Proof}
		We first use Sudakov's Minoration Inequality (\Cref{thm::sudakov}) to characterize the expectation of the supremum:
		\begin{equation*}
			\begin{aligned}
				\bE\left[\sup_{X\in \cG(n, p)}\inner{A}{X}\right]
                &\geq \sup_{0<\err} \frac{\err}{2}\sqrt{\log \cN\left(\cG(n, p), \norm{\cdot}_F, \err\right)}\\
				&\stackrel{(a)}{\geq} \sup_{0<\err\leq \sqrt{2p}} \frac{\err}{2} \sqrt{p(n-p)\log\left(\frac{c_1\sqrt{p}}{\err}\right)}\\
				&\stackrel{\text{set }\err=\sqrt{p}}{\geq} \frac{1}{2}\sqrt{(n-p)p^2\log(c_1)}\\
				&\geq c_3\sqrt{np^2}.
			\end{aligned}
		\end{equation*} 
		In (a), we invoked the lower bound on the covering number $\cN\left(\cG(n, p), \norm{\cdot}_F, \err\right)$ from \Cref{lem::covering-number-orthogonal}. 
		Next, we apply \Cref{lem::concentration-gaussian-process} to control the deviation of the supremum from its expectation. To this goal, we first note that for any $X\in \cG(n, p)$, we have $\norm{\inner{A}{X}}_{\psi_2}\leq\norm{X}_F=\sqrt{p}$. According to \Cref{lem::concentration-gaussian-process}, $\sup_{X\in \cG(n, p)}\inner{A}{X}$ is sub-Gaussian with norm $\sqrt{p}$. Due to \Cref{lem::concentration-orlicz-rv}, we have 
		\begin{equation*}
			\begin{aligned}
				\bP\left(\sup_{X\in \cG(n, p)}\inner{A}{X}-c_3\sqrt{np^2}\leq -\delta\right)
                &\leq \bP\left(\sup_{X\in \cG(n, p)}\inner{A}{X}-\bE\left[\sup_{X\in \cG(n, p)}\inner{A}{X}\right]\leq -\delta\right)\\
				&\leq \exp\left\{-\frac{\delta^2}{\norm{\sup_{X\in \cG(n, p)}\inner{A}{X}}_{\psi_2^2}}\right\}\\
				&\leq \exp\left\{-\delta^2/p\right\}.
			\end{aligned}
		\end{equation*}
		Upon setting $\delta=\frac{c_3}{2}\sqrt{np^2}$, we know that with probability at least $1-\exp({-\Omega(np)})$ 
		\begin{equation*}
			\sup_{X\in \cG(n, p)}\inner{A}{X}\geq \frac{c_3}{2}\sqrt{np^2},
		\end{equation*}
		which completes the proof. \hspace*{\fill}$\Box$\par
	\endproof
	Our next lemma provides another useful bound on the supremum of another random process over Grassmannian manifold.
	\begin{lemma}
		\label{lem::sub-exponential-process}
		Suppose that $\{A_i\}_{i=1}^m$ are i.i.d. standard Gaussian matrices and $\{{w}\}_{i=1}^m$ are non-negative weights. Then, we have with probability at least $1-\exp({-\Omega(np)})$ 
		\begin{equation*}
			\sup_{X\in \cG(n, p)} \frac{1}{m}\sum_{i=1}^{m}w_i\inner{A_i}{X}^2\leq \frac{cnp^2}{m}\sum_{i=1}^{m}w_i.
		\end{equation*}
		For some constant $c>0$.
	\end{lemma}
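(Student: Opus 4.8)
Write $Z_X := \frac{1}{m}\sum_{i=1}^m w_i\inner{A_i}{X}^2$ for $X\in\cG(n,p)$; the claim is $\sup_{X\in\cG(n,p)}Z_X\lesssim \frac{np^2}{m}\sum_{i=1}^m w_i$ with probability at least $1-\exp(-\Omega(np))$. The correct \emph{order} is already visible from a deterministic estimate: every $X\in\cG(n,p)$ is a rank-$p$ orthogonal projection, so $\norm{X}_\ast=p$ and trace duality gives $|\inner{A_i}{X}|\leq\norm{A_i}\norm{X}_\ast=p\norm{A_i}$; hence $Z_X\leq\frac{p^2}{m}\big(\max_i\norm{A_i}^2\big)\sum_i w_i\lesssim\frac{np^2}{m}\sum_i w_i$ as soon as $\max_i\norm{A_i}^2\lesssim n$. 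However, the operator norm of a Gaussian matrix concentrates only at scale $\exp(-\Omega(n))$, and a union bound over the $m$ matrices cannot reach the claimed $\exp(-\Omega(np))$. To obtain the sharper probability, I will instead treat $\{Z_X\}$ as a sub-exponential process on the Grassmannian manifold and apply the chaining inequality of \Cref{thm::concentration-Orlicz-process} together with the covering-number bound of \Cref{lem::covering-number-orthogonal}. Since $\inner{A_i}{X}\sim\cN(0,\norm{X}_F^2)$ with $\norm{X}_F^2=p$, the mean $\bE[Z_X]=\frac{p}{m}\sum_i w_i$ does not depend on $X$, so it suffices to control the zero-mean process $\{Z_X-\bE[Z_X]\}$.

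\textbf{Step 1 (sub-exponential increments).} For $X,X'\in\cG(n,p)$,
\begin{equation*}
Z_X-Z_{X'}=\frac{1}{m}\sum_{i=1}^m w_i\,\inner{A_i}{X-X'}\inner{A_i}{X+X'},
\end{equation*}
whose mean equals $\frac1m\sum_i w_i\inner{X-X'}{X+X'}=\frac1m\sum_i w_i(\norm{X}_F^2-\norm{X'}_F^2)=0$. Using $\norm{\inner{A_i}{M}}_{\psi_2}\lesssim\norm{M}_F$ for a standard Gaussian $A_i$, \Cref{lem_mult} yields $\norm{\inner{A_i}{X-X'}\inner{A_i}{X+X'}}_{\psi_1}\lesssim\norm{X-X'}_F\norm{X+X'}_F\leq 2\sqrt p\,\norm{X-X'}_F$, and the triangle inequality for the Orlicz norm gives
\begin{equation*}
\norm{Z_X-Z_{X'}}_{\psi_1}\lesssim\frac{\sqrt p\,\sum_{i=1}^m w_i}{m}\,\norm{X-X'}_F=:d(X,X').
\end{equation*}
Thus $\{Z_X-\bE[Z_X]\}$ is a zero-mean $\psi_1$-process for the metric $d=\lambda\norm{\cdot}_F$ with $\lambda\asymp\frac{\sqrt p}{m}\sum_i w_i$, and its diameter is $D\asymp\frac{p}{m}\sum_i w_i$.

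\textbf{Step 2 (chaining).} By \Cref{lem::covering-number-orthogonal}, $\cN(\cG(n,p),d,\err)=\cN(\cG(n,p),\norm{\cdot}_F,\err/\lambda)\leq(c_2\sqrt p\,\lambda/\err)^{p(n-p)}$, hence $\log(1+\cN(\cG(n,p),d,\err))\lesssim np\log(c_2\sqrt p\,\lambda/\err)$; combined with $\int_0^a\log(a/\err)\,d\err=a$ (applied with $a\asymp\sqrt p\,\lambda$), the generalized Dudley integral satisfies
\begin{equation*}
\cJ_1(0,D)=\int_0^D\log\big(1+\cN(\cG(n,p),d,\err)\big)\,d\err\;\lesssim\;np\cdot\sqrt p\,\lambda\;\asymp\;\frac{np^2}{m}\sum_{i=1}^m w_i.
\end{equation*}
Substituting $D$ and $\cJ_1(0,D)$ into \Cref{thm::concentration-Orlicz-process} and choosing the free parameter $\delta\asymp\frac{np^2}{m}\sum_i w_i$ (so that $\delta/D\asymp np$) yields $\sup_X(Z_X-\bE[Z_X])\lesssim\frac{np^2}{m}\sum_i w_i$ with probability at least $1-\exp(-\Omega(np))$. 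Adding back $\bE[Z_X]=\frac{p}{m}\sum_i w_i\leq\frac{np^2}{m}\sum_i w_i$ finishes the proof.

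\textbf{Main obstacle.} The delicate point is the exponent in Step 2: the naive estimate collapses the supremum onto a single net and routes everything through $\norm{A_i}$, which only concentrates at rate $\exp(-\Omega(n))$, whereas chaining across all scales routes it through the increments of $Z_X$ --- products of Gaussians with clean sub-exponential tails --- and the Grassmannian covering number contributes exactly the $p(n-p)\asymp np$ factor needed in the exponent. I assume throughout that $p\leq n/2$, as in \Cref{lem::lower-bound-Gaussian-process}; this is the regime in which the lemma is invoked, and only the constants change otherwise.
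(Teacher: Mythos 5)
Your proposal is correct and follows essentially the same route as the paper: verify that the process has $\psi_1$ increments controlled by $\frac{\sqrt p}{m}\bigl(\sum_i w_i\bigr)\norm{X-X'}_F$ via \Cref{lem_mult}, then apply the Orlicz-process chaining bound (\Cref{thm::concentration-Orlicz-process}) with the Grassmannian covering number (\Cref{lem::covering-number-orthogonal}) and choose $\delta$ so that $\delta/D\asymp np$. The only differences are cosmetic (you absorb the scale $\lambda$ into the metric rather than factoring it out, and you correctly track the mean as $\frac{p}{m}\sum_i w_i$ rather than $\frac{1}{m}\sum_i w_i$, which is harmlessly absorbed by the final bound in either case).
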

	\proof{Proof}
		We first verify that $\frac{1}{m}\sum_{i=1}^{m}w_i\inner{A_i}{X}^2$ is indeed a $\psi_1$-process. To see this, for arbitrary two matrices $X, X'\in \cG(n, p)$, we have 
		\begin{equation*}
			\begin{aligned}
				\norm{\frac{1}{m}\sum_{i=1}^{m}w_i\left(\inner{A_i}{X}^2-\inner{A_i}{X'}^2\right)}_{\psi_1}
                &\leq \frac{1}{n}\sum_{i=1}^{n}w_i\norm{\inner{A_i}{X}^2-\inner{A_i}{X'}^2}_{\psi_1}\\
				&\leq \frac{1}{m}\sum_{i=1}^{m}w_i\norm{\inner{A_i}{X+X'}\inner{A_i}{X-X'}}_{\psi_1}\\
				&\stackrel{(a)}{\leq}\frac{1}{m}\sum_{i=1}^{m}w_i\norm{\inner{A_i}{X+X'}}_{\psi_2}\norm{\inner{A_i}{X-X'}}_{\psi_2}\\
				&\leq \frac{1}{m}\left(\sum_{i=1}^{m}w_i\right)\norm{X+X'}_F\norm{X-X'}_F\\
				&\leq \frac{2\sqrt{p}}{m}\left(\sum_{i=1}^{m}w_i\right) \norm{X-X'}_F,
			\end{aligned}
		\end{equation*}
		where in $(a)$, we used \Cref{lem_mult}. Therefore, $\frac{1}{m}\sum_{i=1}^{m}w_i\inner{A_i}{X}^2$ is a $\psi_1$-process. On the other hand, we have $\bE\left[\frac{1}{m}\sum_{i=1}^{m}w_i\inner{A_i}{X}^2\right]=\frac{1}{m}\sum_{i=1}^{m}w_i$. Next, we apply \Cref{thm::concentration-Orlicz-process} to control the concentration of $\sup_{X\in \cG(n, p)} \frac{1}{m}\sum_{i=1}^{m}w_i\inner{A_i}{X}^2$. For notational simplicity, we denote $\beta=\frac{2\sqrt{p}}{m}\sum_{i=1}^{m}w_i$. We have 
		\begin{equation}
            \label{eq::111}
			\begin{aligned}
			    &\bP\bigg(\sup_{X\in \cG(n, p)} \frac{1}{m}\sum_{i=1}^{m}\left(w_i\inner{A_i}{X}^2-w_i\right) \geq C\beta\cJ_1(0, D)+\beta\delta\bigg)\leq \exp({-{\delta}/{D}}),
			\end{aligned}
		\end{equation}
		where the diameter satisfies $D=\sup_{X, X'\in \cG(n, p)}\norm{X-X'}_F\leq 2\sqrt{p}$ and the generalized Dudley's integral can be bounded as 
		\begin{equation*}
			\begin{aligned}
				\cJ_1(0, D)&=\int_0^D \log\left(1+\cN(\cG(n, p), \norm{\cdot}_F, \err)\right)d\err\\
				&\stackrel{(a)}{\leq} \int_0^{2\sqrt{p}} 2(n-p)p\log\left(\frac{c_1\sqrt{p}}{\err}\right)d\err\\
				&\leq \int_0^{2} 2(n-p)p^{3/2}\log\left(\frac{c_1}{\err}\right)d\err\\
				&\leq c_4 np^{3/2}.
			\end{aligned}
		\end{equation*}
		Here $c_4=2\int_0^{2} \log\left(\frac{c_1}{\err}\right)d\err$ is a universal constant. Then, upon choosing $\delta=c_5np^{3/2}$ with $c_5=Cc_4$ we have with probability at least $1-e^{-\Omega(np)}$ 
		\begin{equation*}
			\sup_{X\in \cG(n, p)} \frac{1}{m}\sum_{i=1}^{m}w_i\inner{A_i}{X}^2\leq \frac{2c_5np^2}{m}\sum_{i=1}^{m}w_i.
		\end{equation*}
		This completes the proof. \hspace*{\fill}$\Box$\par
	\endproof
	
	\subsection{Gaussian Comparison Inequalities}\label{subsec_other_lemma}
	In this subsection, we provide other useful lemmas comparing the extremum of two Gaussian processes.
 \begin{lemma}[Adapted from \protect{\cite[Corollary~3.13.]{ledoux1991probability}}]\label{lem_Gaussian_exp}
                        For two Gaussian processes $X_{i, j, k}, Y_{i, j, k}$ with $i\in I, j\in J, k\in K$, suppose that 
                        \begin{itemize}
                            \item $\bE[X_{i, j, k}^2]=\bE[Y_{i, j, k}^2]$, for all $i\in I, j\in J, k\in K$;
                            \item $\bE[X_{i, j, k}X_{i, l, k}]\leq\bE[Y_{i, j, k}Y_{i, l, k}]$, for all $i\in I, j, l\in J, k\in K$;
                            \item $\bE[X_{i, j, k}X_{l, m, k}]\geq\bE[Y_{i, j, k}Y_{l, m, k}]$, for all $i\neq l\in I, j, m\in J, k\in K$.
                        \end{itemize}
                        Then, we have 
                        \begin{equation*}
                            \bE\left[\min_{k\in K}\min_{i\in I}\max_{j\in J}Y_{i, j, k}\right]\leq \bE\left[\min_{k\in K}\min_{i\in I}\max_{j\in J}X_{i, j, k}\right].
                        \end{equation*}
    \end{lemma}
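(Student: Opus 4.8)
The plan is to reduce this three-index comparison to the two-index Gordon-type inequality of \cite[Corollary~3.13]{ledoux1991probability} by merging the indices $i$ and $k$ into a single outer index. Concretely, I would set $s=(i,k)\in I\times K$ and keep $t=j\in J$, so that for any family $\{Z_{i,j,k}\}$ one has $\min_{k}\min_{i}\max_{j}Z_{i,j,k}=\inf_{s}\sup_{t}Z_{s,t}$; both sides of the conclusion are then plain $\inf$--$\sup$ functionals of the merged processes $\{X_{s,t}\}$ and $\{Y_{s,t}\}$. The cited corollary gives $\bE[\inf_{s}\sup_{t}X_{s,t}]\ge\bE[\inf_{s}\sup_{t}Y_{s,t}]$ provided (i) $\bE[X_{s,t}^2]=\bE[Y_{s,t}^2]$, (ii) $\bE[X_{s,t}X_{s,t'}]\le\bE[Y_{s,t}Y_{s,t'}]$ for inner indices $t,t'$ in a common outer group, and (iii) $\bE[X_{s,t}X_{s',t'}]\ge\bE[Y_{s,t}Y_{s',t'}]$ for distinct outer groups $s\neq s'$. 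So the whole proof comes down to matching (i)--(iii) with the three displayed covariance relations after rewriting them in terms of $s$.

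That matching is essentially mechanical. Hypothesis (i) is the variance-matching assumption verbatim. Two inner indices $t=j$, $t'=l$ inside a common outer group $s=(i,k)$ means fixing $i$ and $k$ and varying $j,l$, so (ii) is exactly the second displayed relation. For (iii) I would split two distinct outer groups $s=(i,k)$, $s'=(i',k')$ into two cases: if $k=k'$, then necessarily $i\neq i'$ and the required inequality is the third displayed relation; if $k\neq k'$, the two covariances coincide --- both equal zero, since the coordinates indexed by distinct values of $k$ are independent --- and (iii) holds trivially. This last case is the only point where the block structure of the processes, rather than the displayed relations, is used, and it is the one place I would be careful: it does not follow from the stated covariance inequalities alone, but does hold for the processes to which the lemma is applied. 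Granting it, \cite[Corollary~3.13]{ledoux1991probability} delivers the claim.

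If one wishes to avoid quoting \cite[Corollary~3.13]{ledoux1991probability} at this level of generality, I would instead run the classical Gaussian ``smart-path'' argument directly on the merged index. Assuming $X$ and $Y$ are independent (which changes neither side), interpolate along $Z^{(\lambda)}=\sqrt{\lambda}\,X+\sqrt{1-\lambda}\,Y$ and replace $\inf_{s}\sup_{t}$ by the smooth surrogate $\Phi_\beta$ obtained by composing a negated log-sum-exp over the outer groups with a log-sum-exp over the inner indices. Gaussian integration by parts expresses $\tfrac{d}{d\lambda}\bE[\Phi_\beta(Z^{(\lambda)})]$ as a sum over coordinate pairs $(a,b)$ of $\bigl(\bE[X_aX_b]-\bE[Y_aY_b]\bigr)\bE[\partial^2_{ab}\Phi_\beta(Z^{(\lambda)})]$; a direct computation shows $\partial^2_{ab}\Phi_\beta\le 0$ when $a,b$ lie in a common max-group, $\partial^2_{ab}\Phi_\beta\ge 0$ when $a,b$ lie in distinct outer groups, and the diagonal terms drop out by (i). Pairing these signs with hypotheses (ii) and (iii) (and using independence across $k$ for the distinct-$k$ contributions) makes the derivative nonnegative, so $\bE[\Phi_\beta(X)]\ge\bE[\Phi_\beta(Y)]$, and letting $\beta\to\infty$ with dominated convergence finishes the proof. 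The main obstacle, in either route, is bookkeeping rather than ideas: getting the sign conventions and the $k=k'$ versus $k\neq k'$ case split to line up with Gordon's hypotheses, and --- in the direct route --- the routine but fiddly verification of the Hessian signs of the nested surrogate together with the integrability needed to pass to the limit $\beta\to\infty$.
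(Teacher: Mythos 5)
You are right that the paper contains no argument for this lemma at all: it is stated as an adaptation of \cite[Corollary~3.13]{ledoux1991probability}, and your reduction --- merging $(i,k)$ into a single outer (min) index while keeping $j$ as the inner (max) index, then checking Gordon's variance, same-group and cross-group covariance hypotheses --- is exactly the natural way to perform that adaptation. The mechanical part of your matching (the first bullet giving the variance condition, the second bullet giving the same-group condition, the third bullet giving the cross-group condition when $k=k'$) is fine and is presumably what ``adapted from'' is meant to convey.

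The gap is the case you yourself flagged, and it is genuine rather than a formality. Gordon's comparison for the merged index also needs $\bE[X_{i,j,k}X_{l,m,k'}]\geq\bE[Y_{i,j,k}Y_{l,m,k'}]$ for $k\neq k'$ (including $i=l$), and this is not implied by the three displayed hypotheses; indeed the lemma as literally stated fails without it: take $|I|=|J|=1$, $|K|=2$, let $X_{1,1,1},X_{1,1,2}$ be i.i.d.\ standard normals and $Y_{1,1,1}=Y_{1,1,2}$ a single standard normal --- all three bullets hold, yet $\bE[\min_k Y_{1,1,k}]=0> -1/\sqrt{\pi}=\bE[\min_k X_{1,1,k}]$. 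Your proposed bridge, that for $k\neq k'$ both covariances vanish by independence across $k$, is neither a hypothesis of the lemma nor true of the processes it is actually applied to: in \Cref{lem_uniform_svd} the third index is $V\in\cO_{d_2\times k}$, and the same Gaussian data ($A$ for the $X$-process, $g$ and $G$ for the $Y$-process) is shared across all $V$, so for $V\neq V'$ one gets $\bE[X_{u,W,V}X_{u',W',V'}]=\langle u,u'\rangle\,\tr\bigl(W^\top W'V'^\top V\bigr)$ and $\bE[Y_{u,W,V}Y_{u',W',V'}]=\tfrac{1}{2}\bigl(\langle u,u'\rangle+\tr\bigl(W^\top W'V'^\top V\bigr)\bigr)$, neither of which is zero. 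So the distinct-$k$ comparison cannot be dismissed as $0\geq 0$: the correct repair is to strengthen the third bullet to all pairs with $(i,k)\neq(l,k')$ and then verify that strengthened condition where the lemma is used, rather than to appeal to an independence structure that the statement does not assume and the application does not have. The same objection applies verbatim to your interpolation route, which invokes the same independence for the distinct-$k$ terms.
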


Our next lemma is a standard concentration bound on a Lipschitz function of a Gaussian random variable.
 \begin{lemma}[Proposition 5.34 in \cite{vershynin2010introduction}]\label{lem_lip}
                        Let $f$ be a real valued Lipschitz function on $\bR^n$ with Lipschitz constant $K$. Let $x$ be the standard normal random vector in $\bR^n$. Then for every $t\geq 0$ one has
                        \begin{equation*}
                            \bP\left(f(x)-\bE[f(x)]\leq -t\right)\leq \exp({-t^2/2K^2}).
                        \end{equation*}
    \end{lemma}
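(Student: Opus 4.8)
The plan is to derive this from a sub-Gaussian bound on the moment generating function (MGF) of $f(x)$, followed by the standard Cram\'er--Chernoff argument. First I would reduce to the case where $f$ is continuously differentiable with $\norm{\nabla f(x)}_2 \le K$ for every $x$: a general $K$-Lipschitz $f$ is approximated by the Gaussian mollifications $f_\varepsilon = f * \gamma_\varepsilon$, which are smooth, have Lipschitz constant at most $K$, and converge to $f$ pointwise and in $L^2(\gamma_n)$, so the inequality for each $f_\varepsilon$ passes to $f$ in the limit. I would also center, replacing $f$ by $f - \bE[f(x)]$, after which it suffices to establish
\begin{equation*}
    \bE\left[\exp\bigl(\lambda (f(x) - \bE[f(x)])\bigr)\right] \le \exp\left(\frac{\lambda^2 K^2}{2}\right) \qquad \text{for all } \lambda \in \bR.
\end{equation*}

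For the MGF bound I would use the Herbst argument based on the Gaussian logarithmic Sobolev inequality: for every smooth $g$, $\mathrm{Ent}_{\gamma_n}(g^2) \le 2\,\bE_{\gamma_n}[\norm{\nabla g}_2^2]$. Applying this with $g = \exp(\lambda f / 2)$ and using $\norm{\nabla f}_2 \le K$ gives, with $L(\lambda) := \bE[\exp(\lambda f(x))]$, the differential inequality $\lambda L'(\lambda) - L(\lambda)\log L(\lambda) \le \tfrac{1}{2}\lambda^2 K^2 L(\lambda)$, which rearranges to $\frac{d}{d\lambda}\bigl(\lambda^{-1}\log L(\lambda)\bigr) \le \tfrac{1}{2} K^2$. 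Since $\lambda^{-1}\log L(\lambda) \to \bE[f(x)] = 0$ as $\lambda \to 0$, integrating from $0$ to $\lambda$ yields $\log L(\lambda) \le \tfrac{1}{2}\lambda^2 K^2$, which is exactly the claimed MGF bound for $\lambda > 0$; the case $\lambda < 0$ follows by the same computation, or by applying the $\lambda>0$ case to $-f$, which is also $K$-Lipschitz. An alternative, more self-contained route to the same bound is Gaussian interpolation: one writes $\bE[e^{\lambda f(x)}] \le \bE_{x,y}[e^{\lambda(f(x) - f(y))}]$ for an independent copy $y$ (by Jensen in $y$, using the centering), interpolates along $x_\theta = x\sin\theta + y\cos\theta$, $\theta\in[0,\pi/2]$, and bounds the derivative of the resulting path using Gaussian integration by parts (Stein's identity) together with $\norm{\nabla f}_2 \le K$.

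Given the MGF bound, the conclusion is immediate: for any $\lambda > 0$,
\begin{equation*}
    \bP\bigl(f(x) - \bE[f(x)] \le -t\bigr) = \bP\bigl(e^{-\lambda(f(x) - \bE[f(x)])} \ge e^{\lambda t}\bigr) \le e^{-\lambda t}\, \bE\left[e^{-\lambda(f(x) - \bE[f(x)])}\right] \le e^{-\lambda t + \lambda^2 K^2 / 2},
\end{equation*}
and choosing $\lambda = t/K^2$ gives the stated bound $\exp(-t^2/(2K^2))$.

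I expect the main obstacle to be the MGF bound itself, i.e. establishing the sub-Gaussian concentration of $f(x)$ about its mean; this is where the geometry of the Gaussian measure (through the log-Sobolev inequality, or equivalently Gaussian isoperimetry / interpolation) genuinely enters. The reduction to smooth $f$ and the Chernoff optimization are routine. If the Gaussian log-Sobolev inequality is treated as a black box it should be cited as a classical fact; otherwise the interpolation argument sketched above provides a self-contained alternative at the cost of a short Gaussian integration-by-parts computation.
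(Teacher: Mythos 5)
Your proposal is correct, but note that the paper does not prove this lemma at all: it is imported verbatim as Proposition 5.34 of Vershynin's notes, where it is in turn deduced from Gaussian isoperimetry (via Ledoux--Talagrand). Your Herbst/log-Sobolev route is a standard alternative derivation and, as you carry it out, it does recover the sharp constant: the entropy bound $\lambda L'(\lambda)-L(\lambda)\log L(\lambda)\le \tfrac12\lambda^2K^2L(\lambda)$ integrates to $\bE\bigl[e^{\lambda(f(x)-\bE f(x))}\bigr]\le e^{\lambda^2K^2/2}$, and Chernoff with $\lambda=t/K^2$ gives exactly $\exp\left(-t^2/(2K^2)\right)$ for the lower tail (equivalently, apply the upper-tail bound to $-f$). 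The mollification step and the verification that $\lambda^{-1}\log L(\lambda)\to\bE f(x)$ are routine, as you say. One caveat: the ``alternative, more self-contained'' interpolation argument you sketch (Maurey--Pisier, interpolating along $x_\theta=x\sin\theta+y\cos\theta$ and using Jensen plus Stein's identity) does \emph{not} yield the same bound; it gives $\bE\bigl[e^{\lambda(f(x)-f(y))}\bigr]\le e^{\pi^2\lambda^2K^2/8}$ and hence a tail of the form $\exp\left(-2t^2/(\pi^2K^2)\right)$, which is weaker than the stated $\exp\left(-t^2/(2K^2)\right)$. Since that remark is only an aside and your main argument stands on the log-Sobolev inequality (which, like Gaussian isoperimetry, must be cited or proved as a classical fact), the proof is sound; either black-boxing the lemma as the paper does or presenting your Herbst derivation would be acceptable.
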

    Equipped with the above two lemmas, we present the following result, which will play an important role in our subsequent analysis. In what follows, a standard Gaussian matrix refers to a matrix whose entries are independently drawn from a standard Gaussian distribution.
    \begin{lemma}\label{lem_uniform_svd}
        Suppose that that integers $d_1,d_2,m,k$ satisfy $d_1\geq cd_2$ and $m\leq \frac{(\sqrt{c}-1)^2}{4c}\cdot d_1k$ for some $c>1$. Suppose that $A\in \bR^{m\times d_1d_2}$ is a standard Gaussian matrix. With probability at least $1-\exp(-\Omega(d_1k))$, we have
        \begin{equation*}
            \min_{V\in \cO_{d_2\times k}}\left\{\sigma_{\min}(A\cdot(V\otimes I_{d_1\times d_1}))\right\}\geq \frac{\sqrt{c}-1}{4\sqrt{c}}\sqrt{d_1k}.
        \end{equation*}
    \end{lemma}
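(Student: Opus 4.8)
The plan is to bound the minimum singular value uniformly over all $V\in\cO_{d_2\times k}$ via a standard two-step argument: first establish a high-probability lower bound on $\sigma_{\min}(A\cdot(V\otimes I_{d_1\times d_1}))$ for a \emph{fixed} $V$, then upgrade to a uniform bound by an $\epsilon$-net over $\cO_{d_2\times k}$ together with a Lipschitz-continuity estimate in $V$. Here is the reasoning in more detail.

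\textbf{Step 1 (fixed $V$).} For a fixed $V\in\cO_{d_2\times k}$, the matrix $M_V := V\otimes I_{d_1\times d_1}\in\bR^{d_1d_2\times d_1k}$ has orthonormal columns, since $M_V^\top M_V = (V^\top V)\otimes I_{d_1} = I_{d_1k}$. Consequently, when $A\in\bR^{m\times d_1d_2}$ is standard Gaussian, the product $A M_V\in\bR^{m\times d_1k}$ is again a standard Gaussian matrix (rotational invariance: right-multiplying a Gaussian matrix by a matrix with orthonormal columns yields a Gaussian matrix of the reduced dimension). Since $m\leq \frac{(\sqrt c-1)^2}{4c}d_1k$, this is a \emph{tall-and-thin} Gaussian matrix (many columns, few rows), so generically $AM_V$ has full row rank $m$ and $\sigma_{\min}(AM_V)$ — the $m$-th largest singular value — is of order $\sqrt{d_1k}-\sqrt m$. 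By the standard non-asymptotic bound on the extreme singular values of Gaussian matrices (e.g.\ Corollary~5.35 in~\cite{vershynin2010introduction}), with probability at least $1-2\exp(-\Omega(d_1k))$ one has $\sigma_{\min}(AM_V)\geq \sqrt{d_1k}-\sqrt m - t$ for $t\asymp\sqrt{d_1k}$; plugging in $m\leq\frac{(\sqrt c-1)^2}{4c}d_1k$ gives $\sqrt{d_1k}-\sqrt m\geq (1-\frac{\sqrt c-1}{2\sqrt c})\sqrt{d_1k} = \frac{\sqrt c+1}{2\sqrt c}\sqrt{d_1k}$, comfortably above the target $\frac{\sqrt c-1}{4\sqrt c}\sqrt{d_1k}$ with room to absorb the net argument.

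\textbf{Step 2 (net and Lipschitz bound).} The map $V\mapsto \sigma_{\min}(AM_V)$ is Lipschitz in $V$ (w.r.t.\ Frobenius norm) with constant $\norm{A}_{\mathrm{op}}\cdot\norm{I_{d_1}}_{\mathrm{op}} = \norm{A}_{\mathrm{op}}$: indeed, $\norm{AM_V - AM_{V'}}_{\mathrm{op}}\leq\norm{A}_{\mathrm{op}}\norm{(V-V')\otimes I_{d_1}}_{\mathrm{op}} = \norm{A}_{\mathrm{op}}\norm{V-V'}_{\mathrm{op}}\leq\norm{A}_{\mathrm{op}}\norm{V-V'}_F$, and singular values are $1$-Lipschitz w.r.t.\ the operator norm. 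Since $A$ is $m\times d_1d_2$ Gaussian, $\norm{A}_{\mathrm{op}}\lesssim\sqrt{d_1d_2}$ with probability $1-\exp(-\Omega(d_1d_2))$. The set $\cO_{d_2\times k}$ (with $k\leq d_2$) is a submanifold of the $d_2\times k$ Frobenius ball of radius $\sqrt k$; by a volumetric estimate it admits an $\epsilon$-net $\cN_\epsilon$ of cardinality at most $(C\sqrt k/\epsilon)^{d_2k}$ (this is the Stiefel-manifold analogue of \Cref{lem::covering-number-orthogonal}; alternatively use an ambient net in $\bR^{d_2\times k}$ and project). Taking a union bound over $\cN_\epsilon$ of the Step-1 event costs a factor $(C\sqrt k/\epsilon)^{d_2k}$, which is $\exp(O(d_2k\log(1/\epsilon)))$; choosing $\epsilon$ to be a small constant (or $\epsilon\asymp$ a small power of $d_1$) keeps this dominated by the $\exp(-\Omega(d_1k))$ tail once we recall $d_1\geq cd_2$. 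On the event that all net points satisfy $\sigma_{\min}(AM_V)\geq\frac{\sqrt c+1}{2\sqrt c}\sqrt{d_1k}$ and $\norm{A}_{\mathrm{op}}\lesssim\sqrt{d_1d_2}$, the Lipschitz bound gives, for arbitrary $V$ with nearest net point $V'$,
\begin{equation*}
    \sigma_{\min}(AM_V)\geq \sigma_{\min}(AM_{V'}) - \norm{A}_{\mathrm{op}}\epsilon \geq \frac{\sqrt c+1}{2\sqrt c}\sqrt{d_1k} - C\epsilon\sqrt{d_1d_2},
\end{equation*}
and since $d_2\leq d_1/c\cdot c = d_1$ (using $d_1\geq cd_2$ and $c>1$, so $\sqrt{d_1 d_2}\le \sqrt{d_1\cdot d_1/c}\cdot\sqrt c$... more simply $\sqrt{d_1d_2}\le\sqrt{d_1k}\cdot\sqrt{d_2/k}$ is not quite what we want) we instead choose $\epsilon$ a sufficiently small constant so that $C\epsilon\sqrt{d_1 d_2}\le \frac{\sqrt c+1}{2\sqrt c}\sqrt{d_1k}-\frac{\sqrt c-1}{4\sqrt c}\sqrt{d_1k}$; this is possible since $\sqrt{d_1d_2}\le\sqrt{d_1\cdot d_1}=d_1$ is too crude, so more carefully one uses $\epsilon\asymp\sqrt{k/d_2}$ times a small constant, which still yields a net of cardinality $\exp(O(d_2k\log d_2))=\exp(o(d_1k))$ when $d_1\geq cd_2$ and an extra log factor is absorbed by enlarging constants.

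\textbf{Main obstacle.} The delicate point is balancing the net granularity against the two competing exponents: the net has size $\exp(\Theta(d_2k\log(1/\epsilon)))$ while the Step-1 failure probability is $\exp(-\Theta(d_1k))$, and the Lipschitz slack is $\norm{A}_{\mathrm{op}}\epsilon\asymp\epsilon\sqrt{d_1d_2}$, which must be made $o(\sqrt{d_1k})$, i.e.\ $\epsilon = o(\sqrt{k/d_2})$. One must verify that a choice of $\epsilon$ of order (a small constant times) $\sqrt{k/d_2}$ simultaneously (i) makes the Lipschitz correction negligible relative to the gap $\frac{\sqrt c+1}{2\sqrt c}\sqrt{d_1k}-\frac{\sqrt c-1}{4\sqrt c}\sqrt{d_1k} = \frac{\sqrt c+3}{4\sqrt c}\sqrt{d_1k}$, and (ii) keeps $d_2k\log(1/\epsilon) = O(d_2k\log(d_2/k)) = o(d_1k)$, which holds because $d_1\geq cd_2$ forces $d_2\log(d_2/k)\lesssim d_2\log d_2 \ll d_1$ only if we are slightly careful — in fact it suffices that $d_1\gtrsim d_2\log d_2$, and one can always fold a $\log d_2$ factor into the $\Omega(\cdot)$ in the exponent or note that the regime $d_1 < d_2\log d_2$ is handled by the hypothesis $d_1\geq cd_2$ together with a sharper net for the Stiefel manifold whose log-cardinality is $d_2k\log(C\sqrt{d_2/k}/\cdot)$ — the bookkeeping here is the only real work. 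Everything else (rotational invariance in Step 1, the singular-value Lipschitz estimate, the operator-norm bound on $A$) is routine.
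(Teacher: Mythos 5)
There is a genuine gap, and it is exactly the bookkeeping you defer to at the end. Your Step 1 is fine (for fixed $V$ the matrix $A(V\otimes I_{d_1\times d_1})$ is an $m\times d_1k$ standard Gaussian matrix, and \Cref{lem_gaussian} gives $\sigma_{\min}\geq \sqrt{d_1k}-\sqrt{m}-t$ with failure probability $\exp(-t^2/2)$, $t\asymp\sqrt{d_1k}$), and the Lipschitz estimate with constant $\norm{A}_{\mathrm{op}}\lesssim\sqrt{d_1d_2}$ is also fine. But these two facts cannot be glued by a single-scale net under the stated hypothesis $d_1\geq cd_2$ with a \emph{constant} $c>1$. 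To make the Lipschitz slack $\norm{A}_{\mathrm{op}}\epsilon\asymp\epsilon\sqrt{d_1d_2}$ small compared with the gap $\asymp_c\sqrt{d_1k}$, you are forced to take $\epsilon\lesssim_c\sqrt{k/d_2}$, and then the net over $\cO_{d_2\times k}$ has log-cardinality $\asymp d_2k\log(\sqrt{k}/\epsilon)\asymp d_2k\log d_2$, which must be beaten by the per-point exponent $\Theta_c(d_1k)$. With $d_1\asymp d_2$ (e.g.\ $d_1=2d_2$, $k$ fixed, $d_2\to\infty$) this fails: $d_2k\log d_2\gg d_1k$. The patches you suggest do not repair this: a growing $\log d_2$ factor cannot be "folded into the $\Omega(\cdot)$" because the per-point exponent has a fixed constant determined by $c$, and a "sharper" Stiefel net still has log-cardinality $\Theta(d_2k\log(d_2/k))$ at the resolution you need. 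So the union-bound route proves the lemma only under the stronger assumption $d_1\gtrsim d_2\log d_2$ (or $k\asymp d_2$), not under $d_1\geq cd_2$.

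The paper closes this hole by dispensing with the net altogether: it writes $\sigma_{\min}(A_V)$ via the min–max characterization as a doubly-indexed Gaussian process, applies the Gordon-type comparison inequality (\Cref{lem_Gaussian_exp}) to lower-bound $\bE\bigl[\min_{V\in\cO_{d_2\times k}}\sigma_{\min}(A_V)\bigr]$ by $\bE[\sqrt{k}\,\sigma_{\min}(G)]-\bE[\norm{g}]\geq\sqrt{k}(\sqrt{d_1}-\sqrt{d_2})-\sqrt{m}$, and then uses the fact that $A\mapsto\min_V\sigma_{\min}(A_V)$ is $1$-Lipschitz in Frobenius norm together with Gaussian concentration (\Cref{lem_lip}). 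This gives the dimension-free, log-free bound that survives $d_1\asymp d_2$. If you want to salvage your approach you would need either chaining instead of a single-scale net, or — more naturally — the comparison-inequality argument the paper uses.
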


    \proof{Proof}
        Let us denote $A_V = A\cdot(V\otimes I_{d_1\times d_1})$. By Min-max Theorem for singular values, we have
        \begin{equation*}
            \begin{aligned}
                &\sigma_{\min}(A_V) = \min_{u\in \bR^{m}, \norm{u}=1} \max_{W\in \bR^{d_1\times k}, \norm{W}_F=1}\inner{u}{A_V \vecc(W)}.
            \end{aligned}
        \end{equation*}
        Define $X_{u,W,V} = \inner{u}{A_V \vecc(W)}$ as a Gaussian process.
        Moreover, consider an auxiliary Gaussian process $Y_{u, W, V}=\frac{\sqrt{2}}{2}\left(\inner{g}{u}+\inner{GV}{W}\right)$ with $u\in \bR^{m}, V\in \bR^{d_2\times k}, W\in \bR^{d_1\times k}$, where $g\in \bR^{m}$ and $G\in \bR^{d_1\times d_2}$ are standard Gaussian vector and matrix. It is easy to verify that the Gaussian processes $X_{u,W,V}$ and $Y_{u,W,V}$ satisfy the conditions of \Cref{lem_Gaussian_exp}. Therefore, we have 
        \begin{equation*}
            \begin{aligned}
                \bE\left[\min_{V\in \cO_{d_2\times k}}\sigma_{\min}(A_V)\right]
            &= \bE\left[\min_{V\in \cO_{d_2\times k}}\min_{u\in \bR^{m}, \norm{u}=1} \max_{W\in \bR^{d_1\times k}, \norm{W}_F=1}X_{u,W,V} \right]\\
            &\geq \bE\left[\min_{V\in \cO_{d_2\times k}}\min_{u\in \bR^{m}, \norm{u}=1} \max_{W\in \bR^{d_1\times k}, \norm{W}_F=1}Y_{u,W,V} \right]\\
            & = \bE\left[\min_{V\in \cO_{d_2\times k}}\max_{W\in \bR^{d_1\times k},\norm{W}_F=1}\inner{GV}{W}\right]+\bE\left[\min_{u\in \bR^m, \norm{u}=1}\inner{g}{u}\right]\\
                            &\geq \bE\left[\min_{V\in \cO_{d_2\times k}}\norm{GV}_F\right]-\bE\left[\norm{g}\right]\\
                            &\geq \bE\left[\sqrt{k}\sigma_{\min}(G)\right]-\bE\left[\norm{g}\right]\\
                            &\geq \sqrt{k}\left(\sqrt{d_1}-\sqrt{d_2}\right)-\sqrt{m}\\
                            &\geq \frac{\sqrt{c}-1}{2\sqrt{c}}\sqrt{d_1k}.
            \end{aligned}
        \end{equation*}
        Let us define $f(A) = \min_{V\in \cO_{d_2\times k}}\sigma_{\min}(A_V)$. It is easy to verify that $|f(A)-f(A')|\leq \norm{A-A'}_F$, for every $A,A'\in \bR^{m\times d_1d_2}$. This implies that $f(A)$ is Lipschitz with constant 1. Therefore, \Cref{lem_lip} can be applied:
        \begin{equation*}
            \begin{aligned}
                &\bP(f(A)-\bE[f(A)]\leq -t)\leq \exp(-t^2/2)\implies \bP\left(f(A)\geq \frac{\sqrt{c}-1}{2\sqrt{c}}\sqrt{d_1k}-t\right)\geq 1-\exp(-t^2/2).
            \end{aligned}
        \end{equation*}

        Substituting $t = \frac{\sqrt{c}-1}{4\sqrt{c}}\sqrt{d_1k}$ in the above inequality completes the proof. \hspace*{\fill}$\Box$\par 
    \endproof

 \subsection{Other Useful Lemmas}\label{subsec_useful}
	\begin{lemma}\label{lem_Gaussian}
		Suppose that $A_1, \cdots, A_m$ are i.i.d. standard Gaussian matrices. For any sequence of scalars $w_1, \cdots, w_m$, the matrix $B=(\sum_{i=1}^m w_i^2)^{-1/2}\sum_{i=1}^mw_i A_i$ is a standard Gaussian matrix provided that $\sum_{i=1}^m w_i^2>0$.
	\end{lemma}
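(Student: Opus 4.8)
The plan is to verify directly the two defining properties of a standard Gaussian matrix---namely that every entry is marginally $\cN(0,1)$ and that distinct entries are mutually independent (in the sense used in \Cref{asp_sensing})---by working entrywise. For each index pair $(j,l)$, write
\[
B[j,l] = \left(\sum_{i=1}^m w_i^2\right)^{-1/2}\sum_{i=1}^m w_i\, A_i[j,l],
\]
which is well defined since $\sum_{i=1}^m w_i^2 > 0$ by hypothesis.

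First I would fix a single entry $(j,l)$. Because $A_1,\dots,A_m$ are i.i.d.\ standard Gaussian matrices, the scalars $\{A_i[j,l]\}_{i=1}^m$ are i.i.d.\ $\cN(0,1)$; hence $\sum_{i=1}^m w_i\, A_i[j,l]$ is a zero-mean Gaussian whose variance equals $\sum_{i=1}^m w_i^2$, since a linear combination of independent Gaussians is Gaussian and variances of independent summands add. Normalizing by $(\sum_i w_i^2)^{1/2}$ then gives $B[j,l]\sim \cN(0,1)$.

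Next I would argue joint independence of the entries. The cleanest route is to note that the entire family $\{A_i[j,l]\}_{i,j,l}$ is jointly Gaussian and mutually independent, so any collection of linear functionals of it---in particular $\{B[j,l]\}_{j,l}$---is jointly Gaussian; independence of the entries then reduces to a covariance computation. Using $\E[A_i[j,l]\,A_{i'}[j',l']] = \mathbf{1}\{i=i'\}\,\mathbf{1}\{(j,l)=(j',l')\}$, we obtain
\[
\Cov\big(B[j,l],\,B[j',l']\big) = \frac{\sum_{i=1}^m w_i^2\,\E\big[A_i[j,l]\,A_i[j',l']\big]}{\sum_{i=1}^m w_i^2} = \mathbf{1}\{(j,l)=(j',l')\}.
\]
Alternatively, $B[j,l]$ is a function of $\{A_i[j,l]\}_i$ only, and entries with distinct $(j,l)$ are functions of disjoint---hence independent---families of underlying Gaussians. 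Either way, $B$ has i.i.d.\ $\cN(0,1)$ entries and is therefore a standard Gaussian matrix.

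There is no real obstacle here: the statement is a one-line consequence of the stability of the Gaussian family under linear combinations. The only point deserving minor care is to use \emph{both} marginal normality (handled by the variance normalization) and vanishing cross-covariances (handled above) so as to land exactly on the notion of ``standard Gaussian matrix'' used elsewhere in the paper.
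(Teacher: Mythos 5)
Your proof is correct and follows essentially the same route as the paper's: each entry of $B$ is a normalized linear combination of i.i.d.\ standard Gaussians, hence $\cN(0,1)$, and entries are independent since they depend on disjoint (or uncorrelated jointly Gaussian) families. You simply spell out the covariance/independence step that the paper's proof leaves implicit.
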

 \proof{Proof}
     It is easy to see that each element of $\sum_{i=1}^mw_i A_i$ is i.i.d. and Gaussian with mean zero and variance $\sum_{i=1}^m w_i^2$. Therefore, the elements of $B$ are all i.i.d. with standard Gaussian distribution. \hspace*{\fill}$\Box$\par
 \endproof
	
	\begin{lemma}
		\label{lem::weighted-sub-Gaussian}
		Suppose that $A_1, \cdots, A_m$ are i.i.d. standard Gaussian matrices and $w_1, \cdots, w_m$ are nonnegative scalars with $\sum_{i=1}^m w_i^2>0$. Then, for any $\delta>0$ and a fixed matrix $X\in \bR^{n\times n}$ with $\norm{X}_F=1$, we have 
		\begin{equation*}
			\begin{aligned}
			    &\bP\left(\left|\frac{1}{m}\sum_{i=1}^{m}w_i|\inner{A_i}{X}|-\sqrt{\frac{2}{\pi}}\frac{1}{m}\sum_{i=1}^{m}w_i\right|\geq \delta\right)\leq 2\exp\left({-\frac{m^2\delta^2}{C\sum_{i=1}^{m}w_i^2}}\right).
			\end{aligned}
		\end{equation*}
	\end{lemma}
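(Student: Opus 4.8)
The plan is to recognize the quantity inside the probability as an average of independent, zero-mean, sub-Gaussian random variables, and then invoke the Hoeffding-type bound in \Cref{lem::concentration-sum-independent}. First, since $\norm{X}_F=1$ and $A_i$ has i.i.d.\ standard Gaussian entries, the linear combination $\inner{A_i}{X}$ is distributed as $\cN(0,1)$; consequently $|\inner{A_i}{X}|$ is a half-normal random variable with mean $\bE[|\inner{A_i}{X}|]=\sqrt{2/\pi}$. Defining $Z_i:=w_i\bigl(|\inner{A_i}{X}|-\sqrt{2/\pi}\bigr)$, the $Z_i$ are independent (because the $A_i$ are independent) and zero-mean, and one has the identity
\[
\frac1m\sum_{i=1}^m w_i|\inner{A_i}{X}|-\sqrt{\frac2\pi}\,\frac1m\sum_{i=1}^m w_i=\frac1m\sum_{i=1}^m Z_i.
\]

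Next I would control the sub-Gaussian norm of each $Z_i$. Since $\inner{A_i}{X}\sim\cN(0,1)$, we have $\norm{\inner{A_i}{X}}_{\psi_2}\le C_0$ for a universal constant $C_0$, and trivially $\norm{|\inner{A_i}{X}|}_{\psi_2}=\norm{\inner{A_i}{X}}_{\psi_2}$ because $\psi_2$ depends only on $|\cdot|$. Applying the centering bound \Cref{lem::centering} gives $\norm{|\inner{A_i}{X}|-\sqrt{2/\pi}}_{\psi_2}\le C_1$, and multiplying by the nonnegative scalar $w_i$ yields $\norm{Z_i}_{\psi_2}\le C_1 w_i$ (using that $w_i\ge 0$, so $\norm{w_iX}_{\psi_2}=w_i\norm{X}_{\psi_2}$). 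The hypothesis $\sum_{i=1}^m w_i^2>0$ ensures the denominator in what follows is strictly positive.

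Finally, I would apply the first bullet of \Cref{lem::concentration-sum-independent} to the independent zero-mean sub-Gaussian variables $Z_1,\dots,Z_m$:
\[
\bP\left(\left|\frac1m\sum_{i=1}^m Z_i\right|\ge\delta\right)\le 2\exp\left(-\frac{m^2\delta^2}{\sum_{i=1}^m\norm{Z_i}_{\psi_2}^2}\right)\le 2\exp\left(-\frac{m^2\delta^2}{C_1^2\sum_{i=1}^m w_i^2}\right),
\]
which is exactly the claimed inequality with $C:=C_1^2$.

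There is no real obstacle in this argument; the only mild care needed is tracking the universal constant through the centering step, and, if one wishes to be self-contained, one could bypass \Cref{lem::centering} by directly bounding the sub-Gaussian tail of the half-normal distribution and noting that subtracting its (bounded) mean changes only constants. Invoking the centering lemma is the cleanest route.
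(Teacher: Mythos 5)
Your proof is correct and follows essentially the same route as the paper: note that $\inner{A_i}{X}\sim\cN(0,1)$ with $\bE[|\inner{A_i}{X}|]=\sqrt{2/\pi}$, use the centering lemma (\Cref{lem::centering}) to bound the sub-Gaussian norm of the centered terms, and apply the sub-Gaussian bullet of \Cref{lem::concentration-sum-independent} to the weighted sum. Your writeup merely makes the $w_i$-scaling and the constant-tracking explicit, which the paper leaves implicit.
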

 \begin{sloppypar}
     \proof{Proof}
		Note that $\inner{A_i}{X}\stackrel{iid}{\sim} \cN(0, 1)$ and $\bE[|\inner{A_i}{X}|] = \sqrt{\frac{2}{\pi}}$. Due to \Cref{lem::centering}, we have $\norm{|\inner{A_i}{X}|-\bE[|\inner{A_i}{X}|]}_{\psi_2}\leq\norm{|\inner{A_i}{X}|}_{\psi_2}$. Then final result follows from \Cref{lem::concentration-sum-independent}. \hspace*{\fill}$\Box$\par
	\endproof
 \end{sloppypar}
	{
	\begin{lemma}\label{lem_max_Gaussian}
		Suppose that $A_1, \cdots, A_m$ are i.i.d. standard Gaussian matrices of the shape $d_1\times d_2$. We have
		\begin{equation*}
		    \!\bP\!\left(\!\max_{1\leq i\leq m}\!\norm{A_i}_F\geq \sqrt{2d_1d_2}\right)\!\leq \exp\left({\log m\!-\!\Omega(d_1d_2)}\right).
		\end{equation*}
	\end{lemma}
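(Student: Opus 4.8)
The plan is to view each $\norm{A_i}_F$ as a $1$-Lipschitz function of a standard Gaussian vector, apply Gaussian concentration to control its upper tail, and then conclude with a union bound over the $m$ matrices. First I would identify $\norm{A_i}_F$ with $\norm{\vecc(A_i)}_2$, where $\vecc(A_i)\in\bR^{d_1d_2}$ is a standard Gaussian vector under \Cref{asp_sensing}. Since $x\mapsto\norm{x}_2$ is $1$-Lipschitz on $\bR^{d_1d_2}$, I would bound its mean by Jensen's inequality, using that $\norm{A_i}_F^2$ is a sum of $d_1d_2$ squared independent standard Gaussians:
\begin{equation*}
    \bE\left[\norm{A_i}_F\right]\leq\sqrt{\bE\left[\norm{A_i}_F^2\right]}=\sqrt{d_1d_2}.
\end{equation*}

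Next I would invoke \Cref{lem_lip}, applied to the $1$-Lipschitz function $-\norm{\cdot}_F\circ\vecc$ so that the one-sided deviation bound there becomes an upper-tail bound, to obtain, for every $t\geq 0$,
\begin{equation*}
    \bP\left(\norm{A_i}_F-\bE\left[\norm{A_i}_F\right]\geq t\right)\leq\exp\left(-t^2/2\right).
\end{equation*}
Combining this with the mean bound and choosing $t=(\sqrt{2}-1)\sqrt{d_1d_2}$ yields
\begin{equation*}
    \bP\left(\norm{A_i}_F\geq\sqrt{2d_1d_2}\right)\leq\bP\left(\norm{A_i}_F-\bE\left[\norm{A_i}_F\right]\geq(\sqrt{2}-1)\sqrt{d_1d_2}\right)\leq\exp\left(-\tfrac{(\sqrt{2}-1)^2}{2}d_1d_2\right).
\end{equation*}

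Finally, a union bound over $i\in[m]$ gives
\begin{equation*}
    \bP\left(\max_{1\leq i\leq m}\norm{A_i}_F\geq\sqrt{2d_1d_2}\right)\leq m\exp\left(-\tfrac{(\sqrt{2}-1)^2}{2}d_1d_2\right)=\exp\left(\log m-\Omega(d_1d_2)\right),
\end{equation*}
which is the claimed estimate. There is no substantial obstacle here; the only point requiring mild care is applying \Cref{lem_lip} in the correct direction (via the function $-\norm{\cdot}_F$, since the cited statement is one-sided). Alternatively, one could replace the Lipschitz-concentration step by a standard chi-square tail bound on $\norm{A_i}_F^2$ deviating to twice its mean $d_1d_2$, which yields the same $\exp(-\Omega(d_1d_2))$ rate.
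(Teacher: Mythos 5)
Your proof is correct, but it takes a different route from the paper. The paper works with $\norm{A_i}_F^2$ directly: it notes that each $(A_i)[k,l]^2-1$ is a centered sub-exponential variable and applies the Bernstein-type bound of \Cref{lem::concentration-sum-independent} to the sum of $d_1d_2$ such terms, obtaining $\bP\bigl(\norm{A_i}_F^2\geq 2d_1d_2\bigr)\leq \exp(-\Omega(d_1d_2))$, and then union-bounds over $i$. You instead treat $\norm{A_i}_F=\norm{\vecc(A_i)}_2$ as a $1$-Lipschitz function of a standard Gaussian vector, bound the mean by Jensen, and apply the Gaussian concentration inequality of \Cref{lem_lip} (correctly flipped to an upper tail via $-\norm{\cdot}_F$, which is a legitimate use of the one-sided statement since $-\norm{\cdot}_2$ is also $1$-Lipschitz), before the same union bound. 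Both arguments give the $\exp(\log m-\Omega(d_1d_2))$ rate; yours has the mild advantage of fully explicit constants $(\sqrt2-1)^2/2$ and of not invoking sub-exponential Orlicz norms, while the paper's is a one-line application of a lemma it has already set up for repeated use (and your closing remark about the chi-square tail is essentially the paper's proof). No gap in either step; the argument is complete as written.
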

	\proof{Proof}
		For any fixed $i$ and $1\leq {\alpha}\leq d_1, 1\leq {\beta}\leq d_2$, $(A_i)[{\alpha, \beta}]^2-1$ is a sub-exponential random variable with zero mean and parameter $1$. Therefore, an application of \Cref{lem::concentration-sum-independent} implies that $\bP(\norm{A_i}_F\geq \sqrt{1+\delta}{\sqrt{d_1d_2}})\leq \exp(-\Omega({d_1d_2}))$. Setting $\delta =1$ followed by a union bound leads to the final result. \hspace*{\fill}$\Box$\par
	\endproof
	
    \begin{lemma}[Corollary 5.35 in~\cite{vershynin2010introduction}]\label{lem_gaussian}
		Consider a random matrix $X\in\bR^{N\times n}$ with i.i.d. entries drawn from $\cN(0,1)$ and $N>n$. Then for every $t> 0$, with probability at least $1 - 2 \exp\{-t^2/2\}$ one has  
		\begin{equation}\nonumber
			\sigma_{\min}(X)\geq \sqrt{N}-\sqrt{n}-t.
		\end{equation}
	\end{lemma}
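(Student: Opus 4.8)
The plan is to decompose the bound into two essentially independent pieces: a lower bound on the expectation $\bE[\sigma_{\min}(X)]$, and a Gaussian concentration estimate showing that $\sigma_{\min}(X)$ rarely dips below its mean. The concentration piece is entirely routine; the expectation piece is Gordon's classical estimate for the smallest singular value of a Gaussian matrix, which I would derive through a Gaussian min--max comparison inequality in the spirit of \Cref{lem_Gaussian_exp}.

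For the expectation bound, I would start from the min--max variational formula
\[
\sigma_{\min}(X)=\min_{v\in\bR^{n},\ \norm{v}=1}\ \max_{u\in\bR^{N},\ \norm{u}=1}\ \inner{u}{Xv},
\]
and regard $X_{v,u}:=\inner{u}{Xv}$ as a centered Gaussian process over the product of the two unit spheres, with covariance $\bE[X_{v,u}X_{v',u'}]=\inner{v}{v'}\inner{u}{u'}$. The idea is to compare this process with the much simpler Gaussian process $Y_{v,u}=\inner{g}{u}+\inner{h}{v}$, where $g\in\bR^{N}$ and $h\in\bR^{n}$ are independent standard Gaussian vectors, using a comparison inequality of Gordon type so that $\bE[\min_{v}\max_{u}X_{v,u}]\ge\bE[\min_{v}\max_{u}Y_{v,u}]$. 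The right-hand side decouples into
\[
\bE\Bigl[\max_{\norm{u}=1}\inner{g}{u}\Bigr]+\bE\Bigl[\min_{\norm{v}=1}\inner{h}{v}\Bigr]=\bE\norm{g}-\bE\norm{h},
\]
and $\bE\norm{g}-\bE\norm{h}\ge\sqrt{N}-\sqrt{n}$ follows from the standard fact that the gap $m\mapsto\sqrt{m}-\bE\norm{z_m}$ (with $z_m$ a standard Gaussian vector in $\bR^{m}$) is non-increasing, together with $N>n$. Hence $\bE[\sigma_{\min}(X)]\ge\sqrt{N}-\sqrt{n}$.

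For the concentration, I would view $A\mapsto\sigma_{\min}(A)$ as a function on $\bR^{N\times n}$ with the Euclidean (Frobenius) metric; it is $1$-Lipschitz because $|\sigma_{\min}(A)-\sigma_{\min}(A')|\le\norm{A-A'}\le\norm{A-A'}_{F}$ by Weyl's perturbation inequality for singular values. Applying \Cref{lem_lip} with $K=1$ gives $\bP(\sigma_{\min}(X)-\bE[\sigma_{\min}(X)]\le-t)\le\exp(-t^{2}/2)$ for every $t\ge0$. Combining with the expectation bound,
\[
\bP\bigl(\sigma_{\min}(X)\le\sqrt{N}-\sqrt{n}-t\bigr)\le\exp(-t^{2}/2)\le2\exp(-t^{2}/2),
\]
which is exactly the claimed estimate (the factor $2$ leaves room for also recording the matching upper bound on $\sigma_{\max}$, which the same argument yields through a Sudakov--Fernique comparison).

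The step I expect to be the main obstacle is the Gaussian comparison itself: one has to choose the auxiliary process --- and, if needed, its normalization --- so that the covariance hypotheses of the min--max comparison are genuinely met, which is precisely the content of Gordon's theorem and is more subtle than the Sudakov--Fernique comparison that handles $\sigma_{\max}$ (there the relevant covariance inequalities all point the ``easy'' way, whereas here it is the cross-index condition that is tight). Everything else --- the variational formula, the decoupling of the auxiliary minimax, the $\Gamma$-ratio estimate giving $\bE\norm{g}-\bE\norm{h}\ge\sqrt{N}-\sqrt{n}$, Weyl's inequality, and the Lipschitz concentration --- is standard and requires no new ideas.
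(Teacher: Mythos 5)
The paper never proves this lemma---it is imported verbatim as Corollary 5.35 of~\cite{vershynin2010introduction}---and your argument is precisely the proof behind that citation: Gordon's comparison theorem giving $\bE[\sigma_{\min}(X)]\geq \sqrt{N}-\sqrt{n}$ (via the min--max formula, the auxiliary process $\langle g,u\rangle+\langle h,v\rangle$, and the monotonicity of $\sqrt{m}-\bE\norm{z_m}$), combined with $1$-Lipschitz Gaussian concentration as in \Cref{lem_lip}; so the proposal is correct and matches the source's approach. The one point to note is that the comparison cannot be run through the paper's variance-matched \Cref{lem_Gaussian_exp} as stated (the variances are $1$ versus $2$, and after a $1/\sqrt{2}$ normalization the cross-index covariance condition fails), so you genuinely need Gordon's increment-form inequality or the standard fix of adding an independent scalar $\cN(0,1)$ to $\langle u,Xv\rangle$ to equalize variances---which is exactly the subtlety you flag and correctly attribute to Gordon's theorem.
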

    }
	
	\begin{lemma}[Concentration of binomial random variable, \cite{chung2006concentration}]
		\label{lem::concentration-Bernoulli}
		Suppose that $X$ has a binomial distribution with parameters $(m,p)$. Then, we have 
		\begin{equation*}
			\bP\left(\left|X-pm\right|\geq \delta\right)\leq 2\exp({-{\delta^2}/{2pm}}).
		\end{equation*}
		In particular, when choosing $\delta =\frac{pm}{2}$, we have that with probability at least $1-2\exp({pm/4})$,
		\begin{equation*}
			X\geq \frac{pm}{2}, \quad \text{and}\quad X\leq \frac{3pm}{2}.
		\end{equation*}
	\end{lemma}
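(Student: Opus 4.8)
The first inequality is a standard Chernoff--Bernstein concentration bound for a binomial random variable, and I would give a short self-contained derivation. Write $X=\sum_{i=1}^m B_i$ with $B_1,\dots,B_m$ i.i.d.\ Bernoulli$(p)$, and center them by setting $Y_i=B_i-p$, so that $\bE[Y_i]=0$, $|Y_i|\le 1$, and $\Var(Y_i)=p(1-p)\le p$. The plan is the usual exponential-moment argument: bound the moment generating function by $\bE[e^{\lambda Y_i}]\le 1+\lambda^2\bE[Y_i^2]\le \exp(\lambda^2 p)$ for $|\lambda|\le 1$ (using $e^x\le 1+x+x^2$ for $x\le 1$ together with $\bE[Y_i]=0$), multiply over $i$ to get $\bE[e^{\lambda(X-pm)}]\le e^{\lambda^2 pm}$, apply Markov's inequality to $e^{\lambda(X-pm)}$ and to $e^{-\lambda(X-pm)}$, and optimize over $\lambda\asymp\delta/(pm)$, which lies in $[0,1]$ in the relevant regime $\delta\lesssim pm$. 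This produces $\bP(|X-pm|\ge\delta)\le 2\exp(-c\delta^2/(pm))$ for a universal $c>0$; a slightly tighter treatment of the moment generating function (or simply quoting the recorded form in~\cite{chung2006concentration}) yields the stated constant $c=1/2$. As an alternative, one can apply the sub-exponential branch of Lemma~\ref{lem::concentration-sum-independent} to $\tfrac1m\sum_i Y_i$: this is quicker, but it gives a variance proxy proportional to $\norm{Y_i}_{\psi_1}^2$ rather than the sharp value $p$, so the Bernstein refinement above is what is needed to match the variance-dependent constant in the statement.

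For the ``in particular'' part, substitute $\delta=pm/2$ into the tail bound to obtain $\bP\!\left(|X-pm|\ge pm/2\right)\le 2\exp\!\left(-(pm/2)^2/(2pm)\right)=2\exp(-pm/8)$. On the complementary event, which has probability at least $1-2\exp(-\Omega(pm))$, we have simultaneously $X\ge pm/2$ and $X\le 3pm/2$, which is exactly the claim.

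The main ``obstacle'' here is really only bookkeeping: there is no conceptual difficulty, as this is a textbook Chernoff/Bernstein estimate. The one point requiring minor care is the constant in the exponent --- the multiplicative Chernoff bounds for the upper and lower binomial tails carry different constants ($\eta^2 pm/3$ for $\bP(X\ge(1+\eta)pm)$ versus $\eta^2 pm/2$ for $\bP(X\le(1-\eta)pm)$), so a clean two-sided bound must use the weaker of the two, and pinning down the precise constant as written amounts to citing the exact inequality in~\cite{chung2006concentration}.
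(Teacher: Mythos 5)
Your proposal is correct, and in fact it supplies more than the paper does: the paper gives no proof of this lemma at all, simply quoting it from \cite{chung2006concentration}. Your centered-Bernoulli MGF argument is the standard route and is sound — with the caveat you yourself flag that the bound $\bE[e^{\lambda Y_i}]\le e^{\lambda^2 p}$ for $|\lambda|\le 1$, optimized at $\lambda=\delta/(2pm)$, yields $\exp(-\delta^2/(4pm))$ rather than the stated constant $1/2$; recovering $1/2$ (for the lower tail) or the Chung--Lu form $\exp(-\delta^2/(2(pm+\delta/3)))$ (for the upper tail) requires the sharper treatment of the exponential moment that you defer to the reference, which is exactly what the paper does implicitly by citing it. Your handling of the ``in particular'' part is also right: substituting $\delta=pm/2$ gives failure probability $2\exp(-pm/8)$, and the event $pm/2\le X\le 3pm/2$ holds on the complement. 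Note that the paper's displayed probability $1-2\exp(pm/4)$ is evidently a typo (the sign is missing and the exponent should be $pm/8$ to be consistent with its own first display); your version is the consistent one, and since the lemma is only ever invoked at the $1-\exp(-\Omega(pm))$ level, the discrepancy in constants is immaterial to the rest of the paper.
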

	
	\begin{lemma}\label{lem_bernoulli_lb}
		Suppose that $X$ has a binomial distribution with parameters $(m,p)$ and $0\leq p<1/2$. Then, with probability at least $1-\exp(-\Omega((1-2p)^2m))$, we have
		\begin{equation*}
			X\leq \max\left\{\frac{3}{8}, \frac{1-(1-2p)^2}{2}\right\}\cdot m.
		\end{equation*}
	\end{lemma}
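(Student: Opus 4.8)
The plan is to reduce the statement to a one-sided binomial deviation bound and then split on the magnitude of $p$. First I would rewrite the threshold: since $1-(1-2p)^2 = 4p(1-p)$, we have $\frac{1-(1-2p)^2}{2} = 2p(1-p)$, so the claim is that $X\le \max\{3/8,\,2p(1-p)\}\cdot m$ with probability at least $1-\exp(-\Omega((1-2p)^2 m))$. A short check shows $2p(1-p)\ge 3/8$ iff $p\in[1/4,3/4]$, so on the range $0\le p<1/2$ the threshold equals $2p(1-p)m$ when $p\in[1/4,1/2)$ and equals $\tfrac38 m$ when $p\in[0,1/4)$; in both cases it strictly exceeds $\bE[X]=pm$ because $p<1/2$. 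Thus in either regime it suffices to bound $\bP(X\ge \text{threshold})$, and I would handle the two regimes separately since the gap-to-mean ratio behaves very differently.

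For $p\in[1/4,1/2)$ the gap is $2p(1-p)m-pm = p(1-2p)m\le pm$, so \Cref{lem::concentration-Bernoulli} applies with $\delta=p(1-2p)m$ and yields $\bP(X\ge 2p(1-p)m)\le 2\exp(-\delta^2/(2pm)) = 2\exp(-p(1-2p)^2 m/2)\le 2\exp(-(1-2p)^2 m/8)$, where the last inequality uses $p\ge 1/4$. For $p\in[0,1/4)$ the target deviation $(\tfrac38-p)m$ can be much larger than the mean $pm$, so rather than applying the tail bound directly I would first use stochastic dominance: since $p<1/4$, $X$ is stochastically dominated by $X'\sim\mathrm{Binomial}(m,1/4)$, hence $\bP(X\ge\tfrac38 m)\le \bP(X'\ge\tfrac38 m) = \bP(X'-\tfrac14 m\ge \tfrac18 m)$. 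Now the deviation $\tfrac18 m$ equals half the mean of $X'$, so \Cref{lem::concentration-Bernoulli} applies cleanly and gives $\bP(X\ge\tfrac38 m)\le 2\exp(-(m/8)^2/(2\cdot\tfrac14 m)) = 2\exp(-m/32)\le 2\exp(-(1-2p)^2 m/32)$ since $(1-2p)^2\le 1$. Combining the two cases gives the claim, with the hidden constant in $\Omega(\cdot)$ being $1/32$.

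Since this is a routine concentration argument, there is no serious obstacle; the only point needing care is exactly the reason for the case split — in the small-$p$ regime one must not plug the deviation $(\tfrac38-p)m$ directly into \Cref{lem::concentration-Bernoulli}, whose stated form degrades when the deviation dwarfs the mean, and the clean fix is to first dominate by $\mathrm{Binomial}(m,1/4)$ (equivalently, one could invoke a Hoeffding-type bound on the bounded centered Bernoulli summands via \Cref{lem::concentration-sum-independent}, but the dominance route is shorter).
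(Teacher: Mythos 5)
Your proposal is correct and takes essentially the same route as the paper: a case split at $p=1/4$, with \Cref{lem::concentration-Bernoulli} applied with deviation $p(1-2p)m$ for $p\in[1/4,1/2)$ (using $pm(1+\eta)=2p(1-p)m$ with $\eta=1-2p$, exactly your computation) and a deviation of order $m$ for $p<1/4$. The only difference is your stochastic-dominance detour by $\mathrm{Binomial}(m,1/4)$ in the small-$p$ case, which the paper bypasses by invoking \Cref{lem::concentration-Bernoulli} directly to get $\bP(X\le 3m/8)\ge 1-\exp(-\Omega(m))$; both give the same conclusion.
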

	\proof{Proof}
		We consider two cases. First, suppose that $1/4\leq p<1/2$. Define $\eta = 1-2p$. According to \Cref{lem::concentration-Bernoulli}, we have 
		\begin{equation*}
			\begin{aligned}
			    &\bP\left(X\!\leq pm(1+\eta)\right)\geq 1-\exp(-\Omega(\eta^2m))
			\implies \bP\!\left(\!X\leq \frac{1-(1-2p)^2}{2}{m}\!\!\right)\!\geq 1\!-\!\exp(-\Omega((1\!-\!2p)^2m)),
			\end{aligned}
		\end{equation*}
		where the last inequality follows from the fact that $pm(1+\eta) = \frac{1-(1-2p)^2}{2}{m}$.
		On the other hand, for $0\leq p\leq 1/4$, \Cref{lem::concentration-Bernoulli} implies that
		\begin{equation*}
			\bP\left(X\leq 3m/8\right)\geq 1-\exp(-\Omega(m)).
		\end{equation*}
		Combining the above two cases leads to the desired result. \hspace*{\fill}$\Box$\par
	\endproof

 Armed with the concentration bounds discussed above, we are now prepared to present our main proofs.
 
\section{Symmetric Case: Sub-optimality via Parametric Second-order Perturbations}
\label{sec_sym}
At the core of our analysis lies a class of parametric second-order perturbations that can be used to show the sub-optimality of the true solutions for both symmetric matrix sensing and symmetric matrix completion. Let $X^\star = V^\star\Sigma^\star {V^\star}^\top$ be the eigen-decomposition of $X^\star$, where $V^\star\in\bR^{d\times r}$ is an orthonormal matrix and $\Sigma^\star\in\bR^{r\times r}$ is a diagonal matrix collecting the nonzero eigenvalues of $X^\star$. Recall that $\cO_{r\times k}:=\{R\in\bR^{r\times k}: RR^\top = I_{r\times r}\}$. The following lemma provides another characterization of the set $\cW = \{W\in \bR^{d\times k}: WW^\top = X^\star\}$.
	\begin{lemma}\label{lem_rotation}
		We have
		$$
		W\in \cW\qquad\iff \qquad W = V^*{\Sigma^\star}^{1/2}R\ \text{ for some $R\in \cO_{r\times k}$}.
		$$
	\end{lemma}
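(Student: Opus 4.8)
The plan is to prove both implications by elementary linear algebra, exploiting that $\Sigma^\star\in\bR^{r\times r}$ is invertible (its diagonal collects the \emph{nonzero} eigenvalues of $X^\star$) and that $(V^\star)^\top V^\star = I_r$.

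The ``$\Leftarrow$'' direction is immediate: if $W = V^\star(\Sigma^\star)^{1/2}R$ with $RR^\top = I_r$, then
\begin{equation*}
WW^\top = V^\star(\Sigma^\star)^{1/2}RR^\top(\Sigma^\star)^{1/2}(V^\star)^\top = V^\star(\Sigma^\star)^{1/2}(\Sigma^\star)^{1/2}(V^\star)^\top = V^\star\Sigma^\star(V^\star)^\top = X^\star,
\end{equation*}
so $W\in\cW$.

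For the ``$\Rightarrow$'' direction I would proceed in three steps. First, I would show that the column space of $W$ coincides with that of $V^\star$. From $WW^\top = X^\star$ we get $\ker(W^\top) = \ker(WW^\top)$, since $W^\top x = 0$ is equivalent to $x^\top WW^\top x = \norm{W^\top x}^2 = 0$, which in turn is equivalent to $WW^\top x = 0$; and $\ker(WW^\top) = \ker(X^\star) = \ker((V^\star)^\top)$ because $X^\star = V^\star\Sigma^\star(V^\star)^\top$ with $\Sigma^\star$ invertible and $V^\star$ of full column rank. Taking orthogonal complements yields $\mathrm{range}(W) = \mathrm{range}(V^\star)$, hence $W = V^\star B$ for some $B\in\bR^{r\times k}$. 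Second, substituting this back into $WW^\top = X^\star$ gives $V^\star BB^\top(V^\star)^\top = V^\star\Sigma^\star(V^\star)^\top$; left-multiplying by $(V^\star)^\top$ and right-multiplying by $V^\star$ and using $(V^\star)^\top V^\star = I_r$ gives $BB^\top = \Sigma^\star$. Third, I would set $R := (\Sigma^\star)^{-1/2}B$, which is well defined since $\Sigma^\star\succ 0$; then $RR^\top = (\Sigma^\star)^{-1/2}BB^\top(\Sigma^\star)^{-1/2} = (\Sigma^\star)^{-1/2}\Sigma^\star(\Sigma^\star)^{-1/2} = I_r$, so $R\in\cO_{r\times k}$, and $W = V^\star B = V^\star(\Sigma^\star)^{1/2}R$, as claimed.

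There is no substantive obstacle here; the only point that merits a line of justification is the kernel identity $\ker(W^\top)=\ker(WW^\top)$ used to conclude $\mathrm{range}(W)=\mathrm{range}(V^\star)$, i.e.\ that every factorization $WW^\top=X^\star$ necessarily has its columns spanned by $V^\star$. Everything else is a direct computation using the invertibility of $\Sigma^\star$.
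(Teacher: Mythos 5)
Your proposal is correct and follows essentially the same route as the paper's proof: your kernel/range argument (via $x^\top WW^\top x = \norm{W^\top x}^2$) is exactly the paper's observation that ${V^\star_\perp}^\top W W^\top V^\star_\perp = 0$ forces ${V^\star_\perp}^\top W = 0$, and your $R = (\Sigma^\star)^{-1/2}B = (\Sigma^\star)^{-1/2}{V^\star}^\top W$ is the same row-orthonormal matrix the paper constructs. The only difference is packaging (column-space language versus the projection decomposition $W = V^\star{V^\star}^\top W + V^\star_\perp{V^\star_\perp}^\top W$), with no substantive gap in either.
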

	\proof{Proof}
		If $W = V^*{\Sigma^\star}^{1/2}R$ for some $R\in \cO_{r\times k}$, then $WW^\top = X^\star$ and $W\in \cW$. Now, suppose that $W\in \cW$. We have
		\begin{equation*}
		    \begin{aligned}
			WW^\top = V^\star\Sigma^\star {V^\star}^\top
            &\implies {\Sigma^\star}^{-1/2}{V^\star}^\top WW^\top V^\star{\Sigma^\star}^{-1/2} = I_{r\times r}\nonumber\\
			&\implies {\Sigma^\star}^{-1/2}{V^\star}^\top W = R \ \text{for some $R\in \cO_{r\times k}$}\nonumber\\
			&\implies V^\star{V^\star}^\top W = V^\star{\Sigma^\star}^{1/2}R\ \text{for some $R\in \cO_{r\times k}$}.\nonumber
		\end{aligned}
		\end{equation*}
		Moreover, let $V^\star_{\perp}$ be the orthogonal complement of $V^\star$. We have
		\begin{equation*}
		    \begin{aligned}
		        WW^\top = V^\star\Sigma^\star {V^\star}^\top
            &\implies {{V^\star_{\perp}}^{\top}} WW^\top V^\star_{\perp}= 0_{(d-r)\times (d-r)}\implies {V^\star_{\perp}}^{\top} W = 0\implies  {V^\star_{\perp}}{V^\star_{\perp}}^{\top} W = 0.
		    \end{aligned}
		\end{equation*}
		Combining the above two equalities, we have
		\begin{equation*}
			WW^\top = V^\star{V^\star}^\top W + {V^\star_{\perp}}{V^\star_{\perp}}^{\top} W = V^\star{\Sigma^\star}^{1/2}R\nonumber
		\end{equation*}
		for some $R\in \cO_{r\times k}$. This completes the proof. \hspace*{\fill}$\Box$\par
	\endproof
	Based on the above lemma, the set of true solutions can be characterized as $\cW = \{V^\star{\Sigma^\star}^{1/2}R: R\in\cO_{r\times k}\}$. This characterization of $\cW$ will be useful in our subsequent analysis. Our goal is to show that, {for every $\trueW\in\cW$, there exists a descent direction $\Delta W\in \cB$ such that $f_{\ell_1}(\trueW+\gamma\Delta W) - f_{\ell_1}(\trueW)\leq -\Omega(\gamma^2)$ for every $0<\gamma\leq \bar{\gamma}$.}
	{To this end, for any $\trueW\in \cW$,} we consider the following set of \textit{parametric second-order perturbations}:
	{\begin{equation*}
		\begin{aligned}
			&\widehat{\cU}_{W^\star} := \left\{U R: U\in \bR^{d\times (k-r)}, \norm{U}_F\leq 1,R\in \cO_{(k-r)\times k}, \trueW {R}^\top = 0\right\}.
		\end{aligned}
	\end{equation*}}
	{Note that any perturbation $\Delta W = U R\in \widehat{\cU}_{W^\star}$} is indeed second-order since:
	\begin{equation*}
	    \begin{aligned}
	        &\norm{(\trueW+{\gamma}\Delta W)(\trueW+{\gamma}\Delta W)^\top - \trueW{\trueW}^\top}_F= {\gamma^2}\norm{UU^\top}_F\leq \gamma^2.
	    \end{aligned}
	\end{equation*}
	Evidently, we have {$\widehat{\cU}_{W^\star}\subset \cB$}. Moreover, {$\widehat{\cU}_{W^\star}$} is non-empty since, in light of $\cW = \{V^\star{\Sigma^\star}^{1/2}R: R\in\cO_{r\times k}\}$, there always exists $R\in \cO_{(k-r)\times k}$ such that $\trueW {R}^\top = 0$. {Therefore, to prove our main result, it suffices to find $\bar U\in \cB^{k-r} := \{U: U\in \bR^{d\times (k-r)}, \norm{U}_F\leq 1\}$ such that the direction defined as $\Delta W = \bar UR\in \widehat{\cU}_{W^\star}$ achieves the desired decrease in the objective function. To this goal, note that, for every $\Delta W = UR\in \widehat{\cU}_{W^\star}$, we have}
    {\begin{align}
 \label{eq_ub_uniform}
		f_{\ell_1}(\trueW+\gamma\Delta W) - f_{\ell_1}(\trueW)
         = \frac{1}{m}\sum_{i=1}^m\left|\inner{A_i}{\gamma^2 UU^\top}-\epsilon_i\right| - |\epsilon_i|.
	\end{align}
    } {An important property of the above equation is that its right-hand side is entirely independent of the specific choice of $\trueW$. In other words, proving the existence of a {\it single} $\bar{U}$ that achieves a second-order decrease for the right-hand side suffices to ensure that the parametric perturbation $\Delta W = \bar U R \in \widehat{\cU}_{W^\star}$---which now depends on $W^\star$---achieves the desired decrease for {\it all} $W^\star\in \cW$. The rest of our analysis is devoted to establishing the existence of such $\bar U$ for the symmetric matrix completion and symmetric matrix sensing. In fact, this can be easily done for the symmetric matrix completion, as shown below.
    } 
 
\subsection{Proof of \Cref{thm_completion_sym}.} 
 According to our assumption, there exists $({\alpha, \alpha})\in {\Omega}$ such that $E[{\alpha, \alpha}]\geq t_0$. Define $\bar{U}\in \bR^{d\times (k-r)}$ as  
	\begin{equation*}
	    \bar{U}[{\alpha,\beta}] = \begin{cases}
		{1} & \text{if}\ \ ({\alpha,\beta}) = ({\alpha}, 1)\\
		0 & \text{otherwise}
	\end{cases}.
	\end{equation*}
	Therefore,
    {\begin{equation}
        \label{eq_x1}
	\begin{aligned}
        \frac{1}{m}\sum_{i=1}^m\left|\inner{A_i}{\gamma^2\bar U\bar U^\top}-\epsilon_i\right| - |\epsilon_i|
		&= \frac{1}{m}\left(\left|E[{\alpha, \alpha}] - \left(\gamma^2\bar U\bar U^\top\right)[\alpha, \alpha]\right| - |E[\alpha, \alpha]|\right)\\
		&= -\frac{1}{m}\gamma^2\\
        &\leq -\frac{1}{2sd^2}\gamma^2.
	\end{aligned}
    \end{equation}}
	Here the second to last equality follows from the assumption $\gamma\leq \sqrt{t_0}$ which implies $E[{\alpha, \alpha}] - \left(\bar U\bar U^\top\right)[{\alpha, \alpha}]\geq 0$, and the last inequality follows from the fact that $m$ is binomial with parameters $(s,d^2)$ and hence, according to \Cref{lem::concentration-Bernoulli}, $m\leq 2sd^2$ with probability at least $1-\exp(-\Omega(sd^2))$. This inequality combined with \Cref{eq_ub_uniform} completes the proof. $\hfill\square$
 \subsection{Proof of \Cref{thm_no_benign_sym}}

 Next, we extend our analysis to the symmetric matrix sensing. 
 When the measurement matrices do not follow the matrix completion model, the explicit perturbation defined in the proof of \Cref{thm_completion_sym} may no longer lead to a decrease in the objective value. To address this issue, we provide a more delicate upper bound for \Cref{eq_ub_uniform}. Recall that $\cS$ is the index set of the noisy measurements, and define $\bar\cS = [m]\backslash \cS$ as the set of \textit{clean} measurements. Moreover, define $\cS_t = \{i: |\epsilon_i|\geq t_0\}$. The set $\cS_t$ denotes the set of measurements for which the magnitude of noise exceeds $t_0$.
	\begin{lemma}\label{lem_ub_ms_sym}
		Suppose that $\gamma\leq \sqrt{\frac{t_0}{\max_i\{\norm{A_i}_F\}}}$. For every $U\in\cB^{k-r}$, we have
		{\begin{equation*}
		    \begin{aligned}
		        &\frac{1}{m}\sum_{i=1}^m\left|\inner{A_i}{\gamma^2UU^\top}-\epsilon_i\right| - |\epsilon_i|\leq \gamma^2\cdot\left(A\left(UU^\top\right)+B\left(UU^\top\right)\right),
		    \end{aligned}
		\end{equation*}}

		where 
		$$A\left(UU^\top\right) = \frac{1}{m}\sum_{i\in\cS_t}- \sign(\epsilon_i)\inner{A_i}{UU^\top},\qquad B\left(UU^\top\right)= \frac{1}{m}\sum_{i\not\in \cS_t}\left|\inner{A_i}{UU^\top}\right|.
		$$
	\end{lemma}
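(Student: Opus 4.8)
\textbf{Proof plan for Lemma~\ref{lem_ub_ms_sym}.} The plan is to establish the inequality \emph{pointwise} in $U\in\cB^{k-r}(\gamma)$: I will show that for every such $U$,
\begin{equation*}
\frac{1}{m}\sum_{i=1}^m\left(\left|\inner{A_i}{UU^\top}-\epsilon_i\right|-|\epsilon_i|\right)\;\leq\; A\!\left(UU^\top\right)+B\!\left(UU^\top\right),
\end{equation*}
after which taking the minimum over $U\in\cB^{k-r}(\gamma)$ on both sides yields the claim. I split the sum into the index sets $\cS_t$ and $\cS_t^c$ and treat each summand separately.

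For $i\notin\cS_t$, I would simply invoke the (reverse) triangle inequality in the form $|a-b|-|b|\le|a|$ with $a=\inner{A_i}{UU^\top}$ and $b=\epsilon_i$, so that the $\cS_t^c$-part of the sum is bounded by $\frac{1}{m}\sum_{i\notin\cS_t}|\inner{A_i}{UU^\top}|=B(UU^\top)$. For $i\in\cS_t$, the key observation is that the perturbation is small enough to linearize the absolute value: since $\norm{UU^\top}_F\le\norm{U}_F^2\le\gamma^2$, Cauchy--Schwarz gives $|\inner{A_i}{UU^\top}|\le\norm{A_i}_F\gamma^2\le(\max_j\norm{A_j}_F)\,\gamma^2\le t_0\le|\epsilon_i|$, where the penultimate step uses the hypothesis $\gamma^2\le t_0/\max_j\norm{A_j}_F$ and the last uses $i\in\cS_t$. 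Consequently $\epsilon_i-\inner{A_i}{UU^\top}$ has the same sign as $\epsilon_i$ (which is nonzero because $|\epsilon_i|\ge t_0>0$), so $|\inner{A_i}{UU^\top}-\epsilon_i|=\sign(\epsilon_i)\bigl(\epsilon_i-\inner{A_i}{UU^\top}\bigr)=|\epsilon_i|-\sign(\epsilon_i)\inner{A_i}{UU^\top}$; hence the $i$-th summand equals exactly $-\sign(\epsilon_i)\inner{A_i}{UU^\top}$, and the $\cS_t$-part of the sum equals $A(UU^\top)$. Adding the two parts gives the pointwise bound.

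There is no real obstacle here; the only point requiring any care is the sign argument on $\cS_t$, i.e.\ verifying that the hypothesis on $\gamma$ forces $|\inner{A_i}{UU^\top}|\le|\epsilon_i|$ uniformly over $U\in\cB^{k-r}(\gamma)$ and over $i\in\cS_t$, which is precisely why the threshold set $\cS_t$ (rather than the full noise support $\cS$) is used in the definition of $A$ and $B$. Everything else is the triangle inequality and passing the minimum through the pointwise inequality.
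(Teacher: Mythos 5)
Your proposal is correct and follows essentially the same route as the paper's proof: split the sum over $\cS_t$ and its complement, bound the complement part by $B(UU^\top)$ via the triangle inequality, and on $\cS_t$ use $|\inner{A_i}{UU^\top}|\le \max_j\norm{A_j}_F\,\gamma^2\le t_0\le|\epsilon_i|$ to linearize the absolute value exactly into $-\sign(\epsilon_i)\inner{A_i}{UU^\top}$, then pass to the minimum. Your explicit sign verification is just a slightly more detailed spelling-out of the same step the paper performs.
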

	\proof{Proof}
		We have
		\begin{equation*}
		    \begin{aligned}
		        &\frac{1}{m}\sum_{i=1}^m\left|\inner{A_i}{{\gamma^2}UU^\top}-\epsilon_i\right| - |\epsilon_i| = \underbrace{\frac{1}{m}\sum_{i\in \cS_t}\left|\inner{A_i}{{\gamma^2}UU^\top}-\epsilon_i\right| - |\epsilon_i|}_{E_1}+\underbrace{\frac{1}{m}\sum_{i\not\in \cS_t}\left|\inner{A_i}{{\gamma^2}UU^\top}-\epsilon_i\right| - |\epsilon_i|}_{E_2}.
		    \end{aligned}
		\end{equation*}
		To bound $E_2$, we apply triangle inequality, which leads to
		\begin{equation*}
		    \begin{aligned}
		        &\frac{1}{m}\sum_{i\not\in \cS_t}\left|\inner{A_i}{{\gamma^2}UU^\top}-\epsilon_i\right| - |\epsilon_i|\leq \frac{1}{m}\sum_{i\not\in \cS_t}\left|\inner{A_i}{{\gamma^2}UU^\top}\right|={\gamma^2\cdot}B\left(UU^{\top}\right).
		    \end{aligned}
		\end{equation*}
		Next, we provide an upper bound for $E_1$. To this goal, we write, for every $i\in \cS_t$,
		\begin{equation*}
			\left|\inner{A_i}{{\gamma^2}UU^\top}-\epsilon_i\right|-|\epsilon_i| 
			=- {\gamma^2}\sign(\epsilon_i)\inner{A_i}{UU^\top},
		\end{equation*}
		where the equality follows from $\left|\inner{A_i}{{\gamma^2}UU^\top}\right|\leq {\gamma^2}\norm{A_i}_F\norm{UU^{\top}}_F\leq \max_{i}\{\norm{A_i}_F\}\gamma^2\leq t_0$. This implies that $E_1= {\gamma^2\cdot}A\left(UU^\top\right)$, thereby completing the proof. \hspace*{\fill}$\Box$\par
	\endproof

    {We note that the above lemma deterministically holds for all \( U \in \mathcal{B}^{k-r} \). Our next goal is to establish the existence of at least one \( U \) for which \( A(UU^\top) + B(UU^\top) \leq -\Omega(1) \) with high probability. This result, combined with \eqref{eq_ub_uniform}, suffices to prove Theorem~\ref{thm_no_benign_sym}.} Note that $B\left(UU^\top\right)\geq 0$ for every $U\in {\cB^{k-r}}$, so our hope is to show that $A\left(UU^\top\right)$ can take sufficiently negative value to dominate $B\left(UU^\top\right)$. As will be shown in our next lemma, this can be established with a high probability. In particular, we show that, when the measurement matrices follow the matrix sensing model, there exists, with high probability, $U\in{\cB^{k-r}}$ and coefficients $C_A>C_B\geq 0$ such that
	{\begin{equation}\label{eq_ABC}
		A\left({U}{U}^\top\right)\leq -C_A,\quad B\left({U}{U}^\top\right)\leq C_B,
	\end{equation}}
	\begin{lemma}\label{lem_ABC2}
		Define $r_0 = \min\{k-r,d/2\}$. Suppose that the measurement matrices follow the matrix sensing model in  \Cref{asp_sensing}. Conditioned on the noise $\{\epsilon_i\}_{i=1}^m$ and with probability at least $1-\exp(-\Omega(dr_0))$ over the randomness of the measurement matrices, there exists $U\in{\cB^{k-r}}$ such that \Cref{eq_ABC} holds with
		\begin{equation*}
			C_A = c\sqrt{\frac{dr_0|\cS_t|}{m^2}},\qquad C_B =  1,
		\end{equation*}
		for some constant $c>0$.
	\end{lemma}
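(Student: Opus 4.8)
The plan is to produce one explicit perturbation $\bar U\in\cB^{k-r}(\gamma)$ satisfying both inequalities in \Cref{eq_ABC}. Throughout we condition on the noise, so $\cS_t$ and the signs $\{\sign(\epsilon_i):i\in\cS_t\}$ are fixed; assume $\cS_t\neq\emptyset$, as otherwise $C_A=0$ and $\bar U=0$ works. Put $M:=\sum_{i\in\cS_t}\sign(\epsilon_i)A_i$, which by \Cref{lem_Gaussian} equals $\sqrt{|\cS_t|}\,G$ for a standard Gaussian matrix $G\in\bR^{d\times d}$. First I would choose $P\in\cG(d,r_0)$ whose range (nearly) attains $\sup_{Q\in\cG(d,r_0)}\inner{G}{Q}$, take an orthonormal basis $v_1,\dots,v_{r_0}$ of $\mathrm{range}(P)$, and let $\bar U\in\bR^{d\times(k-r)}$ have $j$-th column $\tfrac{\gamma}{\sqrt{r_0}}v_j$ for $j\le r_0$ and $0$ otherwise; this is well-defined because $r_0=\min\{k-r,d/2\}\le\min\{k-r,d\}$, and it gives $\norm{\bar U}_F=\gamma$ and $\bar U\bar U^\top=\tfrac{\gamma^2}{r_0}P$.

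For the $A$-bound, $A(\bar U\bar U^\top)=-\tfrac1m\inner{M}{\bar U\bar U^\top}=-\tfrac{\gamma^2\sqrt{|\cS_t|}}{mr_0}\,\sup_{Q\in\cG(d,r_0)}\inner{G}{Q}$, and since $r_0\le d/2$ the Grassmannian minoration inequality \Cref{lem::lower-bound-Gaussian-process} (with $(n,p)=(d,r_0)$) lower-bounds this supremum with probability at least $1-\exp(-\Omega(dr_0))$, delivering the first inequality in \Cref{eq_ABC}. For the $B$-bound, the key structural fact is that $P$ is a measurable function of $\{A_i\}_{i\in\cS_t}$ and the (fixed) signs, hence independent of the clean block $\{A_i\}_{i\notin\cS_t}$. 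Conditioning further on $P$, the scalars $\inner{A_i}{P}$, $i\notin\cS_t$, are i.i.d.\ $\cN(0,\norm{P}_F^2)=\cN(0,r_0)$, so $B(\bar U\bar U^\top)=\tfrac{\gamma^2}{m\sqrt{r_0}}\sum_{i\notin\cS_t}\bigl|\inner{A_i}{P/\sqrt{r_0}}\bigr|$ has conditional mean at most $\gamma^2\sqrt{2/\pi}$. Applying \Cref{lem::weighted-sub-Gaussian} to the Frobenius-unit matrix $P/\sqrt{r_0}$ (with weights $1$ on $[m]\setminus\cS_t$ and $0$ on $\cS_t$) controls the deviation from this mean, so $B(\bar U\bar U^\top)\le\gamma^2=C_B\gamma^2$ off an exponentially small event; alternatively a uniform-in-$Q$ estimate via \Cref{lem::sub-exponential-process} plus Cauchy--Schwarz yields the same. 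A union bound over the two failure events completes the proof.

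The hard part is the $A$-estimate. One cannot hope to decrease $A$ by applying an unstructured bump: any admissible $\bar U\bar U^\top$ is PSD with trace at most $\gamma^2$, so $\inner{M}{\bar U\bar U^\top}$ is governed by the Frobenius budget times the geometry of $M$. The right move is to spread the budget evenly over an $r_0$-dimensional subspace and align that subspace with the action of $M$, and Sudakov-type minoration over the Grassmannian (\Cref{lem::lower-bound-Gaussian-process}) is exactly what quantifies the gain; this is also where the dimension cap $r_0\le d/2$ enters. The normalization $\bar U\bar U^\top=\tfrac{\gamma^2}{r_0}P$ (rather than $\gamma^2P$) then keeps each clean-measurement contribution to $B$ of size $\Theta(1/\sqrt{r_0})$, which is what lets $C_B$ be taken to be $1$; and exploiting the independence of $P$ from the clean block, rather than a uniform bound over $\cG(d,r_0)$, is what keeps that estimate tight.
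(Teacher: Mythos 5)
Your construction (spread the budget over a rank-$r_0$ projection aligned with the noisy block via the Grassmannian minoration, then exploit independence of the clean block) is the same route as the paper's, but your $A$-estimate does not deliver the constant claimed in the lemma. Since $\tr(\bar U\bar U^\top)=\norm{\bar U}_F^2\le\gamma^2$, your normalization $\bar U\bar U^\top=\frac{\gamma^2}{r_0}P$ is the one forced by the ball constraint, and with it \Cref{lem::lower-bound-Gaussian-process} gives only
\begin{equation*}
A\bigl(\bar U\bar U^\top\bigr)\;\le\;-\frac{\gamma^2\sqrt{|\cS_t|}}{m\,r_0}\cdot\frac{c_3}{2}\sqrt{d r_0^2}\;=\;-\frac{c_3}{2}\sqrt{\frac{d\,|\cS_t|}{m^2}}\,\gamma^2,
\end{equation*}
i.e.\ $C_A=c\sqrt{d|\cS_t|}/m$, which falls short of the stated $C_A=c\sqrt{dr_0|\cS_t|}/m$ by a factor of $\sqrt{r_0}$. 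So the phrase ``delivering the first inequality in \Cref{eq_ABC}'' is not justified as written, and the loss is not cosmetic: the extra $\sqrt{r_0}=\sqrt{\min\{k-r,d/2\}}$ is exactly what makes $C_A>C_B=1$ throughout the regime $m\lesssim pp_0d(k-r)$ used in \Cref{thm_no_benign_sym}; with your constant one only covers $m\lesssim pp_0 d$.

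The missing $\sqrt{r_0}$ is not something a cleverer choice of $U$ could recover, and this is worth understanding. The paper's own proof scales the projection as $\cC=(\gamma^2/\sqrt{r_0})\,\cG(d,r_0)$, which does reproduce the stated $C_A$; but then $\tr(X)=\gamma^2\sqrt{r_0}$, so the corresponding factor has $\norm{U}_F=\gamma r_0^{1/4}$ and the asserted membership $U\in\cB^{k-r}(\gamma)$ fails for $r_0>1$. In fact, writing $M=\sum_{i\in\cS_t}\sign(\epsilon_i)A_i$, for every $U\in\cB^{k-r}(\gamma)$ one has $-A(UU^\top)=\frac1m\inner{M}{UU^\top}\le\frac{\gamma^2}{m}\,\lambda_{\max}\bigl(\tfrac12(M+M^\top)\bigr)\lesssim\frac{\gamma^2}{m}\sqrt{d\,|\cS_t|}$ with high probability, so the stated $C_A$ is unattainable inside the ball once $r_0$ exceeds a constant. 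In other words, your argument proves the strongest constant the ball constraint permits, and the gap you should flag is between the lemma's statement and its normalization, not in your two probabilistic steps. Your $B$-estimate (measurability of $P$ with respect to the noisy block, then \Cref{lem::weighted-sub-Gaussian} conditionally) is fine; the only minor bookkeeping point, shared with the paper, is that its failure probability is $\exp(-\Omega(m))$ rather than $\exp(-\Omega(dr_0))$, which should appear in the final probability statement.
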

	The proof makes extensive use of the concentration bounds for Gaussian processes over the Grassmanina manifold, as discussed in Section~\ref{subsec_grassmanian}. 
 Before presenting the proof of \Cref{lem_ABC2}, we show its application in the proof of \Cref{thm_no_benign_sym}.\vspace{1mm}
	
	\noindent{\it Proof of \Cref{thm_no_benign_sym}.}
	{For every $\trueW\in \cW$, there exists $U\in \cB^{k-r}$ such that} 
	\begin{equation*}
	    \begin{aligned}
	         \frac{1}{m}\sum_{i=1}^m\left|\inner{A_i}{{\gamma^2}UU^\top}-\epsilon_i\right| - |\epsilon_i|
		&\stackrel{(a)}{\leq} {\gamma^2\cdot}\left(A\left(UU^\top\right)+B\left(UU^\top\right)\right)\\
		&\stackrel{(b)}{\leq} -(C_A-C_B)\gamma^2\\
		&\stackrel{(c)}{\leq} -\left(c\sqrt{\frac{dr_0|\cS_t|}{m^2}}-1\right)\gamma^2\\
		&\stackrel{(d)}{\leq} -\left(c\sqrt{\frac{dr_0pp_0}{2m}}-1\right)\gamma^2\\
		&\leq -\frac{c}{2}\sqrt{\frac{dr_0pp_0}{2m}}\gamma^2
	    \end{aligned}
	\end{equation*}
	with probability at least $1-\exp(-\Omega(dr_0))-\exp(-\Omega(pp_0m))$, provided that 
	\begin{equation*}
	    \gamma^2\leq \frac{t_0}{\max_i\{\norm{A_i}_F\}}, \quad \text{and}\ \ m\leq  \frac{c^2}{8}\cdot{pp_0dr_0}.
	\end{equation*}
	In the above inequality, 
 $(a)$ follows from \Cref{lem_ub_ms_sym}, 
 $(b)$ and $(c)$ follow from \Cref{lem_ABC2}, and $(d)$ follows from the concentration of the binomial random variables (\Cref{lem::concentration-Bernoulli}), which implies
 $|\cS_t|\geq pp_0m/2$, with probability at least $1-\exp(-\Omega(pp_0m))$.
 On the other hand, \Cref{lem_max_Gaussian} implies that $\max_i\{\norm{A_i}_F\}\leq \sqrt{2}d$ with probability at least $1-\exp(-\Omega(d^2))$. This implies that 
	\begin{equation*}
	    \frac{t_0}{\max_i\{\norm{A_i}_F\}}\geq \frac{t_0}{\sqrt{2d}}\geq \gamma^2
	\end{equation*}
	with probability at least $1-\exp(-\Omega(d^2))$, where the last inequality follows from our assumed upper bound on $\gamma^2$. {For this choice of $U$, the direction $\Delta W = UR\in \widehat{\cU}_{W^\star}$ achieves the desired result in light of \Cref{eq_ub_uniform}, thereby completing the proof.}
	$\hfill\square$\vspace{2mm}

Next, we present the proof of \Cref{lem_ABC2}.

 \noindent{\it Proof of \Cref{lem_ABC2}.}
 Let $\cC = ({{1}}/{\sqrt{r_0}})\cG(d,r_0)$, where $\cG(d,r_0)$ is the Grassmanian manifold defined as \Cref{eq_grassmanian} and $r_0 = \min\{k-r,d/2\}$. It is easy to see that if $X\in \cC$, then $X = UU^\top$ for some $U\in {\cB^{k-r}}$. Consider $\bar X = \arg\min_{X\in\cC}A(X)$ {and let $\bar X = \bar U\bar U^\top$ with $\bar U\in {\cB^{k-r}}$. Note that the existence and attainability of $\bar X$ are guaranteed due to the compactness of $\cC$ \cite{guillemin2010differential}.} We prove the desired inequality is attained at $\bar U$. 
	
	\paragraph{Bounding $B(\bar X)$:} According its definition, $\bar X$ is independent of $\{A_i\}_{i\not\in\cS_t}$. Let $\{B_i\}_{i\in \cS_t}$ be independent copies of the measurement matrices that are generated according to the matrix sensing model ( \Cref{asp_sensing}). We have
	\begin{equation*}
	    \begin{aligned}
	        B(\bar X) &= \frac{1}{m}\sum_{i\not\in \cS_t}\left|\inner{A_i}{\bar X}\right|\leq \frac{1}{m}\sum_{i\not\in \cS_t}|\inner{A_i}{\bar X}|+\frac{1}{m}\sum_{i\in \cS_t}|\inner{B_i}{\bar X}|\leq {1}
	    \end{aligned}
	\end{equation*}
	with probablity at least $1-\exp(-\Omega(m))$, where in the last inequality we invoked \Cref{lem::weighted-sub-Gaussian} with $\delta = 1-\sqrt{\frac{2}{\pi}}$ and $w_i=1$.
	
	\paragraph{Bounding $A(\bar X)$.} Recall that { we choose $\bar X = \arg\min_{X\in\cC}A(X)$, or equivalently,}
	\begin{equation*}
		A(\bar X) = \inf_{X\in \cC}\left\{\frac{1}{m}\sum_{i\in\cS_t}-\sign(\epsilon_i)\inner{A_i}{X}\right\}.
	\end{equation*}
	Note that, unlike $B(\bar X)$, the choice of $\bar X$ depends on the measurement matrices involved in $A(\bar X)$. To control this dependency, consider $\tilde A =|\cS_t|^{-1/2}\sum_{i\in \cS_t}\sign(\epsilon_i)A_i$. Invoking \Cref{lem_Gaussian} with $\omega_i =\sign(\epsilon_i)$, we know that $\tilde A$ is a standard Gaussian matrix and hence
	\begin{equation*}
	    \begin{aligned}
	        A(\bar X) &= \frac{\sqrt{|\cS_t|}}{m}\inf_{X\in\cC}\inner{-\tilde A}{X}\\
        &= -\frac{\sqrt{|\cS_t|}}{m}\sup_{X\in\cC}\inner{\tilde A}{X}\\
        &= -\frac{\sqrt{|\cS_t|}}{m}\frac{{1}}{\sqrt{r_0}}\sup_{X\in\cG(d,r_0)}\inner{\tilde A}{X}\\
        &\leq -c\sqrt{\frac{dr_0|\cS_t|}{m^2}},
	    \end{aligned}
	\end{equation*}
	with probability at least $1-\exp{-\Omega(dr_0)}$, where the last inequality follows from \Cref{lem::lower-bound-Gaussian-process}.
	This completes the proof of this lemma.$\hfill\square$
	
	\section{Asymmetric Case: Non-criticality via Parametric First-order Perturbations}\label{seC_Asym}
 
Next, we extend our analysis to the asymmetric setting. Before delving into the details, we first highlight two key challenges that distinguish the asymmetric setting from its symmetric counterpart. Firstly, \Cref{lem_rotation} no longer holds for the asymmetric setting, since the set of true solutions $\cW$ cannot be fully characterized via the set of orthonormal matrices. Consequently, the set of second-order perturbations proposed for the symmetric setting cannot be used to show the sub-optimality of {all} true solutions in the asymmetric setting. Secondly, the sub-optimality of a true solution does not automatically imply its non-criticality. 

 In this section, we overcome these challenges. 
At the core of our results lies a set of parametric {\it first-order} perturbations that can be used to prove the non-criticality of the true solutions for both asymmetric matrix sensing and asymmetric matrix completion. 
{Unlike the symmetric case, where true solutions are explicitly constructed, we take a different approach in the asymmetric case, avoiding reliance on such constructions and instead directly establishing non-criticality through carefully designed perturbations.}
	To this goal, we will rely on Lemma~\ref{lem_critical_mr}, which implies that, in order to show the non-criticality of a point $\fW^\star$, it suffices to obtain a perturbation $\Delta \fW = (\Delta W_1, \Delta W_2)$ such that $f'_{\ell_1}(\fW^\star, \Delta\fW)<0$. {To this goal, we show that, with high probability, there exists $\Delta\fW\in {\cB}$ such that $f_{\ell_1}(\fW^\star+{\gamma}\Delta \fW) - f_{\ell_1}(\fW^\star) = -\Omega(\gamma)$ for every $0\leq \gamma\leq \gamma_0$.} As will be shown later, this result automatically implies the non-criticality of $\fW^\star$. Recall that $\bar\cS$ and $\cS$ are the sets of clean and noisy measurements, respectively. Moreover, recall that $\cS_t = \{i: |\epsilon_i|\geq t_0\}$.  Given any $\fW^\star\in \cW$, one can write
	\begin{equation*}
	    \begin{aligned}
	        &f_{\ell_1}(\fW^\star+{\gamma}\Delta\fW)-f_{\ell_1}(\fW^\star) \\
        &= \frac{1}{m}\sum_{i\in\bar{\cS}}|\langle A_i, (\trueW_1\!+\!{\gamma}\Delta W_1)(\trueW_2\!+\!{\gamma}\Delta W_2)-\trueW_1 \trueW_2\rangle|\nonumber\\
		&\quad+\frac{1}{m}\sum_{i\in{\cS}}\left(|\langle A_i, (\trueW_1+{\gamma}\Delta W_1)(\trueW_2+{\gamma}\Delta W_2)-\trueW_1 \trueW_2\rangle-\epsilon_i|-|\epsilon_i|\right)\\
		&= \frac{1}{m}\sum_{i\in\bar{\cS}}|\langle A_i, {\gamma}\Delta W_1 \trueW_2+{\gamma}\trueW_1\Delta W_2+{\gamma^2}\Delta W_1 \Delta W_2\rangle|\nonumber\\
		&\quad+\frac{1}{m}\sum_{i\in{\cS}}\left(|\langle A_i,{\gamma}\Delta W_1 \trueW_2\!+\!{\gamma}\trueW_1\Delta W_2\!+\!{\gamma^2}\Delta W_1 \Delta W_2\rangle-\epsilon_i|-|\epsilon_i|\right).
	    \end{aligned}
	\end{equation*}
	
	To analyze the infimum of the loss difference over $\Delta \fW\in {\cB}$, we consider the following set of \textit{parametric first-order perturbations} at $\fW^\star\in \cW$:
	\begin{equation}\label{eq_cU}
		\begin{aligned}
			{\cU_{\fW^\star}(\gamma_0)} := \big\{ (U_1, 0_{k\times d_2}): &
			\ U_1\in \bR^{d_1\times k}, \norm{U_1}_F\leq {1},\\
			&\ \langle A_i, U_1 W^\star_2 \rangle = 0, &&\forall i\in \bar{\cS},\\
			&\ \langle A_i, U_1 W^\star_2 \rangle \epsilon_i \geq 0, &&\forall i\in {\cS},\\
			&\ {\gamma_0}\left|\langle A_i, U_1 W^\star_2\rangle\right| \leq |\epsilon_i|, &&\forall i\in {\cS}\big\}.
		\end{aligned}
	\end{equation}
 \begin{sloppypar}
     To streamline the presentation, we assume without loss of generality that $d_1\geq d_2$ throughout this section.
	Evidently, ${\cU_{\fW^\star}(\gamma_0)}$ is non-empty since $(0_{d_1\times k}, 0_{k\times d_2})\in {\cU_{\fW^\star}(\gamma_0)}$, and we have ${\cU_{\fW^\star}(\gamma_0)}\subseteq {\cB}$. The following lemma provides a more tractable upper bound on $f_{\ell_1}(\fW^\star+{\gamma}\Delta \fW) - f_{\ell_1}(\fW)$ when the direction $\Delta\fW$ is restricted to ${\cU_{\fW^\star}(\gamma_0)}$. 
 \end{sloppypar}
	\begin{proposition}\label{prop_BFU}
		{Given any $\fW^\star\in \cW$, $\gamma_0>0$, and $\Delta\fW\in\cU_{\fW^\star}(\gamma_0)$, we have 
		\begin{equation}\label{eq_ub_cU}
			\begin{aligned}
			    &f_{\ell_1}(\fW^\star+{\gamma}\Delta \fW) - f_{\ell_1}(\fW)\leq -{\gamma\cdot}\left(\frac{1}{m}\sum_{i\in{\cS_t}}|\langle A_i, \Delta W_1 \trueW_2\rangle|\right), \quad \text{for every $0\leq \gamma\leq \gamma_0$}.
			\end{aligned}
		\end{equation}}
	\end{proposition}
	\proof{Proof}
		One can write
		{\begin{equation*}
		    \begin{aligned}
		        f_{\ell_1}(\fW^\star+{\gamma}\Delta \fW) - f_{\ell_1}(\fW)
            &\stackrel{(a)}{=} -\left(\frac{1}{m}\sum_{i\in{\cS}}|\epsilon_i| \!-\! |\langle A_i, {\gamma}\Delta W_1 \trueW_2\rangle-\epsilon_i|\!\right)\nonumber\\
			&\stackrel{(b)}{=} -{\gamma\cdot}\left( \frac{1}{m}\sum_{i\in{\cS}}|\langle A_i, \Delta W_1 \trueW_2\rangle|\right)\nonumber\\
			&\leq -{\gamma\cdot}\left( \frac{1}{m}\sum_{i\in{\cS_t}}|\langle A_i, \Delta W_1 \trueW_2\rangle|\right)
		    \end{aligned}
		\end{equation*}}
		where $(a)$ and $(b)$ follow from the definition of ${\cU_{\fW^\star}(\gamma_0)}$. This completes the proof. \hspace*{\fill}$\Box$\par
	\endproof

 \subsection{Proof of \Cref{thm_completion_asym}}
	As will be shown next, controlling the right-hand side of \Cref{eq_ub_cU} is much easier for matrix completion.
	\begin{proposition}\label{prop_cU_completion}
		The following statements hold for the measurement matrices satisfying the matrix completion model (\Cref{asp_completion}).
  \begin{itemize}
      \item Suppose that $k>r$. There exists a true solution $\fW^\star = (\trueW_1,\trueW_2)$ such that
      \begin{equation*}
          \begin{aligned}
              &\max_{\Delta \fW\in {\cU_{\fW^\star}(t_0)}}\left\{ \frac{1}{m}\sum_{i\in{\cS_t}}|\langle A_i, \Delta W_1 \trueW_2\rangle|\right\}\geq \frac{1}{3}\cdot\sqrt{\frac{k-r}{d_2}}\cdot\sqrt{\frac{pp_0}{sd_1d_2}},
          \end{aligned}
      \end{equation*}
  with probability at least $1-\exp(-\Omega(spp_0d_1(k-r)))$.
  \item Suppose that $k\geq r$ and $\trueX$ is coherent. For any $\Gamma<\infty$, we have
  \begin{equation*}
      \begin{aligned}
          &\min_{\fW^\star\in \cW\cap \cB_{\mathrm{op}}(\Gamma)}\max_{\Delta \fW\in {\cU_{\fW^\star}(t_0)}}\left\{ \frac{1}{m}\sum_{i\in{\cS_t}}|\langle A_i, \Delta W_1 \trueW_2\rangle|\right\}\geq \frac{1}{3}\sqrt{\frac{1}{d_2}}\sqrt{\frac{pp_0}{sd_1d_2}}\cdot\frac{\sigma_r(\trueX)}{\Gamma},
      \end{aligned}
  \end{equation*}
  with probability at least $1-\exp(-\Omega(spp_0d_1))$.
  \end{itemize}
	\end{proposition}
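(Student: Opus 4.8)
The plan is to invoke Proposition~\ref{prop_BFU}, which reduces both bullets to exhibiting, for a suitable true solution $\fW^\star$, an explicit perturbation $\Delta\fW=(\Delta W_1,0_{k\times d_2})\in\cU_{\fW^\star}(\gamma)$ whose ``outlier mass'' $\frac{1}{m}\sum_{i\in\cS_t}|\langle A_i,\Delta W_1\trueW_2\rangle|$ is large. Since $A_i=e_{x_i}e_{y_i}^\top$ for matrix completion, $\langle A_i,M\rangle=M[x_i,y_i]$, and membership of $\Delta\fW$ in $\cU_{\fW^\star}(\gamma)$ says exactly that $M:=\Delta W_1\trueW_2$ vanishes on every clean sampled entry, is sign-aligned with $\epsilon_i$ on each noisy sampled entry and has magnitude $\le|\epsilon_i|$ there, while $\norm{\Delta W_1}_F\le\gamma$. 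Throughout I would use two Bernoulli estimates, valid on the stated events by Lemma~\ref{lem::concentration-Bernoulli}: first $m=|\Psi|\le 2sd_1d_2$; second, since the events $\{i\in\cS_t\}$ are independent with probability at least $pp_0$ by Assumption~\ref{assumption::general-noise}, we get $|\cS_t|\ge\tfrac12 pp_0 m$ and, for any fixed column index $l$, $|\{x:(x,l)\in\cS_t\}|\ge\tfrac12 spp_0 d_1$.

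For the first bullet, take the SVD $\trueX=U^\star\Sigma^\star{V^\star}^\top$ and (assuming $k-r\le d_2$) choose the true solution $\trueW_1=[\,U^\star\Sigma^\star\ \ 0_{d_1\times(k-r)}\,]$ and $\trueW_2=[\,{V^\star}^\top;\,Z\,]$, where the $a$-th row of $Z\in\bR^{(k-r)\times d_2}$ is $e_{c_a}^\top$ for column indices $c_1,\dots,c_{k-r}$ to be chosen; then $\trueW_1\trueW_2=\trueX$ regardless of $Z$. Taking $\Delta W_1=[\,0_{d_1\times r}\ \ \widetilde{U}\,]$ gives $M=\widetilde{U} Z=\sum_a\tilde u_a e_{c_a}^\top$, which is supported on columns $c_1,\dots,c_{k-r}$ only; with $T_a=\{x:(x,c_a)\in\cS_t\}$, set $\tilde u_a[x]=\theta\,\sign(\epsilon_{(x,c_a)})$ for $x\in T_a$ and $0$ otherwise, with $\theta=\gamma/\sqrt{\sum_a|T_a|}$. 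Then $\norm{\Delta W_1}_F=\gamma$, and on the entries where $M$ is nonzero we have $|M|=\theta\le\gamma\le t_0\le|\epsilon_i|$ with matching sign, while $M=0$ off $\cS_t$, so $\Delta\fW\in\cU_{\fW^\star}(\gamma)$. The outlier mass equals $\theta\sum_a|T_a|/m=\gamma\sqrt{\sum_a|T_a|}/m$; choosing $c_1,\dots,c_{k-r}$ to be the $k-r$ columns carrying the most $\cS_t$-entries gives $\sum_a|T_a|\ge(k-r)|\cS_t|/d_2$ by averaging, and plugging in $|\cS_t|\ge pp_0 m/2$ and $m\le 2sd_1d_2$ yields the stated bound $\tfrac13\sqrt{(k-r)/d_2}\,\sqrt{pp_0/(sd_1d_2)}\,\gamma$ up to the constant.

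For the coherent bullet, the key point is that coherence furnishes a single coordinate direction that is an \emph{exact} singular direction of $\trueX$. Without loss of generality (transposing if necessary, which swaps the roles of $W_1$ and $W_2$) assume $Ve_j=\pm e_l$ for some $j\in[r]$ and $l\in[d_2]$, and let $u_j$ be the $j$-th left singular vector of $\trueX$. For an arbitrary $\fW^\star=(\trueW_1,\trueW_2)\in\cW\cap\cB_{\mathrm{op}}(\Gamma)$, put $q=\sigma_j(\trueX)^{-1}\trueW_1^\top u_j\in\bR^{k}$; then $q^\top\trueW_2=\sigma_j(\trueX)^{-1}u_j^\top(\trueW_1\trueW_2)=\sigma_j(\trueX)^{-1}u_j^\top\trueX=v_j^\top=\pm e_l^\top$, and crucially $\norm{q}\le\norm{\trueW_1}/\sigma_j(\trueX)\le\Gamma/\sigma_r(\trueX)$. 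Take $\Delta W_1=wq^\top$ with $w\in\bR^{d_1}$ supported on $T=\{x:(x,l)\in\cS_t\}$, $|w[x]|=\theta$ and the signs chosen so that $M=\Delta W_1\trueW_2=\pm w e_l^\top$ is sign-aligned with $\epsilon$ on $T$; this $M$ lives in column $l$ alone, hence vanishes on every clean sampled entry (off column $l$ automatically, and within column $l$ off $T$, i.e., on that column's clean entries). With $\theta=\min\{t_0,\gamma/(\norm{q}\sqrt{|T|})\}$ we get $|M|=\theta\le|\epsilon_i|$ on $T$ and $\norm{\Delta W_1}_F=\theta\sqrt{|T|}\,\norm{q}\le\gamma$, so $\Delta\fW\in\cU_{\fW^\star}(\gamma)$. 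The outlier mass is $\theta|T|/m$; when the Frobenius budget is the active constraint this equals $\gamma\sqrt{|T|}/(m\norm{q})\ge\gamma\sqrt{|T|}\,\sigma_r(\trueX)/(m\Gamma)$, which with $|T|\ge spp_0 d_1/2$ and $m\le 2sd_1d_2$ gives the claimed bound; the complementary case $\theta=t_0$ (possible only when $\norm{q}\sqrt{|T|}<\gamma/t_0\le1$) is handled by the same arithmetic together with $\gamma\le t_0$. Uniformity over $\fW^\star$ is automatic, since $l$ and $T$ depend only on $\trueX$ and the data, not on $\fW^\star$.

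The step I expect to be the crux is the coherent construction: one must recognize that coherence is precisely the feature that lets a rank-one perturbation of $W_1$ produce a matrix supported on a single column — so that all ``off-column'' feasibility constraints hold for free — and, because $\fW^\star$ ranges over \emph{all} norm-bounded true solutions, one needs the uniform control $\norm{q}\le\Gamma/\sigma_r(\trueX)$, obtained not from the minimum-norm solution of $q^\top\trueW_2=v_j^\top$ but from the explicit choice $q=\sigma_j(\trueX)^{-1}\trueW_1^\top u_j$. A secondary technicality is the case split on whether the norm ball or the pointwise bound $|M[x_i,y_i]|\le|\epsilon_i|$ is binding, which is where the hypothesis $\gamma\lesssim t_0$ is used; everything else is bookkeeping with the Bernoulli concentration bounds.
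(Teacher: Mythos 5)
Your proposal is correct and takes essentially the same route as the paper's proof: for the first bullet the paper likewise picks a true solution whose extra rows of $\trueW_2$ are standard basis vectors and places sign-matched mass $\gamma/\sqrt{|\cdot|}$ on the outlier entries of the designated columns, and for the coherent bullet its perturbation $\Delta W_1' = Y{\Sigma^\star}^{-1}{U^\star}^\top \trueW_1$ is exactly your rank-one $w q^\top$ with $q=\sigma_r(\trueX)^{-1}{\trueW_1}^\top u_r$, with the same Bernoulli concentration bounds for $m$, $|\cS_t|$ and the per-column outlier counts. The only deviations are minor: choosing the $k-r$ most outlier-heavy columns makes your exhibited true solution data-dependent (the paper fixes the first $k-r$ columns, which is what yields the stated exponent $spp_0 d_1(k-r)$, though your version gives an at least as good bound), and your cap $\theta=\min\{t_0,\gamma/(\norm{q}\sqrt{|T|})\}$ is if anything more careful than the paper, which tacitly assumes its amplitude $\sigma_r(\trueX)\gamma/(\Gamma\sqrt{|\Psi'|})$ does not exceed $t_0$ — just note that your remark that the $\theta=t_0$ branch is ``handled by the same arithmetic'' is no more rigorous than that tacit assumption (when $\sigma_r(\trueX)/\Gamma$ is large relative to $\sqrt{|T|}$, neither argument recovers the claimed $\sigma_r(\trueX)/\Gamma$-proportional bound), so on that corner case you match, rather than fall short of, the paper's level of detail.
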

\proof{Proof}
    To prove the first statement, we explicitly design a true solution and a perturbation that achieves the desired inequality. Consider the singular value decomposition of $\trueX = U^\star\Sigma^\star {V^{\star\top}}$.
	Let the solution $\fW^\star = (\trueW_1, \trueW_2)$ be defined as:
	\begin{equation}\label{eq_trueW}
		\trueW_1 = \begin{bmatrix}
			U^\star\Sigma^\star & 0_{d_1\times (k-r)}
		\end{bmatrix}, \ \ \trueW_2 = \begin{bmatrix}
			V^\star & I_{d_2\times (k-r)}
		\end{bmatrix}^\top.
	\end{equation}
 Clearly, we have $\trueW_1\trueW_2 = \trueX$, and hence, $\fW^\star \in \cW$. Let ${\bar{\Omega}} = \{({\alpha,\beta})\in {\Omega}: {\beta}\leq k-r, |E[{\alpha,\beta}]|\geq t_0\}$ and ${\Delta{\fW}}' = (\Delta W_1', 0_{k\times d_2})$ with
 \begin{equation*}
		\Delta W_1' = \begin{bmatrix}
					0_{d_1\times r} & \Delta W_{12}'
				\end{bmatrix}, \Delta W_{12}'\in \bR^{d_1\times (k-r)},
\end{equation*}
where 
\begin{equation}\label{eq_sol}
    \Delta W_{12}'[{\alpha,\beta}]=\begin{cases} \frac{\sign(E[{\alpha,\beta}])}{\sqrt{|{\bar{\Omega}}|}} & \text{if}\ ({\alpha,\beta})\in {\bar{\Omega}}\\
					0 & \text{otherwise}
				\end{cases}.
\end{equation}
		With this definition, we have $\Delta W_1'\trueW_2 = \begin{bmatrix}
			\Delta W_{12}' & 0_{d_1\times (d_2-k+r)}
		\end{bmatrix}$.
		Moreover, simple calculation reveals that $\Delta{\fW}'\in{\cU_{\fW^\star}(t_0)}$. 
		Therefore, for $\fW^\star = (\trueW_1,\trueW_2)$ defined as \Cref{eq_trueW}, we have
		\begin{equation}
		    \label{eq_cU_lb}
			\begin{aligned}
			    &\max_{\Delta \fW\in {\cU_{\fW^\star}(t_0)}}\left\{ \frac{1}{m}\sum_{i\in{\cS_t}}|\langle A_i, \Delta W_1 \trueW_2\rangle|\right\}\geq \frac{1}{m}\sum_{i\in{\cS_t}}|\langle A_i, \Delta W_1' \trueW_2\rangle|
                = \frac{1}{m}\sum_{({\alpha,\beta})\in{\bar{\Omega}}}|\Delta W_{12}'({\alpha,\beta})| = \frac{\sqrt{|{\bar{\Omega}}|}}{m}.
			\end{aligned}
		\end{equation}
		To finish the proof, we provide lower and upper bounds for $|{\bar{\Omega}}|$ and $m$, respectively. For any $1\leq {\alpha}\leq d_1, 1\leq {\beta}\leq d_2$, we have $({\alpha,\beta})\in {\Omega}$ with probability $s$. Therefore, $m = |{\Omega}|$ has a binomial distribution with parameters $(d_1d_2,s)$ and, according to \Cref{lem::concentration-Bernoulli}, we have $m\leq (3/2)d_1d_2s$ with probability at least $1-\exp(-\Omega(d_1d_2s))$. On the other hand, for any $1\leq {\alpha}\leq d_1, 1\leq {\beta}\leq k-r$, we have $({\alpha,\beta})\in {\bar{\Omega}}$ with probability $spp_0$, which in turn implies that $|{\bar{\Omega}}|$ also has a binomial distribution with parameters $(d_1(k-r), spp_0)$. Again,  \Cref{lem::concentration-Bernoulli} can be invoked to show that $|{\bar{\Omega}}|\geq spp_0d_1(k-r)/4$ with probability at least $1-\exp(-\Omega(spp_0d_1(k-r)))$. Therefore, a simple union bound implies that
		\begin{equation*}
		    \begin{aligned}
		        \frac{\sqrt{|{\bar{\Omega}}|}}{m}\geq\frac{1}{3}\sqrt{\frac{k-r}{d_2}}\sqrt{\frac{pp_0}{sd_1d_2}}\implies \max_{\Delta \fW\in {\cU_{\fW^\star}(t_0)}}\left\{ \frac{1}{m}\sum_{i\in{\cS_t}}|\langle A_i, \Delta W_1 \trueW_2\rangle|\right\}\geq\frac{1}{3}\sqrt{\frac{k-r}{d_2}}\sqrt{\frac{pp_0}{sd_1d_2}},
		    \end{aligned}
		\end{equation*}
		with probability at least $1-\exp(-\Omega(spp_0d_1(k-r)))$. This completes the proof of the first statement. 

  To prove the second statement, recall that since $\trueX$ is coherent, at least one of the columns of $U^\star$ or $V^\star$ is aligned with a standard unit vector. Without loss of generality, let us assume that the last column of $V^\star$ is aligned with the first standard unit vector $e_{1}$. Our subsequent argument can be readily extended to more general cases where an arbitrary column of $U^\star$ or $V^\star$ is aligned with an arbitrary standard unit vector. Based on our assumption, we have $V^\star = \begin{bmatrix}
      V' & e_{1}
  \end{bmatrix}$, for some orthonormal matrix $V'\in \bR^{d_2\times (r-1)}$. For any $(\trueW_1,\trueW_2)\in \cW$ with $\max\{\norm{\trueW_1},\norm{\trueW_2}\}\leq \Gamma$, define ${\Delta{\fW}}' = (\Delta W_1', 0_{k\times d_2})$, where $\Delta W_1' = Y{\Sigma^\star}^{-1}{U^\star}^\top W_1^\star$. To define $Y\in \bR^{d_1\times r}$, let ${{\Omega}}' = \{({\alpha}, 1)\in {\Omega}: |E[{\alpha}, 1]|\geq t_0\}$ and
  $$
  Y = \begin{bmatrix}
      0_{d_1\times (r-1)} & y
  \end{bmatrix}, \text{ where } y\in \bR^{d_1}, \text{and } y[{\alpha}] =  \begin{cases}
      \frac{\sigma_r(\trueX)}{\Gamma}\frac{\sign(E[{\alpha}, 1])}{\sqrt{|{\Omega}'|}} & \text{if $({\alpha}, 1)\in {\Omega}'$}\\
      0 & \text{otherwise}
  \end{cases}.
  $$
  Simple calculation reveals that $\Delta\fW'\in {\cU_{\fW^\star}(t_0)}$. Moreover, similar to the proof of the first statement, one can verify that 
\begin{equation*}
    \begin{aligned}
        \min_{\fW^\star\in \cW\cap \cB(\Gamma)}\max_{\Delta \fW\in {\cU_{\fW^\star}(t_0)}}\left\{ \frac{1}{m}\sum_{i\in{\cS_t}}|\langle A_i, \Delta W_1 \trueW_2\rangle|\right\} &\geq \frac{1}{m}\sum_{({\alpha}, 1)\in{{\Omega}}'}|\Delta y[{\alpha}]| \\
   &= \frac{\sqrt{|{{\Omega}}'|}}{m}\cdot \frac{\sigma_r(\trueX)}{\Gamma}\\
   &\geq \frac{1}{3}\sqrt{\frac{pp_0}{sd_1d_2^2}}\cdot\frac{\sigma_r(\trueX)}{\Gamma}
    \end{aligned}
\end{equation*}
with probability at least $1-\exp(-\Omega(spp_0d_1))$. This completes the proof of the second statement. \hspace*{\fill}$\Box$\par
\endproof

Now, we are ready to provide the proof of \Cref{thm_completion_asym}.\vspace{1mm}
	
	\noindent{\it Proof of \Cref{thm_completion_asym}.} We start with the proof of the third statement. {Combining \Cref{prop_BFU} and the second statement of \Cref{prop_cU_completion}, we have that, for every $\fW^\star\in \cW\cap \cB_{\mathrm{op}}(\Gamma)$, there exists a direction $\Delta\fW\in \cU_{\fW^\star}(t_0)$ such that}
	{\begin{equation*}
	    \begin{aligned}
	        \fc_{\ell_1}(\fW^\star+{\gamma}\Delta \fW) - \fc_{\ell_1}(\fW^\star)
  &\leq -{\gamma\cdot}\bigg(\frac{1}{m}\sum_{i\in{\cS_t}}|\langle A_i, \Delta W_1 \trueW_2\rangle|\bigg)\leq -\frac{1}{3} \sqrt{\frac{pp_0}{sd_1d_2^2}}\cdot\frac{\sigma_r(\trueX)\gamma}{\Gamma},
	    \end{aligned}
	\end{equation*}
	for any $\gamma\leq t_0$, with probability at least $1-\exp(\Omega(spp_0d_1r))$. 
 In light of the definition of the directional derivative, this implies that $f^{c'}_{\ell_1}(\fW^\star, \Delta\fW)<0$. 
 Therefore, $\fW^\star$ cannot be a critical point according to \Cref{lem_critical_mr}.} This completes the proof of the third statement. The proof of the first statement is identical and omitted for brevity. 
 
 Finally, we provide the proof of the second statement. For any $\fW^\star=(\trueW_1, \trueW_2)\in \cW(k-1)$, let $\rank(\trueW_1) = k_1$ and $\rank(\trueW_2) = k_2$. Indeed, we have $\dim\ker(\trueW_1) = k-k_1$ and $\dim\ker({\trueW_2}^\top) = k-k_2$, which in turn implies that $\dim\ker(\trueW_1)\cap \ker({\trueW_2}^\top)\geq k-(k_1+k_2)>0$. Let $S\in \bR^{k\times (k-k_1-k_2)}$ be a matrix whose columns form a basis for $\ker(\trueW_1)\cap \ker({\trueW_2}^\top)$. Let $({\bar\alpha}, {\bar\beta})$ be such that $|E[{\bar\alpha}, {\bar\beta}]|\geq t_0$. Indeed, such a pair exists with probability at least $1-\exp(-\Omega(spp_0d_1d_2))$. Let $\Delta W_1' = Y_1S^\top$ and $\Delta W_2' = SY_2$, for $Y_1\in \bR^{d_1\times (k-k_1-k_2)}$ and $Y_2\in \bR^{(k-k_1-k_2)\times d_2}$, where 
 $$
Y_1[{\alpha,\beta}] =  \begin{cases}
      {1} & \text{if $({\alpha,\beta})=({\bar\alpha}, 1)$}\\
      0 & \text{otherwise}
  \end{cases},\qquad
  Y_2[{\alpha,\beta}] =  \begin{cases}
      \sign(E[{\bar\alpha}, {\bar\beta}]) & \text{if $({\alpha,\beta})=(1, {\bar\beta})$}\\
      0 & \text{otherwise}
  \end{cases}.
 $$
 Similar to \Cref{eq_x1}, one can verify that
 \begin{equation*}
     \begin{aligned}
         &\fc_{\ell_1}(\fW^\star+{\gamma}\Delta\fW')-\fc_{\ell_1}(\fW^\star) 
        =\frac{1}{m}\sum_{i=1}^m|\langle A_i, {\gamma^2}Y_1Y_2\rangle-\epsilon_i|-|\epsilon_i|
		\leq -\frac{\gamma^2}{2sd_1d_2}
     \end{aligned}
 \end{equation*}
 with probability at least $1-\exp(-\Omega(spp_0d_1d_2))$. This completes the proof.
 $\hfill\square$

\subsection{Proof of \Cref{thm_no_benign_asym}}
 Next, we extend our analysis to the asymmetric matrix sensing. Our next two propositions characterize the local landscape of the asymmetric matrix sensing at the true solutions from the sets $\cW(2k/3)$ and $\cW\backslash\cW(2k/3)$. 

\begin{proposition}[Local landscape around $\cW\backslash\cW(2k/3)$]\label{prop_imbalanced}
                Consider~\ref{BM-sensing-asym} with measurement matrices satisfying  \Cref{asp_sensing}. Suppose that $m\lesssim d_1k$ and there exist constants $0<c_1\leq c_2<1$ such that $c_1d_1\leq d_2\leq c_2d_1$. {Then, with probability at least $1-\exp(-\Omega(mpp_0))$, for every $\fW^\star\in\cW\backslash\cW(2k/3)$, there exists a descent direction $\Delta \fW$ satisfying $\norm{\Delta \fW}_F=1$ such that, for every $\gamma\lesssim \sqrt{\frac{mpp_0}{d_1k}}\cdot\frac{t_0}{\max\left\{\sigma_{\min,+}(W_1^\star), \sigma_{\min,+}(W_2^\star)\right\}}$, we have
\begin{equation*}
    \begin{aligned}
        f^{\mathrm{s}}_{\ell_1}(\fW^\star+{\gamma}\Delta \fW) \!-\! f^{\mathrm{s}}_{\ell_1}(\fW)
    \lesssim -\left(\sqrt{\frac{d_1kpp_0}{m}} \min\left\{\sigma_{\min,+}(W_1^\star), \sigma_{\min,+}(W_2^\star)\right\}\right)\cdot\gamma.
    \end{aligned}
\end{equation*}   }
\end{proposition}

\begin{proposition}[Landscape around $\cW(2k/3)$]\label{prop_balanced}
Consider~\ref{BM-sensing-asym} with measurement matrices satisfying  \Cref{asp_sensing}. 
{Then, with probability at least $1-\exp(-\Omega(mpp_0))$, for every $\fW^\star\in\cW(2k/3)$, there exists a descent direction $\Delta \fW$ satisfying $\norm{\Delta \fW}_F=1$ such that, for every $\gamma\lesssim \sqrt{\frac{mpp_0}{d_1k}}t_0$, we have
\begin{equation*}
    \begin{aligned}
        f^{\mathrm{s}}_{\ell_1}(\fW^\star+{\gamma}\Delta \fW) - f^{\mathrm{s}}_{\ell_1}(\fW)
&\lesssim -\sqrt{\frac{d_1pp_0}{m}}\gamma^2.
    \end{aligned}
\end{equation*}}   
\end{proposition}
Before presenting the proofs of Propositions~\ref{prop_imbalanced} and~\ref{prop_balanced}, we use them to complete the proof of \Cref{thm_no_benign_asym}. 

\noindent{\it Proof of \Cref{thm_no_benign_asym}.} 
The first statement follows directly from \Cref{prop_balanced}. The second statement is a direct consequence of \Cref{prop_imbalanced}.
$\hfill\square$

In the remainder of this section, we present the proofs of \Cref{prop_imbalanced,prop_balanced}. 

\subsection{Proof of \Cref{prop_imbalanced}}
\begin{sloppypar}
    {Recall that, for asymmetric matrix completion, we provided an upper bound on $f^{\mathrm{c}}_{\ell_1}(\fW^\star+{\gamma}\Delta \fW) - f^{\mathrm{c}}_{\ell_1}(\fW)$ by choosing an explicit direction in ${\cU_{\fW^\star}(\gamma_0)}$ that lead to the desired result.} However, when the measurement matrices do not follow the matrix completion model, this explicit direction may no longer belong to ${\cU_{\fW^\star}(\gamma_0)}$. Under such circumstances, it may be non-trivial to explicitly construct a solution in ${\cU_{\fW^\star}(\gamma_0)}$. 
\end{sloppypar}

{To address this issue, we consider a further refinement of ${\cU_{\fW^\star}(\gamma_0)}$. 
Let us define $\cW_L(k/3)= \{(W_1,W_2)\in \cW: \rank(W_1)\geq k/3\}$ and $\cW_R(k/3) = \{(W_1,W_2)\in \cW: \rank(W_2)\geq k/3\}$. Evidently, we have $\cW\backslash\cW(2k/3) \subset \cW_L(k/3)\cup \cW_R(k/3)$. Therefore, to prove \Cref{prop_imbalanced}, it suffices to show:}
{\begin{itemize}
    \item For every $\fW^\star\in\cW_R(k/3)$, there exists a direction $\Delta\fW$ satisfying $\norm{\Delta\fW}_F=1$ such that, for every $\gamma\lesssim \sqrt{\frac{mpp_0}{d_1k}}\cdot\frac{t_0}{\sigma_{\min,+}(W_2^\star)}$, we have
\begin{equation}
    \begin{aligned}
& f^{\mathrm{s}}_{\ell_1}(\fW^\star+{\gamma}\Delta \fW) - f^{\mathrm{s}}_{\ell_1}(\fW)\lesssim -\sqrt{\frac{pp_0d_1k}{m}}\sigma_{\min,+}(W_2^\star)\cdot \gamma,\quad \text{ if $m\lesssim d_1k$},
    \end{aligned}
    \label{ub_rhs}
\end{equation}
\item For every $\fW^\star\in\cW_L(k/3)$, there exists a direction $\Delta\fW$ satisfying $\norm{\Delta\fW}_F=1$ such that, for every $\gamma\lesssim \sqrt{\frac{mpp_0}{d_1k}}\cdot\frac{t_0}{\sigma_{\min,+}(W_1^\star)}$, we have
\begin{equation}
    \begin{aligned}
&f^{\mathrm{s}}_{\ell_1}(\fW^\star+{\gamma}\Delta \fW) - f^{\mathrm{s}}_{\ell_1}(\fW)\lesssim -\sqrt{\frac{pp_0d_2k}{m}}\sigma_{\min,+}(W_1^\star)\cdot\gamma,\quad \text{ if $m\lesssim d_2k$}.
    \end{aligned}
    \label{ub_lhs}
\end{equation}   
\end{itemize}}
In what follows, we prove the correctness of \Cref{ub_rhs}. The proof of \Cref{ub_lhs} follows identically in light of our assumption $c_1d_1\leq d_2\leq c_2d_1$. To this end, we consider the following refinement of ${\cU_{\fW^\star}(\gamma_0)}$ (a set previously defined in~\Cref{eq_cU}):
	{\begin{equation}
		\begin{aligned}\label{eq_cV}
			{\cV_{\fW^\star}(\gamma_0)} := \big\{ (U_1, 0_{k\times d_2}): &
			U_1\in \bR^{d_1\times k}, \norm{U_1}_F\leq {1},\\
			&\langle A_i, U_1 W^\star_2 \rangle = 0, &&\forall i\in \bar{\cS}\cup (\cS\backslash \cS_t),\\
			&{\gamma_0}\langle A_i, U_1 W^\star_2 \rangle = t_0\sign(\epsilon_i), &&\forall i\in {\cS_t}\big\}.
		\end{aligned}
	\end{equation}}
	It is easy to verify that ${\cV_{\fW^\star}(\gamma_0)}\subseteq {\cU_{\fW^\star}(\gamma_0)}\subseteq {\cB}$ for every ${\gamma_0>0}$. {Therefore, Proposition~\ref{prop_BFU} can be invoked, which implies that, for every $\fW^\star\in \cW_R(k/3)$ and $\Delta\fW\in \cV_{\fW^\star}(\gamma_0)$ (assuming that ${\cV_{\fW^\star}(\gamma_0)}$ is non-empty), we have
	\begin{equation}\label{eq_cv_ub}
		\begin{aligned}
			 f^{\mathrm{s}}_{\ell_1}(\fW^\star+{\gamma}\Delta \fW) - f^{\mathrm{s}}_{\ell_1}(\fW)
			&\leq -{\gamma\cdot}\Big(\frac{1}{m}\sum_{i\in{\cS_t}}|\langle A_i, \Delta W_1 \!\trueW_2\rangle|\Big)\leq -\frac{|\cS_t|}{m}\cdot{\frac{t_0}{\gamma_0}\cdot\gamma},\quad \text{for every $0\leq \gamma\leq \gamma_0$,}
		\end{aligned}
	\end{equation} }
where the last inequality follows from the definition of ${\cV_{\fW^\star}(\gamma_0)}$. Consequently, for an appropriate choice of {$\gamma_0$}, the above inequality achieves the desired result.
	However, unlike ${\cU_{\fW^\star}(\gamma_0)}$, the set ${\cV_{\fW^\star}(\gamma_0)}$ is not guaranteed to be non-empty in general. To see this, note that the norm of any solution for the system of linear equations in \Cref{eq_cV} increases with $t_0/\gamma_0$; as a result, the condition $\norm{U_1}_F\leq {1}$ would be violated for sufficiently large $t_0/\gamma_0$. The following lemma provides sufficient conditions under which ${\cV_{\fW^\star}(\gamma_0)}$ is non-empty.
	\begin{lemma}\label{lem_cV}
		The set ${\cV_{\fW^\star}(\gamma_0)}$ is non-empty if the following conditions are satisfied:
		\begin{itemize}
			\item The matrix $A_{\fW^\star} = \begin{bmatrix}
				\vecc\left(A_1 {\trueW_2}^\top\right) & \dots & \vecc\left(A_m{\trueW_2}^\top\right)
			\end{bmatrix}^\top$ is full row-rank.
			\item We have ${\frac{t_0}{\gamma_0}\leq \left(\sqrt{|\cS_t|}\norm{A_{\fW^\star}^\top \left(A_{\fW^\star}A_{\fW^\star}^\top\right)^{-1}}\right)^{-1}}$.
		\end{itemize}
	\end{lemma}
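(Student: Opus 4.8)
\textbf{Proof plan for Lemma~\ref{lem_cV}.} The idea is to recognize the defining constraints of $\cV_{\fW^\star}(\gamma,\zeta)$ as an (under-determined) linear system in $\vecc(U_1)$ and then exhibit an explicit solution of controlled norm. First I would use the cyclic property of the trace to rewrite, for each $i$,
\begin{equation*}
    \langle A_i, U_1 \trueW_2\rangle = \tr\!\left(W_2^\star A_i^\top U_1\right) = \left\langle A_i {\trueW_2}^\top, U_1\right\rangle = \vecc\!\left(A_i{\trueW_2}^\top\right)^\top \vecc(U_1).
\end{equation*}
Hence the constraints $\langle A_i, U_1\trueW_2\rangle = 0$ for $i\in\bar{\cS}\cup(\cS\backslash\cS_t)$ and $\langle A_i, U_1\trueW_2\rangle = \zeta\sign(\epsilon_i)$ for $i\in\cS_t$ are exactly $A_{\fW^\star}\vecc(U_1) = b$, where $b\in\bR^m$ has $b_i = 0$ for $i\notin\cS_t$ and $b_i = \zeta\sign(\epsilon_i)$ for $i\in\cS_t$ (note $\sign(\epsilon_i)=\pm1$ since $|\epsilon_i|\geq t_0>0$ on $\cS_t$, so $\norm{b}_2 = \zeta\sqrt{|\cS_t|}$). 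Thus finding a point in $\cV_{\fW^\star}(\gamma,\zeta)$ reduces to finding $\vecc(U_1)$ solving $A_{\fW^\star}\vecc(U_1)=b$ with $\norm{\vecc(U_1)}_2 = \norm{U_1}_F\leq\gamma$; the second block $U_2$ is forced to be $0_{k\times d_2}$.

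Next I would invoke the first hypothesis: since $A_{\fW^\star}$ has full row rank, $A_{\fW^\star}A_{\fW^\star}^\top$ is invertible and the system $A_{\fW^\star}x=b$ is consistent. Take the minimum-norm solution $x^\star = A_{\fW^\star}^\top\big(A_{\fW^\star}A_{\fW^\star}^\top\big)^{-1}b$, which satisfies
\begin{equation*}
    \norm{x^\star}_2 \leq \norm{A_{\fW^\star}^\top\big(A_{\fW^\star}A_{\fW^\star}^\top\big)^{-1}}\cdot\norm{b}_2 = \sqrt{|\cS_t|}\,\norm{A_{\fW^\star}^\top\big(A_{\fW^\star}A_{\fW^\star}^\top\big)^{-1}}\cdot\zeta,
\end{equation*}
using that $\norm{\cdot}$ is the operator norm. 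Under the second hypothesis $\zeta\leq\big(\sqrt{|\cS_t|}\,\norm{A_{\fW^\star}^\top(A_{\fW^\star}A_{\fW^\star}^\top)^{-1}}\big)^{-1}\gamma$, this gives $\norm{x^\star}_2\leq\gamma$. Reshaping $x^\star$ into $U_1\in\bR^{d_1\times k}$, the pair $(U_1, 0_{k\times d_2})$ satisfies all the constraints in the definition of $\cV_{\fW^\star}(\gamma,\zeta)$, so the set is non-empty.

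This argument is elementary linear algebra, so I do not anticipate a genuine obstacle; the only points requiring care are the vectorization identity $\langle A_i, U_1\trueW_2\rangle = \vecc(A_i\trueW_2^\top)^\top\vecc(U_1)$ (so that the rows of $A_{\fW^\star}$ are indeed the right objects) and the bookkeeping that $\norm{b}_2 = \zeta\sqrt{|\cS_t|}$ and that $\norm{\cdot}$ denotes the operator norm, which is what makes the submultiplicative bound $\norm{x^\star}_2\leq\norm{A_{\fW^\star}^\top(A_{\fW^\star}A_{\fW^\star}^\top)^{-1}}\,\norm{b}_2$ valid.
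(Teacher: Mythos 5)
Your proposal is correct and follows essentially the same route as the paper: rewrite the constraints as the linear system $A_{\fW^\star}\vecc(U_1)=b$, use full row-rank to guarantee consistency, and take the least-norm (pseudo-inverse) solution $A_{\fW^\star}^\top(A_{\fW^\star}A_{\fW^\star}^\top)^{-1}b$, whose norm is bounded by $\sqrt{|\cS_t|}\,\norm{A_{\fW^\star}^\top(A_{\fW^\star}A_{\fW^\star}^\top)^{-1}}\,\zeta\leq\gamma$ under the stated condition.
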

	\proof{Proof}
		For every $1\leq i\leq m$, we have $\langle A_i, U_1\trueW_2\rangle=\langle A_i{\trueW_2}^\top, U_1\rangle$. Therefore, to show the non-emptiness of ${\cV_{\fW^\star}(\gamma_0)}$, it suffices to show that the following system of linear equations has a solution $U_1$ that satisfies $\norm{U_1}_F\leq {1}$:
		\begin{equation*}
			\begin{cases}
				\inner{A_i {\trueW_2}^\top}{U_1} = 0, &\forall i\in \bar{\cS}\cup (\cS\backslash \cS_t),\\
				\inner{A_i {\trueW_2}^\top}{U_1} = {\frac{t_0}{\gamma_0}}sign(\epsilon_i), &\forall i\in {\cS_t}.
			\end{cases}
		\end{equation*}
		The above system of linear equations can be written as
		\begin{equation}\label{eq_lin}
			A_{\fW^\star} u = b, \quad \text{where}\quad \text{$A_{\fW^\star} = \begin{bmatrix}
					\vecc\left(A_1 {\trueW_2}^\top\right)^\top\\
					\vdots\\
					\vecc\left(A_m{\trueW_2}^\top\right)^\top
				\end{bmatrix}$, } u = \vecc(U_1), b_i = \begin{cases}
					{\frac{t_0}{\gamma_0}}\sign(\epsilon_i) & \text{if } i\in \cS_t\\
					0 & \text{if } i\not\in \cS_t
				\end{cases}.
		\end{equation}
		Since $A_{\fW^\star}$ is assumed to be full row-rank, the above set of equations is guaranteed to have a solution. To show the existence of a solution $U_1$ that satisfies $\norm{U_1}_F\leq {1}$, we consider the least-norm solution of $A_{\fW^\star} u = b$, which is equal to $u^\star = A_{\fW^\star}^{\dagger}b$, where $A_{\fW^\star}^\dagger$ is the pseudo-inverse of $A_{\fW^\star}$ defined as $A_{\fW^\star}^{\dagger} = A_{\fW^\star}^\top \left(A_{\fW^\star}A_{\fW^\star}^\top\right)^{-1}$. This in turn implies that $\norm{u^\star}\leq \norm{A_{\fW^\star}^\top \left(A_{\fW^\star}A_{\fW^\star}^\top\right)^{-1}}\norm{b}\leq \sqrt{|\cS_t|}\norm{A_{\fW^\star}^\top \left(A_{\fW^\star}A_{\fW^\star}^\top\right)^{-1}}{\frac{t_0}{\gamma_0}}\leq {1}$,
		where the last inequality follows from the second condition. This completes the proof. \hspace*{\fill}$\Box$\par
	\endproof

Next, we apply the above lemma to asymmetric matrix sensing and show that, with high probability, ${\cV_{\fW^\star}(\gamma_0)}$ is non-empty for every $\fW^\star\in \cW_R(k/3)$.
\begin{lemma}\label{prop_cV_nonempty_v2}
                {Suppose that the measurement matrices follow the matrix sensing model in  \Cref{asp_sensing}. Suppose that $m\lesssim d_1k$. Then, with probability at least $1-\exp(-\Omega(mpp_0))$, the set ${\cV_{\fW^\star}(\gamma_0)}$ is non-empty for every $\fW^\star\in \cW_R(k/3)$ and $0<{\frac{t_0}{\gamma_0}\lesssim\sqrt{\frac{d_1k}{mpp_0}}\sigma_{\min,+}(W_2^\star)}$.}
\end{lemma}
Before presenting the proof of the above lemma, we show how it can be used to finish the proof of \Cref{prop_imbalanced}.

{\noindent{\it Proof of \Cref{prop_imbalanced}.} As mentioned before, to prove this lemma, it suffices to show the validity of \Cref{ub_rhs} and \Cref{ub_lhs}. For brevity, we will only prove the former, as the latter follows in an identical manner. Let us choose $\gamma_0 \asymp \sqrt{\frac{mpp_0}{d_1k}}\cdot\frac{t_0}{\sigma_{\min,+}(W_2^\star)}$. This choice of $\gamma_0$ satisfies the condition of \Cref{prop_cV_nonempty_v2}. Therefore, ${\cV_{\fW^\star}(\gamma_0)}$ is non-empty for every $\fW^\star\in \cW_R(k/3)$, with probability at least $1-\exp(-\Omega(mpp_0))$. Conditioned on the non-emptiness of ${\cV_{\fW^\star}(\gamma_0)}$, \Cref{eq_cv_ub} immediately implies \Cref{ub_rhs}. $\hfill\square$}

Next, we continue with the proof of \Cref{prop_cV_nonempty_v2}. To this goal, it suffices to prove that there exists $\xi>0$ such that $\sigma_{\min}(A_{\fW^\star})\geq \xi$ for all ${\fW^\star\in \cW_R(k/3)}$. To see this, note that $\sigma_{\min}(A_{\fW^\star})\geq \xi$ implies that $A_{\fW^\star}$ is full row-rank. Moreover, 
\begin{equation*}
    \begin{aligned}
        \norm{A_{\fW^\star}^\top \left(A_{\fW^\star}A_{\fW^\star}^\top\right)^{-1}}
    &\leq \norm{A_{\fW^\star}^\top \left(A_{\fW^\star}A_{\fW^\star}^\top\right)^{-1/2}}\norm{ \left(A_{\fW^\star}A_{\fW^\star}^\top\right)^{-1/2}}\\
                        &\leq \norm{ \left(A_{\fW^\star}A_{\fW^\star}^\top\right)^{-1/2}}=\frac{1}{\sigma_{\min}(A_{\fW^\star})}\leq \frac{1}{\xi}.
    \end{aligned}
\end{equation*}
This implies that \Cref{lem_cV} is satisfied for ${\frac{t_0}{\gamma_0}}\leq {\xi}/\sqrt{|\cS_t|}$. Our next lemma provides such uniform lower bound for $\sigma_{\min}(A_{\fW^\star})$.
{\begin{lemma}\label{lem_min_SVD}
    Suppose that $m\lesssim d_1k$ and $d_1\geq cd_2$, for some constant $c>1$.
    Then, with probability at least $1-\exp(-\Omega(d_1k))$, for every $\fW^\star\in \cW_R(k/3)$, we have $$\sigma_{\min}(A_{\fW^\star})\gtrsim \sqrt{d_1k}\sigma_{\min,+}(W_2^\star).$$
\end{lemma}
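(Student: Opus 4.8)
The plan is to reduce the uniform lower bound on $\sigma_{\min}(A_{\fW^\star})$ to the uniform singular-value bound over the Stiefel manifold already established in \Cref{lem_uniform_svd}. First I would rewrite $A_{\fW^\star}$ in a convenient form. Let $A\in\bR^{m\times d_1d_2}$ be the standard Gaussian matrix whose $i$-th row is $\vecc(A_i)^\top$. Using the identity $\vecc(A_i{\trueW_2}^\top)=({\trueW_2}\otimes I_{d_1\times d_1})\vecc(A_i)$, one gets $A_{\fW^\star}=A\,({\trueW_2}^\top\otimes I_{d_1\times d_1})$. Since $m\lesssim d_1k$ with a small enough absolute constant so that $m\le d_1\lceil k/3\rceil$, the variational characterization of the least singular value of a wide matrix gives $\sigma_{\min}(A_{\fW^\star})=\min_{\norm{u}=1}\norm{G(u){\trueW_2}^\top}_F$, where $G(u):=\sum_{i=1}^m u_iA_i$ (just a shorthand; for each fixed $u$ this is a standard Gaussian matrix by \Cref{lem_Gaussian}, although its distribution plays no role here).

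Next I would exploit the membership $\fW^\star\in\cW_R(\tau,k/3)$. Set $k':=\lceil k/3\rceil$. Since $\rank(\trueW_2,\tau)\ge k/3$ and the rank is an integer, $\trueW_2$ has at least $k'$ singular values exceeding $\tau$; I would collect $k'$ of the corresponding right singular vectors $v_1,\dots,v_{k'}$ into $V:=[v_1,\dots,v_{k'}]\in\cO_{d_2\times k'}$ (note $k'\le\rank(\trueW_2)\le d_2$) and the matching left singular vectors into an orthonormal family in $\bR^k$. Expanding ${\trueW_2}^\top$ in its SVD and using the orthonormality of those left singular vectors yields $\norm{G(u){\trueW_2}^\top}_F^2=\sum_j\sigma_j(\trueW_2)^2\norm{G(u)v_j}^2\ge\tau^2\sum_{j=1}^{k'}\norm{G(u)v_j}^2=\tau^2\norm{G(u)V}_F^2$. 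A second application of the $\vecc$/Kronecker identity rewrites $\norm{G(u)V}_F$ as $\norm{(A(V\otimes I_{d_1\times d_1}))^\top u}$, so minimizing over $\norm{u}=1$ gives
\[
\sigma_{\min}(A_{\fW^\star})\;\ge\;\tau\,\sigma_{\min}\!\bigl(A(V\otimes I_{d_1\times d_1})\bigr)\;\ge\;\tau\min_{V'\in\cO_{d_2\times k'}}\sigma_{\min}\!\bigl(A(V'\otimes I_{d_1\times d_1})\bigr).
\]
The right-hand side no longer depends on $\fW^\star$, so the minimum over $\cW_R(\tau,k/3)$ can be taken for free.

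Finally I would invoke \Cref{lem_uniform_svd} with $k'$ in place of $k$: the hypotheses $d_1\ge cd_2$ and $m\lesssim d_1k$ (hence $m\le\frac{(\sqrt c-1)^2}{4c}d_1k'$ after adjusting the absolute constant, since $k'\ge k/3$) ensure that, with probability at least $1-\exp(-\Omega(d_1k'))=1-\exp(-\Omega(d_1k))$, we have $\min_{V'\in\cO_{d_2\times k'}}\sigma_{\min}(A(V'\otimes I_{d_1\times d_1}))\gtrsim\sqrt{d_1k'}\gtrsim\sqrt{d_1k}$; combining with the previous display gives $\min_{\fW^\star\in\cW_R(\tau,k/3)}\sigma_{\min}(A_{\fW^\star})\gtrsim\sqrt{d_1k}\,\tau$. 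I expect the only genuine obstacle to be the uniformity over $\fW^\star\in\cW_R(\tau,k/3)$; this is exactly what forces the detour through the Stiefel manifold, and all the real probabilistic content (the Gaussian comparison inequality of \Cref{lem_Gaussian_exp} together with the Gaussian--Lipschitz concentration of \Cref{lem_lip}) is already packaged inside \Cref{lem_uniform_svd}. The remaining work is the bookkeeping with the two $\vecc$/Kronecker identities and checking the degenerate regime $k>d_2$, which can occur only when $k\le 3d_2$ --- otherwise $\rank(\trueW_2)\le d_2<k/3$ forces $\cW_R(\tau,k/3)=\emptyset$ --- so that $k'=\lceil k/3\rceil\le d_2$ is always valid.
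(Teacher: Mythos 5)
Your proposal is correct and follows essentially the same route as the paper: write $A_{\fW^\star}=A({\trueW_2}^\top\otimes I_{d_1\times d_1})$, discard the singular values of $\trueW_2$ below $\tau$ to extract an orthonormal factor $V$ times a part with smallest singular value at least $\tau$, and then invoke the uniform bound of \Cref{lem_uniform_svd} over $\cO_{d_2\times k'}$. The only (cosmetic, in fact slightly cleaner) difference is that you fix $k'=\lceil k/3\rceil$ and argue via the $\vecc$/Frobenius identities rather than letting $k'=\rank(\trueW_2(\tau))$ vary with $\fW^\star$ as the paper does, which conveniently removes any need to worry about uniformity over the possible values of $k'$.
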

\proof{Proof}
    Consider the following characterization of $A_{\fW^\star}$:
    \begin{equation*}
                    A_{\fW^\star}=\underbrace{\begin{bmatrix}
                        \vecc\left(A_1 \right)^\top\\
                        \vdots\\
                        \vecc\left(A_m\right)^\top
                    \end{bmatrix}}_{:= A\in \bR^{m\times d_1d_2}}\cdot \underbrace{{\trueW_2}^\top\otimes I_{d_1\times d_1}}_{:=\tilde W_2\in \bR^{d_1d_2\times d_1k}}.
                \end{equation*}
                Let $\rank(\trueW_2) = k'\geq k/3$.
                Consider the SVD of $\trueW_2$ as $\trueW_2=U\Sigma V^{\top}$ where $U\in \bR^{k\times k'}$, $\Sigma\in \bR^{k'\times k'}$ and $V\in \bR^{d_2\times k'}$. Then, one can write 
                \begin{equation*}
                    \begin{aligned}
                        &A \tilde W_2
                     =\underbrace{A\cdot (V\otimes I_{d_1\times d_1})}_{:= A_V}\cdot {(\Sigma U^\top\otimes I_{d_1\times d_1})}\implies\sigma_{\min}(A \tilde W_2)\geq \sigma_{\min}(A_V)\sigma_{\min}(\Sigma U^\top\otimes I_{d_1\times d_1}).
                    \end{aligned}
                \end{equation*}
                Note that $\sigma_{\min}(\Sigma U^\top\otimes I_{d_1\times d_1})\geq \sigma_{\min,+}(W_2^\star)$ for every $\fW^\star\in \cW_R(k/3)$. Moreover, according to \Cref{lem_uniform_svd}, with probability at least $1-\exp(-\Omega(d_1k'))$, we have $\sigma_{\min}(A_V)\gtrsim \sqrt{d_2k'}$ for every $V\in \cO_{d_2\times k'}$.
                Therefore, $\sigma_{\min}(A_{\fW^\star})\geq \sigma_{\min}(A \tilde W_2)\gtrsim \sqrt{d_2k'}\sigma_{\min,+}(W_2^\star)$ with probability at least $1-\exp(-\Omega(d_1k'))$. The proof is completed by noting that $k'\geq k/3$. \hspace*{\fill}$\Box$\par
\endproof}

Equipped with this lemma, we are ready to present the proof of \Cref{prop_cV_nonempty_v2}.

\noindent{\it Proof of \Cref{prop_cV_nonempty_v2}.} To prove this lemma, it suffices to show that the conditions of \Cref{lem_cV} are satisfied for every $\fW^\star\in \cW_R(k/3)$ and $0<{\frac{t_0}{\gamma_0}\lesssim\sqrt{\frac{d_1k}{mpp_0}}\sigma_{\min,+}(W_2^\star)}$. First, note that, due to \Cref{lem_min_SVD}, with probability at least $1-\exp(-\Omega(d_1k))$, $A_{\fW^\star}$ is full row-rank for every $\fW^\star\in \cW_R(k/3)$. Moreover, for every $\fW^\star\in \cW_R(k/3)$,
{\begin{equation*}
    \begin{aligned}
        &\left(\sqrt{|\cS_t|}\norm{A_{\fW^\star}^\top \left(A_{\fW^\star}A_{\fW^\star}^\top\right)^{-1}}\right)^{-1}\geq ({|\cS_t|})^{-1/2}\sigma_{\min}(A_{\fW^\star})\gtrsim \sqrt{\frac{d_1k}{mpp_0}}\sigma_{\min,+}(W_2^\star)\gtrsim\frac{t_0}{\gamma_0}
    \end{aligned}
\end{equation*}}
where the second inequality follows from \Cref{lem_min_SVD} and the fact that $|\cS_t|\leq 2mpp_0$ with probability at least $1-\exp(-\Omega(mpp_0))$. Therefore, the second condition of \Cref{lem_cV} is also satisfied for every $\fW^\star\in \cW_R(k/3)$ with probability at least $1-\exp(-\Omega(mpp_0))-\exp(-\Omega(d_1k))$. This completes the proof.$\hfill\square$

\subsection{Proof of \Cref{prop_balanced}}
\begin{sloppypar}
    For any $\fW^\star = (\trueW_1,\trueW_2)\in \cW(2k/3)$, let $\rank(\trueW_1) =k_1$ and $\rank(\trueW_2)=k_2$. Indeed, we have $\dim\ker(\trueW_1)\geq k-k_1$ and $\dim\ker\left({\trueW_2}^\top\right)\geq k-k_2$, which leads to $\dim\ker\left(\trueW_1\cap {\trueW_2}^\top\right) = k-(k_1+k_2):=k'\geq k/3$. Suppose that $S_{\fW^\star}\in \bR^{k\times k'}$ is {an orthonormal} matrix whose columns form a basis for $\ker\left(\trueW_1\cap {\trueW_2}^\top\right)$. Define the set of perturbations 
\end{sloppypar}
$$
{\cZ_{\fW^\star}} = \left\{(Y_1S_{\fW^\star}^\top, S_{\fW^\star}Y_2): Y_1\in \bR^{d_1\times k'}, Y_2\in \bR^{k'\times d_2}, \norm{Y_1}_F\leq {\frac{1}{\sqrt{2}}}, \norm{Y_2}_F\leq {\frac{1}{\sqrt{2}}}\right\}.
$$
Then, we have ${\cZ_{\fW^\star}}\in {\cB}$.
Moreover, for every $\Delta\fW = (\Delta W_1, \Delta W_2)\in {\cZ_{\fW^\star}}$, we have
\begin{equation*}
            \begin{aligned}
                &(\trueW_1+{\gamma}\Delta W_1)(\trueW_2+{\gamma}\Delta W_2)-\trueW_1\trueW_2={\gamma^2\cdot}Y_1Y_2.
            \end{aligned}
\end{equation*}
{Therefore, for every $\fW^\star\in\cW(2k/3)$ and $\Delta\fW\in \cZ_{\fW^\star}$, we have}
        {\begin{equation}\label{eq_ub_balanced}
            \begin{aligned}
                &f^{\mathrm{s}}_{\ell_1}(\fW^\star+{\gamma}\Delta\fW)-f^{\mathrm{s}}_{\ell_1}(\fW^\star)= {\frac{1}{m}\sum_{i=1}^{m}\left(|\langle A_i, {\gamma^2}Y_1Y_2\rangle-\epsilon_i|-|\epsilon_i|\right)}.
            \end{aligned}
        \end{equation}}
        A key property of the above equation is the fact that $Y_1Y_2$ is independent of the choice of the true solution $\fW^\star$. In other words, for every choice of $\norm{Y_1}_F\leq {\frac{1}{\sqrt{2}}}$ and $\norm{Y_1}_F\leq {\frac{1}{\sqrt{2}}}$, there exists $\Delta \fW\in \cZ_{\fW^\star}$ that satisfies the above equation. Therefore, it suffices to show the existence of $\norm{Y_1}_F\leq {\frac{1}{\sqrt{2}}}$ and $\norm{Y_1}_F\leq {\frac{1}{\sqrt{2}}}$ that achieves the desired bound in \Cref{prop_balanced}.
      
        To achieve this goal,  our approach will be similar to the proof of \Cref{prop_imbalanced}. Let us fix $Y_2 = {\frac{1}{2\sqrt{k'}}}\cdot V$ for some arbitrary orthonormal matrix $V\in \bR^{k'\times d_2}$. Clearly, we have ${\norm{Y_2}_F\leq \frac{1}{\sqrt{2}}}$. {Inspired by the definition of $\cV_{\fW^\star}(\gamma_0)$ in \Cref{eq_cV},} consider the following perturbation set for $Y_1$:
        \begin{equation}
		\begin{aligned}\label{eq_cV2}
			{\cV_{V}(\gamma_0)} := \big\{ U:
			&\ \norm{U}_F\leq {\frac{1}{\sqrt{2}}},\\
			&\langle A_i, U V \rangle = 0, \quad\forall i\in \bar{\cS}\cup (\cS\backslash \cS_t),\\
			& {\sqrt{2}\gamma_0}\langle A_i, U V \rangle = {t_0}\sign(\epsilon_i), \quad\forall i\in {\cS_t}\big\}.
		\end{aligned}
	\end{equation}
 Our next lemma shows that, for an appropriate choice of ${\frac{t_0}{\gamma_0}}$, the set ${\cV_V(\zeta)}$ is non-empty with high probability.
\begin{lemma}\label{lem_cV2}
    Suppose that the measurement matrices follow the matrix sensing model in  \Cref{asp_sensing}. Suppose that $m\leq d_1k$ and ${\frac{t_0}{\gamma_0}\lesssim\sqrt{\frac{d_1k}{mpp_0}}}$. Then, with probability at least $1-\exp(-\Omega(mpp_0))$, the set ${\cV_V(\gamma_0)}$ is non-empty.
\end{lemma}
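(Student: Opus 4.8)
\noindent\textbf{Proof plan for \Cref{lem_cV2}.} The plan is to mirror the proof of \Cref{prop_cV_nonempty_v2} almost verbatim, with the fixed factor $\trueW_2$ there replaced by the orthonormal matrix $V$ here. First I would recast membership in $\cV_V(\gamma,\zeta)$ as a norm-constrained linear system: using $\langle A_i, UV\rangle = \langle A_i V^\top, U\rangle = \vecc(A_i V^\top)^\top\vecc(U)$, the constraints defining $\cV_V(\gamma,\zeta)$ become $A_V u = b$ with $A_V := A\cdot(V^\top\otimes I_{d_1\times d_1})$, $A := \begin{bmatrix}\vecc(A_1) & \dots & \vecc(A_m)\end{bmatrix}^\top$, $u := \vecc(U)$, and $b_i = \zeta\sign(\epsilon_i)$ for $i\in\cS_t$ and $b_i = 0$ otherwise. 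Exactly as in \Cref{lem_cV}, if $A_V$ has full row rank then the minimum-norm solution $u^\star = A_V^\top(A_V A_V^\top)^{-1}b$ exists and satisfies $\norm{u^\star}\leq\sigma_{\min}(A_V)^{-1}\norm{b}\leq\sigma_{\min}(A_V)^{-1}\sqrt{|\cS_t|}\,\zeta$; hence $\cV_V(\gamma,\zeta)$ is non-empty once $\sigma_{\min}(A_V)>0$ and $\zeta\leq\sigma_{\min}(A_V)\gamma/\sqrt{|\cS_t|}$.

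Second, I would lower bound $\sigma_{\min}(A_V)$. Since $V$ has $k'\geq k/3$ orthonormal rows, $V^\top\in\cO_{d_2\times k'}$, so $\sigma_{\min}(A_V)\geq\min_{W\in\cO_{d_2\times k'}}\sigma_{\min}(A\cdot(W\otimes I_{d_1\times d_1}))$, and \Cref{lem_uniform_svd} applied with $k'$ in place of $k$ (legitimate because $m\lesssim d_1 k\lesssim d_1 k'$ once the constant in the hypothesis $m\lesssim d_1 k$ is taken small enough, and because $d_1\geq d_2$) gives this quantity $\gtrsim\sqrt{d_1 k'}\gtrsim\sqrt{d_1 k}$ with probability at least $1-\exp(-\Omega(d_1 k))$; a union bound over the $\leq k$ admissible values of $k'$ is absorbed into the exponent. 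In particular $A_V$ has full row rank on this event. Third, \Cref{lem::concentration-Bernoulli} gives $|\cS_t|\lesssim mpp_0$ with probability at least $1-\exp(-\Omega(mpp_0))$ (the same estimate invoked in the proof of \Cref{prop_cV_nonempty_v2}). On the intersection of these two events $\sigma_{\min}(A_V)\gamma/\sqrt{|\cS_t|}\gtrsim\sqrt{d_1 k/(mpp_0)}\,\gamma$, so the hypothesis $\zeta\lesssim\sqrt{d_1 k/(mpp_0)}\,\gamma$ with a sufficiently small absolute constant forces $\zeta\leq\sigma_{\min}(A_V)\gamma/\sqrt{|\cS_t|}$, and non-emptiness follows; the overall failure probability is $\exp(-\Omega(d_1 k))+\exp(-\Omega(mpp_0))=\exp(-\Omega(mpp_0))$ since $m\leq d_1 k$.

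The main obstacle is the uniform-in-$V$ lower bound on $\sigma_{\min}(A_V)$ in the second step: for a single $V$ this is just the least-singular-value bound for a wide i.i.d.\ Gaussian matrix (\Cref{lem_gaussian}), but the minimum over the orthonormal-matrix set $\cO_{d_2\times k'}$ needs the Gaussian comparison argument underlying \Cref{lem_uniform_svd}. A secondary point to watch is the dimension hypothesis $d_1\geq cd_2$ of \Cref{lem_uniform_svd}: in the regime where $d_1$ and $d_2$ are comparable, one can instead take $V$ to consist of $k'$ standard basis vectors of $\bR^{d_2}$, so that $A_V$ is literally a column sub-matrix of the Gaussian matrix $A$ and \Cref{lem_gaussian} applies directly to that fixed $V$; a union bound over the $O(k)$ choices of $k'$ then removes any restriction on $d_1/d_2$, at the cost of committing to a specific orthonormal $V$, which is all that \Cref{prop_balanced} actually uses.
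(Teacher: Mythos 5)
Your proposal is correct and follows essentially the same route as the paper, whose proof of this lemma is simply deferred to that of \Cref{prop_cV_nonempty_v2}: recast membership in $\cV_V(\gamma,\zeta)$ as the norm-constrained linear system of \Cref{lem_cV}, lower-bound $\sigma_{\min}(A_V)$, and combine with the binomial bound on $|\cS_t|$. Your closing observation is in fact the right way to discharge the one point the paper glosses over---\Cref{lem_uniform_svd} assumes $d_1\geq cd_2$ with $c>1$, which \Cref{prop_balanced} does not---since $V$ is a single fixed orthonormal matrix chosen in advance, $A_V$ is an $m\times d_1k'$ i.i.d.\ Gaussian matrix, so \Cref{lem_gaussian} already gives $\sigma_{\min}(A_V)\gtrsim\sqrt{d_1k}$ without any aspect-ratio restriction or uniformity over $V$.
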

The proof of \Cref{lem_cV2} follows from that of \Cref{prop_cV_nonempty_v2}, and hence, omitted for brevity. 

{Let us choose ${\gamma_0\asymp \sqrt{\frac{mpp_0}{d_1k}}t_0}$. This choice of $\gamma_0$ satisfies the condition of \Cref{lem_cV2}. Therefore, the set ${\cV_V(\gamma_0)}$ is non-empty with probability at least $1-\exp(-\Omega(mpp_0))$. Conditioned on this event and recalling that $Y_2 = V$, there exists $Y_1\in \cV_{V}(\gamma_0)$ such that }
 {\begin{equation}\label{eq_A}
     \begin{aligned}
         \frac{1}{m}\sum_{i=1}^{m}\left(|\langle A_i, {\gamma^2}Y_1Y_2\rangle-\epsilon_i|-|\epsilon_i|\right) &= -\frac{{\gamma^2}}{\sqrt{k'}}\cdot\frac{1}{m}\sum_{i\in \cS_t}|\inner{A_i}{Y_1V}| \\
         &= -\frac{{\gamma^2}}{2\sqrt{k'}}\cdot\frac{|\cS_t|}{m}\cdot\frac{t_0}{\gamma_0}\\
         &\lesssim -\frac{{\gamma^2}}{\sqrt{k'}}\frac{|\cS_t|}{m}\sqrt{\frac{d_1k}{mpp_0}}\\
     &\lesssim -\sqrt{\frac{d_1pp_0}{m}}\gamma^2, \qquad \text{for every $0\leq \gamma\leq \gamma_0$}.
     \end{aligned}
 \end{equation} }
In the last inequality, we used the facts that $k'\leq k$, and $|\cS_t|\leq \frac{3mpp_0}{2}$ with probability at least $1-\exp(-\Omega(mpp_0))$. This completes the proof of Proposition~\ref{prop_balanced}.$\hfill\square$
	
	\section{Matching Lower Bounds}\label{sec_lb}
	In this section, we present the proof of \Cref{thm_benign_l1}. As will be explained later, the proof of \Cref{thm_critical_sensing_sym} follows analogously. We start with the proof of the second statement of \Cref{thm_benign_l1}. In particular, we show that if $m\gtrsim \max\{d_1,d_2\}k/(1-2p)^4$, the true solutions coincide with the global optima of~\ref{BM-sensing-asym}. Our proof is the generalization of~\cite[Theorem 2.3]{ding2021rank} to the asymmetric setting.
 
	It suffices to show that for any $\fW = (W_1,W_2)\not\in \cW$ and $\fW^\star = (\trueW_1, \trueW_2)\in\cW$, we have $f^{\mathrm{s}}_{\ell_1}(\fW)-f^{\mathrm{s}}_{\ell_1}(\fW^\star)>0$. To this end, let $\Delta X = W_1W_2-\trueW_1\trueW_2$. Then, one can write
	\begin{equation*}
	    \begin{aligned}
	        f^{\mathrm{s}}_{\ell_1}(\fW)-f^{\mathrm{s}}_{\ell_1}(\fW^\star)
		&\geq \frac{1}{m}\sum_{i\in\bar\cS}\left|\inner{A_i}{\Delta X}\right|+\frac{1}{m}\sum_{i\in\cS}\left|\inner{A_i}{\Delta X}-\epsilon_i\right|-|\epsilon_i|\\
		&\geq \underbrace{\frac{1}{m}\sum_{i\in\bar\cS}\left|\inner{A_i}{\Delta X}\right|-\frac{1}{m}\sum_{i\in\cS}\left|\inner{A_i}{\Delta X}\right|}_{F}
	    \end{aligned}
	\end{equation*}
	where the second inequality follows from the triangle inequality. We will show that if the corruption probability $p$ is strictly less than half, then $F>0$ with high probability. To this goal, we use an important property of Gaussian measurements, called {\it $\ell_1/\ell_2$-restricted isometry property} ($\ell_1/\ell_2$-RIP), which is provided in the following lemma.
	\begin{lemma}[$\ell_1/\ell_2$-RIP~\cite{li2020nonconvex, charisopoulos2021low}]\label{lem_RIP}
		Suppose that the measurements follow the matrix sensing model in  \Cref{asp_sensing}.
		For all rank-$2r'$ matrices $X\in \bR^{d_1\times d_2}$ and any $\delta>0$, we have with probability at least $1-\exp(-\Omega(m\delta^2))$:
		$$
		\left(\sqrt{\frac{2}{\pi}}-\delta\right)\norm{X}_F\leq \frac{1}{m}\sum_{i=1}^m\left|\inner{A_i}{X}\right|\leq \left(\sqrt{\frac{2}{\pi}}+\delta\right)\norm{X}_F,
		$$
		provided that $m\geq c\max\{d_1,d_2\}r'/\delta^2$ for some constant $c>0$.
	\end{lemma}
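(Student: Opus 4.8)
The plan is to regard $X\mapsto \frac1m\sum_{i=1}^m|\inner{A_i}{X}|$ as a random process indexed by the low-rank manifold and to control its supremum with the Orlicz-process machinery of Section~\ref{subsec_orlicz}. By homogeneity of both sides in $X$, it suffices to prove the two-sided estimate uniformly over $\cM := \{X\in\bR^{d_1\times d_2}: \rank(X)\leq 2r',\ \norm{X}_F=1\}$. For a \emph{fixed} $X\in\cM$, the scalars $\inner{A_i}{X}$ are i.i.d.\ $\cN(0,1)$, so $|\inner{A_i}{X}|$ is a folded normal with $\bE[|\inner{A_i}{X}|]=\sqrt{2/\pi}$ and bounded sub-Gaussian norm; centering (Lemma~\ref{lem::centering}) together with the sub-Gaussian sum bound (Lemma~\ref{lem::concentration-sum-independent}) yields $\bP\!\left(\left|\frac1m\sum_i|\inner{A_i}{X}|-\sqrt{2/\pi}\right|\geq t\right)\leq 2\exp(-cmt^2)$ for an absolute constant $c>0$. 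This folded-normal computation plus the sub-Gaussian sum bound is the only ``analytic'' input, and it is routine; everything else is a uniformization argument.

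To upgrade to a uniform bound, set $Z_X := \frac1m\sum_{i=1}^m|\inner{A_i}{X}|-\sqrt{2/\pi}$ for $X\in\cM$, which is zero-mean since $\norm{X}_F=1$. For $X,X'\in\cM$ each increment $|\inner{A_i}{X}|-|\inner{A_i}{X'}|$ is mean-zero (the folded-normal means cancel) and bounded in magnitude by $|\inner{A_i}{X-X'}|\sim\cN(0,\norm{X-X'}_F^2)$; hence, as a normalized sum of independent mean-zero sub-Gaussians, $Z_X-Z_{X'}$ is sub-Gaussian with $\norm{Z_X-Z_{X'}}_{\psi_2}\lesssim \norm{X-X'}_F/\sqrt m$, i.e.\ $\{Z_X\}_{X\in\cM}$ is a $\psi_2$-process with respect to $d(X,X')=C\norm{X-X'}_F/\sqrt m$. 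Theorem~\ref{thm::concentration-Orlicz-process} then gives $\sup_{X\in\cM}Z_X\leq C\cJ_2(0,D)+s$ with probability at least $1-\exp(-s^2/D^2)$, where the diameter obeys $D\lesssim 1/\sqrt m$. Writing a rank-$\leq 2r'$ unit-Frobenius matrix via its SVD as $U\Sigma V^\top$, taking $(\eta/3)$-nets of the two Stiefel factors and of the diagonal block and using $\norm{U_1\Sigma_1V_1^\top-U_2\Sigma_2V_2^\top}_F\lesssim\norm{U_1-U_2}_F+\norm{\Sigma_1-\Sigma_2}_F+\norm{V_1-V_2}_F$ shows $\cN(\cM,\norm{\cdot}_F,\eta)\leq (C/\eta)^{2r'(d_1+d_2+1)}$. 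A change of variables then bounds the entropy integral by $\cJ_2(0,D)\lesssim \frac{1}{\sqrt m}\sqrt{r'(d_1+d_2)}\int_0^{c_0}\sqrt{\log(C/u)}\,du\lesssim \sqrt{\max\{d_1,d_2\}r'/m}$, with $c_0$ an absolute constant. Choosing $s=\delta/2$ and invoking the hypothesis $m\geq c\max\{d_1,d_2\}r'/\delta^2$ with $c$ large enough forces $C\cJ_2(0,D)\leq\delta/2$, so $\sup_{X\in\cM}Z_X\leq\delta$ with probability at least $1-\exp(-\Omega(m\delta^2))$. Running the identical argument with $\{-Z_X\}$ in place of $\{Z_X\}$ gives the matching lower estimate; combining the two and undoing the homogenization proves the lemma (when $\delta\geq\sqrt{2/\pi}$ the lower inequality is vacuous, so no extra work is needed there).

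The main obstacle is exactly the uniformity over the rank-$2r'$ manifold: this is where the chaining bound is essential, and the reason to use it rather than a crude net is that it yields the \emph{sharp} sample complexity $m\gtrsim\max\{d_1,d_2\}r'/\delta^2$ with no spurious $\log(1/\delta)$ factor, because the integral $\int_0^{c_0}\sqrt{\log(C/u)}\,du$ converges. An alternative is a single $\eta$-net with $\eta$ an absolute constant, followed by a peeling/bootstrapping step that controls $\sup\{\frac1m\sum_i|\inner{A_i}{Z}|:\rank(Z)\leq 8r',\ \norm{Z}_F\leq 1\}$ self-referentially and transfers the net bound to all of $\cM$ (the residual $X-X_0$ has rank up to $4r'$, which is what forces the inflated rank in the bootstrap); this route also works but is messier to write, so I would favor the Orlicz-process argument above.
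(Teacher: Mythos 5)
Your argument is correct, but note that the paper never proves \Cref{lem_RIP} itself: it is imported verbatim from \cite{li2020nonconvex, charisopoulos2021low}, with only a footnote remarking that the original proof extends to $\delta\geq\sqrt{2/\pi}$. So the relevant comparison is with those cited proofs, which follow your ``alternative'' route: a fixed-$X$ sub-Gaussian concentration bound for $\frac{1}{m}\sum_i|\inner{A_i}{X}|$, a constant-resolution $\eta$-net over the unit-Frobenius rank-constrained set, and a bootstrapping/peeling step to pass from the net to the whole set (which is where the inflated rank and the messier bookkeeping you mention come from). Your chaining proof via the paper's own Orlicz-process toolkit (Theorem~\ref{thm::concentration-Orlicz-process}) is a clean substitute: the increment bound $\bigl||\inner{A_i}{X}|-|\inner{A_i}{X'}|\bigr|\leq|\inner{A_i}{X-X'}|$ together with the mean cancellation on the unit sphere of $\cM$ makes $\{Z_X\}$ a legitimate $\psi_2$-process with metric $\asymp\norm{X-X'}_F/\sqrt{m}$, the SVD-based covering bound $(C/\eta)^{2r'(d_1+d_2+1)}$ is standard, the entropy integral converges to give $\cJ_2(0,D)\lesssim\sqrt{\max\{d_1,d_2\}r'/m}$, and the choice $s=\delta/2$ yields the stated $1-\exp(-\Omega(m\delta^2))$ probability; applying the same bound to $\{-Z_X\}$ gives the two-sided estimate, and your observation that the lower inequality is vacuous for $\delta\geq\sqrt{2/\pi}$ while the upper inequality is covered for all $\delta>0$ squares exactly with the paper's footnote. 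What your route buys is a self-contained derivation using machinery already set up in Section~\ref{subsec_orlicz} and the sharp sample complexity without extra logarithmic factors; what the cited net-plus-bootstrap route buys is avoiding any reliance on the precise form of the Dudley-type concentration theorem. The only caveat is that your appeal to \Cref{thm::concentration-Orlicz-process} bounds $\sup_X Z_X$ directly rather than increments from an anchor point; this is consistent with how the paper itself states and uses that theorem (e.g., in \Cref{lem::sub-exponential-process}), and since $Z_{X_0}$ for a fixed anchor has $\psi_2$-norm of order $1/\sqrt m\asymp D$, anchoring would only change constants.
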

    {\begin{remark}
        We note that the original version of the above lemma, as presented in \cite{li2020nonconvex, charisopoulos2021low}, only covers the case $0 < \delta < \sqrt{2/\pi}$. However, the proof in \cite{li2020nonconvex} remains valid for $\delta \geq \sqrt{2/\pi}$. In this setting, while the provided lower bound becomes vacuous, the upper bound remains valid and will be used in our subsequent analysis.
    \end{remark}}
	Equipped with the above lemma, we are ready to present the proof of the second statement of \Cref{thm_benign_l1}.
	\vspace{1mm}
	
	\noindent{\it Proof of the second statement of \Cref{thm_benign_l1}.}
	Our goal is to show that, if the measurement matrices follow the matrix sensing model, then with an overwhelming probability, we have $F\gtrsim \norm{\Delta X}_F$. This in turn implies that if $\Delta X\not=0$ (or equivalently $\fW\not\in\cW$), then $f^{\mathrm{s}}_{\ell_1}(\fW)-f^{\mathrm{s}}_{\ell_1}(\fW^\star)>0$.
	Notice that $\rank(\Delta X)\leq k+r$.
	Let $\cS_{k+r}^{d_1\times d_2} = \{X: X\in \bR^{d_1\times d_2}, \rank(X)\leq k+r, \norm{X}_F = 1\}$. It is easy to see that a sufficient condition for $F\gtrsim \norm{\Delta X}_F$ is to have
	\begin{equation*}
	    \min_{\Delta X\in \cS_{k+r}^{d_1\times d_2}}\!\!\Big\{\!\frac{1}{m}\!\sum_{i\in\bar\cS}\!\left|\!\inner{A_i}{\Delta X}\!\right|\!-\!\frac{1}{m}\!\sum_{i\in\cS}\!\left|\!\inner{A_i}{\Delta X}\!\right|\!\Big\}\!\gtrsim\! 1\!.
	\end{equation*}
	To prove the above inequality, we write
	\begin{equation*}
	    \begin{aligned}
	        \min_{\Delta X\in \cS_{k+r}^{d_1\times d_2}}\left\{\frac{1}{m}\sum_{i\in\bar\cS}\left|\inner{A_i}{\Delta X}\right|\!-\!\frac{1}{m}\sum_{i\in\cS}\left|\inner{A_i}{\Delta X}\right|\right\}\geq &\underbrace{\min_{\Delta X\in \cS_{k+r}^{d_1\times d_2}}\left\{\frac{1}{m}\sum_{i\in\bar\cS}\left|\inner{A_i}{\Delta X}\right|\right\}}_{F_1}\\
        &-\underbrace{\max_{\Delta X\in \cS_{k+r}^{d_1\times d_2}}\left\{\frac{1}{m}\sum_{i\in\cS}\left|\inner{A_i}{\Delta X}\right|\right\}}_{F_2}.
	    \end{aligned}
	\end{equation*}  
	We next show that $F_1-F_2> 0$. To this goal, we provide separate lower and upper bounds for $F_1$ and $F_2$. Conditioned on $|\bar\cS|$, $\ell_1/\ell_2$-RIP (\Cref{lem_RIP}) with $\delta = \sqrt{\frac{c\max\{d_1,d_2\}(k+r)}{|\bar\cS|}}$ can be invoked to show that
	\begin{equation*}
	    \begin{aligned}
	        F_1 &= \frac{|\bar\cS|}{m}\min_{\Delta X\in \cS_{k+r}^{d_1\times d_2}}\left\{\frac{1}{|\bar\cS|}\sum_{i\in\bar\cS}\left|\inner{A_i}{\Delta X}\right|\right\}\geq \frac{|\bar\cS|}{m}\left(\sqrt{\frac{2}{\pi}}-\sqrt{\frac{c\max\{d_1,d_2\}(k+r)}{|\bar\cS|}}\right)
	    \end{aligned}
	\end{equation*}
	with probability at least $1-\exp(-\Omega(\max\{d_1,d_2\}(k+r)))$. With the same probability, we have 
	\begin{equation*}
	    \begin{aligned}
	        F_2 &= \frac{|\cS|}{m}\max_{\Delta X\in \cS_{k+r}^{d_1\times d_2}}\left\{\frac{1}{|\cS|}\sum_{i\in\cS}\left|\inner{A_i}{\Delta X}\right|\right\}\leq \frac{|\cS|}{m}\left(\sqrt{\frac{2}{\pi}}+\sqrt{\frac{c\max\{d_1,d_2\}(k+r)}{|\cS|}}\right).
	    \end{aligned}
	\end{equation*}
	Combining these two inequalities, we have
	\begin{align}
  \label{eq_lb_E23}
		    F_1 - F_2\nonumber 
        &\geq \sqrt{\frac{2}{\pi}}\left(\frac{|\bar\cS|-|\cS|}{m}\right)-\left(\sqrt{\frac{|\bar\cS|}{m}}+\sqrt{\frac{|\cS|}{m}}\right)\sqrt{\frac{c\max\{d_1,d_2\}(k+r)}{m}}\\
		&\geq \sqrt{\frac{2}{\pi}}\left(\frac{m-2|\cS|}{m}\right)-2\sqrt{\frac{2c\max\{d_1,d_2\}k}{m}}
		\end{align}
	with probability at least $1-\exp(-\Omega(\max\{d_1,d_2\}(k+r)))$. On the other hand, since $|\cS|$ has a binomial distribution with parameters $(m,p)$ and $0\leq p<1/2$, \Cref{lem_bernoulli_lb} can be invoked to show that $\frac{m-2|\cS|}{m}\geq \min\left\{\frac{1}{4}, (1-2p)^2\right\}$ with probability at least $1-\exp(-\Omega((1-2p)^2m))$. Combining this inequality with \Cref{eq_lb_E23}, we have 
	\begin{equation*}
	    \begin{aligned}
	        F_1 - F_2
        &\geq\sqrt{\frac{2}{\pi}}\min\left\{\frac{1}{4},(1-2p)^2\right\}-2\sqrt{\frac{2c\max\{d_1,d_2\}k}{m}}\geq \left(\sqrt{\frac{2}{\pi}}-1\right)\min\left\{\frac{1}{4}, (1-2p)^2\right\}>0,
	    \end{aligned}
	\end{equation*}
	\begin{sloppypar}
		\noindent with probability at least $1-\exp(-\Omega(\max\{d_1,d_2\}(k+r)))-\exp(-\Omega((1-2p)^2m) = 1-\exp(-\Omega(\max\{d_1,d_2\}k))$ assuming that $m\geq \frac{512c\max\{d_1,d_2\}k}{(1-2p)^4}$. This completes the proof. $\hfill\square$\vspace{2mm}
	\end{sloppypar}
 Our next goal is to provide the proof of the first statement of \Cref{thm_benign_l1}. We first note that in this case, $m\gtrsim\frac{\max\{d_1,d_2\}r}{(1-2p)^4}$. Therefore, the sample size is not sufficiently large for $\ell_1/\ell_2$-RIP to control the deviation of the loss uniformly over the perturbation set $\cS_{k+r}^{d\times d}$. To circumvent this issue, we further decompose $\Delta X$ into two parts. Consider $\fW = \fW^\star+{\gamma}\Delta \fW$ for some $\norm{\Delta \fW}_F = \sqrt{\norm{\Delta W_1}_F^2+\norm{\Delta W_2}_F^2}\leq {1}$. Therefore, we have 
 \begin{equation*}
     \begin{aligned}
         \Delta X &= W_1W_2 - \trueW_1{\trueW_2} = {\gamma\cdot}\underbrace{\trueW_1\Delta W_2+ \Delta W_1{\trueW_2}}_{\Delta X_1}+{\gamma^2\cdot}\underbrace{\Delta W_1 \Delta W_2}_{\Delta X_2}.
     \end{aligned}
 \end{equation*}
 For every balanced true solution $\fW^\star\in \cW(2r)$, 
	we have $\rank(\Delta X_1)\leq 2r$. As a result, the effect of $\Delta X_1$ can be controlled via $\ell_1/\ell_2$-RIP with $m\gtrsim\frac{\max\{d_1,d_2\}r}{(1-2p)^4}$. On the other hand, $\rank(\Delta X_2)$ may be as large as $k$, but its magnitude is much smaller than $\Delta X_1$ {since it is controlled by the second-order term $\gamma$}. With this insight, we present the proof of the first statement of \Cref{thm_benign_l1}.

	\vspace{1mm}
	\noindent {\it Proof of first statement of \Cref{thm_benign_l1}.} 
	We have
	\begin{equation*}
	    \begin{aligned}
	        f^{\mathrm{s}}_{\ell_1}(\fW^\star{+\gamma\Delta\fW})- f^{\mathrm{s}}_{\ell_1}(\fW^\star)
        &=\frac{1}{m}\sum_{i\in\bar S}\left|\inner{A_i}{{\gamma}\Delta X_1+{\gamma^2}\Delta X_2}\right|+\frac{1}{m}\sum_{i\in S}\left(|\inner{A_i}{{\gamma}\Delta X_1+{\gamma^2}\Delta X_2}+\err_i|-|\err_i|\right)\\
		&\geq \frac{1}{m}\sum_{i\in\bar S}\left|\inner{A_i}{{\gamma}\Delta X_1+{\gamma^2}\Delta X_2}\right|-\frac{1}{m}\sum_{i\in S}|\inner{A_i}{{\gamma}\Delta X_1+{\gamma^2}\Delta X_2}|\\
		&\geq {\gamma}\left(\frac{1}{m}\sum_{i\in\bar S}\left|\inner{A_i}{\Delta X_1}\right|-\frac{1}{m}\sum_{i\in S}|\inner{A_i}{\Delta X_1}|\right)-\frac{{\gamma^2}}{m}\sum_{i=1}^m|\inner{A_i}{\Delta X_2}|,
	    \end{aligned}
	\end{equation*}
where the first and second inequalities follow from triangle inequality. Note that $\rank(\Delta X_1)\leq 2r$. Similarly, we have $\rank(\Delta X_2)\leq k$ and $\norm{\Delta X_2}_F\leq {1}$. Therefore, we have
	{\begin{align*}
	    \min_{\Delta\fW\in \cB}\left\{f^{\mathrm{s}}_{\ell_1}(\fW^\star{+\gamma\Delta\fW})- f^{\mathrm{s}}_{\ell_1}(\fW^\star)\right\}\geq &{\gamma}\cdot\underbrace{\min_{\rank(\Delta X_1)\leq 2r}\biggl\{\frac{1}{m}\sum_{i\in\bar S}\left|\inner{A_i}{\Delta X_1}\right|-\frac{1}{m}\sum_{i\in S}|\inner{A_i}{\Delta X_1}|\biggr\}}_{:=G_1}\\
     &-{\gamma^2}\underbrace{\max_{{\Delta X_2}\in \cS^{d_1\times d_2}_{k}}\biggl\{\frac{1}{m}\sum_{i=1}^m\left|\inner{A_i}{\Delta X_2}\right|\biggr\}}_{:=G_2},
	\end{align*}}
	Similar to the proof of \Cref{thm_benign_l1}, one can show that that $G_1> 0$ with probability at least $1-\exp(-\Omega(\max\{d_1,d_2\}r))$. On the other hand, applying $\ell_1/\ell_2$-RIP to $G_2$ with $\delta = \sqrt{\frac{c\max\{d_1,d_2\}k}{m}}$, we have
	\begin{equation*}
	    G_2\lesssim {\sqrt{\frac{2}{\pi}}+\sqrt{\frac{\max\{d_1,d_2\}k}{m}}},
	\end{equation*}
	with probability at least $1-\exp(-\Omega(\max\{d_1,d_2\}k))$. Combining the above inequalities leads to
	{\begin{equation*}
	    \begin{aligned}
	        &\min_{\fW^{\star}\in \cW(2r)}\min_{\Delta\fW\in \cB}\left\{f^{\mathrm{s}}_{\ell_1}(\fW^\star{+\gamma\Delta\fW})- f^{\mathrm{s}}_{\ell_1}(\fW^\star)\right\}
            \geq {\gamma}G_1-{\gamma^2}G_2
            \gtrsim  -\left(\sqrt{\frac{2}{\pi}}+\sqrt{\frac{\max\{d_1,d_2\}k}{m}}\right)\gamma^2,
	    \end{aligned}
	\end{equation*}}
	with probability at least $1-\exp(-\Omega(\max\{d_1,d_2\}r))$. This completes the proof. $\hfill\square$

 \noindent{\it Proof of \Cref{thm_critical_sensing_sym}.} The proof of the first statement of \Cref{thm_critical_sensing_sym} is identical to the proof of the first statement of \Cref{thm_benign_l1}. The proof of the second statement of \Cref{thm_critical_sensing_sym} follows directly from the proof of the second statement of \Cref{thm_benign_l1} after noting that, for the symmetric matrix sensing, every true solution is indeed rank-balanced.$\hfill\square$

{\section{Strict Saddle Nature}
\label{sec::strict-saddle}
We finally provide the proof of \Cref{cor::sym} by combining Theorems~\ref{thm_no_benign_sym} and~\ref{thm_critical_sensing_sym}. The proof for \Cref{cor::asym} follows identically; therefore, we omit it for brevity.

\noindent{\it Proof of \Cref{cor::sym}.}
First, we prove that all true solutions become critical points with high probability, provided that $m\gtrsim dr$. In light of Lemma~\ref{lem_critical}, it suffices to show that, with high probability, $(f^{\mathrm{s}}_{\ell_1})'(W^\star, \Delta W)\geq 0$ for every $W^\star\in \cW$ and every direction $\norm{\Delta W}_F=1$. According to the definition of the directional derivative, we need to show that 
\begin{equation}
    \lim_{\gamma\to 0^+}\frac{f^{\mathrm{s}}_{\ell_1}(W^\star+\gamma\Delta W)-f^{\mathrm{s}}_{\ell_1}(W^\star)}{\gamma}\geq 0, \qquad \forall W^\star\in \cW, \forall \Delta W: \norm{\Delta W}_F=1.
\end{equation}
The above inequality is readily implied by the first statement of \Cref{thm_critical_sensing_sym}, provided that $m\gtrsim \frac{dr}{(1-2p)^2}\asymp dr$. On the other hand, when $m\lesssim pp_0d(k-r)\asymp dk$, \Cref{thm_no_benign_sym} can be invoked to show that, for every $W^\star\in \cW$, there exists a direction $\norm{\Delta W}_F=1$, and scalars $c>0$ and $\bar\gamma>0$ such that, for every $0\leq \gamma\leq \bar\gamma$, we have
\begin{equation}
    f^{\mathrm{s}}_{\ell_1}(W^\star+\gamma\Delta W)-f^{\mathrm{s}}_{\ell_1}(W^\star)\leq -c\gamma^2.
\end{equation}
By combining the above two statements, we conclude that, when $dr\lesssim m\lesssim dk$, all true solutions $W^\star\in\cW$ become critical and satisfy the definition of strict saddle points (\Cref{def_ssp}), with a probability stated in \Cref{cor::sym}. This completes the proof.$\hfill\square$

}
\section{Discussion and Future Directions}\label{sec::discussion}
The current paper studies the optimization landscape of the robust low-rank matrix recovery. In a departure from the noiseless settings, our results reveal that true solutions do not invariably correspond to local or global optima. Instead, we offer a full characterization of the local geometry of the true solutions for both asymmetric/symmetric matrix sensing and completion in terms of sample size, rank-balancedness, and coherence. Our key finding is that, under moderate conditions on the model and sample size, the true solutions may emerge---not as local optima---but as strict saddle points. 

When the loss is smooth, recent results have shown that different first-order algorithms, including randomly initialized gradient descent~\cite{lee2016gradient}, stochastic gradient descent~\cite{fang2019sharp}, and perturbed gradient descent~\cite{jin2017escape}, exhibit a behavior known as saddle-escaping. Specifically, they converge to second-order stationary points, either with a high probability or almost surely. 

However, in the case of nonsmooth $\ell_1$-loss, first-order algorithms are not equipped with a guarantee of escaping from strict saddle points~\cite{davis2022proximal}. In fact, our observations have revealed quite the opposite: the sub-gradient method consistently converges to the true solution, as substantiated either through formal proof~\cite[Theorem 1]{ma2023global} or empirical evidence, despite these points emerging as strict saddle points. 

To support this observation empirically, we present evidence from both symmetric and asymmetric matrix sensing scenarios. First, we consider an instance of the symmetric matrix sensing where the ground truth $\trueX$ has dimension $d=20$ and rank $r=3$. Our considered model is fully over-parameterized, with $k=d=20$ and a sample size of $m=90$, which equals $(3/2)\cdot dr$. Notably, approximately $10\%$ of the measurements suffer from gross corruption due to outlier noise. Note that, in this regime, the global minima achieve zero loss, and hence cannot correspond to the true solution. Given the SVD of the ground-truth $\trueX = V^\star \Sigma^\star {V^\star}^\top$, we consider a true solution $\trueW = V^\star {\Sigma^\star}^{1/2}$. For each experimental trial, we initialize the sub-gradient method at $\trueW+E$, where the elements of $E$ are drawn from a Gaussian distribution with a mean of zero and a variance of $5\times 10^{-5}$. For the asymmetric matrix sensing, we maintain a similar setup, albeit with a variation in the sample size, which is now set to $m=170 = (17/6)\max\{d_1,d_2\}r$. 

Figure~\ref{fig::non-active} illustrates the distances between the iterations of the algorithm and the ground truth in both scenarios. It becomes evident that in a substantial number of experiments, the sub-gradient method with an exponentially decaying stepsize converges linearly to a true solution. We put forth two possible explanations for this behavior of the sub-gradient method:

{\paragraph{True solutions emerge as non-active strict saddle points}
A recent study by \cite{davis2022proximal} introduced the concept of active strict saddle points to identify critical points that can be escaped using first-order algorithms. In essence, a critical point is considered an active strict saddle if it lies on a manifold 
$\cM$ such that: (i) the loss exhibits sharp variation outside $\cM$; (ii) the loss, when restricted to $\cM$, is smooth; and (iii) the Riemannian Hessian of the loss on $\cM$ has at least one negative eigenvalue (see \cite[Figures 1 and 2]{davis2022proximal}). A wide range of first-order methods, including randomly initialized proximal methods \cite[Theorem 5.6]{davis2022proximal} and perturbed sub-gradient methods \cite[Theorem 3.1]{davis2022escaping}, escape active strict saddle points almost surely. However, these methods are not guaranteed to escape non-active strict saddle points.

Given this, one possible explanation for the convergence of the sub-gradient method is that the true solutions may manifest as non-active strict saddle points.

\paragraph{Sub-gradient method with random initialization is unable to escape strict saddle points} \cite{bianchi2023stochastic} conjectured that, unlike proximal methods, mere randomness in the initial point may be insufficient for the sub-gradient method to escape saddle points in robust low-rank matrix recovery, even when these points manifest as active strict saddle points (see also~\cite{davis2021subgradient} for a more detailed discussion). Consequently, another possible explanation for the convergence of the sub-gradient method could be its inherent lack of a saddle-escaping property.

Indeed, we view proving or disproving the above conjectures as a compelling challenge for future research.}

{Finally, we would like to emphasize that a complete theoretical explanation of the observed empirical behavior of the sub-gradient method around strict saddle points could provide valuable insights into the benefits of early stopping---i.e., halting the algorithm before full convergence to a critical point---which is widely used in training modern over-parameterized machine learning models. The emergence of the true solution as an unstable (e.g., non-active) strict saddle point could be one of the explanations for why early stopping is necessary in practice. Indeed, if the algorithm is inevitably bound to escape the true solution due to its instability stemming from its strict saddle nature, then early stopping would be essential in preventing such an escape.}

\begin{figure*}
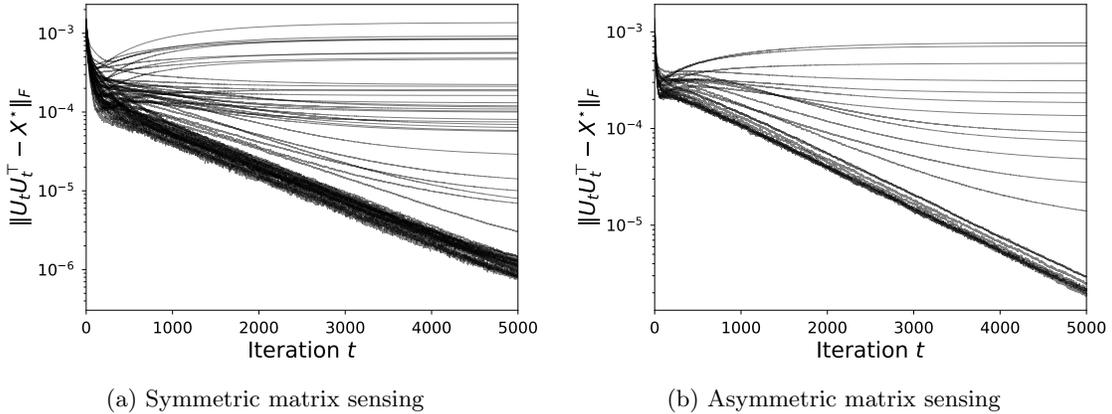

    \begin{center}
        \subfloat[Symmetric matrix sensing]{
            {\includegraphics[width=0.45\textwidth]{figure/non_active_sym.pdf}}\label{fig::non-active_sym}}
        \subfloat[Asymmetric matrix sensing]{
            {\includegraphics[width=0.45\textwidth]{figure/non_active_asym.pdf}}\label{fig::non-active_asym}}
    \end{center}
    \caption{\footnotesize We apply the sub-gradient method with an exponentially decaying stepsize to noisy instances of symmetric and asymmetric matrix sensing with $\ell_1$-loss. When the algorithm is initialized randomly but close enough to the true solution, a significant portion of the trajectories converge to the true solution.}\label{fig::non-active}
\end{figure*}

\section*{Acknowledgements}
 This research is supported, in part, by NSF Award DMS-2152776, ONR Award N00014-22-1-2127.
	
	\bibliographystyle{alpha}
	\bibliography{ref.bib}

\end{document}